\documentclass{article}

\usepackage{microtype}
\usepackage{graphicx}
\usepackage{booktabs} %

\usepackage{hyperref}

\usepackage[accepted]{icml2020}

\usepackage[textsize=small,disable]{todonotes}
\usepackage{enumitem}

\usepackage{xcite}

\usepackage{xr}
\makeatletter
\newcommand*{\addFileDependency}[1]{%
  \typeout{(#1)}%
  \@addtofilelist{#1}%
  \IfFileExists{#1}{}{\typeout{No file #1.}}%
}
\makeatother

\newcommand{\mytitle}{Relaxing Bijectivity Constraints with Continuously Indexed Normalising Flows}
\newcommand{\myshorttitle}{Continuously Indexed Flows}

\icmltitlerunning{\myshorttitle}

\usepackage{titlesec}

\usepackage{siunitx}
\usepackage{amsmath}
\usepackage{amssymb}
\usepackage{amsthm}
\usepackage{mathtools}
\usepackage{mathrsfs}
\usepackage{bbm}
\usepackage{booktabs}
\usepackage{multirow}
\usepackage{tabularx}
\usepackage{upgreek}
\usepackage{subcaption}
\usepackage{thm-restate}

\usepackage{tikz}
\usetikzlibrary{bayesnet}
\usetikzlibrary{arrows}
\usepackage{color}
\usepackage{graphicx}
\usepackage{caption}
\usepackage{subcaption}
\usetikzlibrary{backgrounds}

\newtheorem{theorem}{Theorem}[section]

\newtheorem{lemma}[theorem]{Lemma}
\newtheorem{proposition}[theorem]{Proposition}
\newtheorem{corollary}[theorem]{Corollary}

\newcommand{\R}{\mathbb{R}}

\newcommand{\abs}[1]{|#1|}
\newcommand{\Abs}[1]{\left|#1\right|}
\newcommand{\norm}[1]{\lVert#1\rVert}

\DeclareMathOperator{\diag}{diag}

\newcommand{\pushforward}[2]{#1\##2}
\DeclareMathOperator{\LogSumExp}{LogSumExp}

\DeclareMathOperator{\Lip}{Lip}
\DeclareMathOperator{\BiLip}{BiLip}
\newcommand{\closure}[1]{\overline{#1}}

\newcommand{\E}{\mathbb{E}}
\newcommand{\Var}{\mathrm{Var}}

\renewcommand{\P}{\mathbb{P}}

\newcommand{\Dto}{\overset{\mathcal{D}}{\to}}

\newcommand{\ind}{\mathbb{I}}

\newcommand{\Normal}{\mathrm{Normal}}

\DeclarePairedDelimiterX{\infdivx}[2]{(}{)}{#1\;\delimsize\|\;#2}
\newcommand{\KL}{D_{\mathrm{KL}}\infdivx*}

\newcommand{\opnorm}[1]{\norm{#1}_{\mathrm{op}}}

\newcommand{\noisespace}{\mathcal{Z}}
\newcommand{\dataspace}{\mathcal{X}}
\newcommand{\spacedim}{d}

\newcommand{\noise}{z}
\newcommand{\noisevar}{Z}
\newcommand{\data}{x}
\newcommand{\datavar}{X}
\newcommand{\fwdmap}{f}
\newcommand{\Fwdmap}{F}

\newcommand{\target}{p^\star_\datavar}
\newcommand{\model}{p_\datavar}
\newcommand{\prior}{p_\noisevar}
\newcommand{\priormeas}{P_\noisevar}
\newcommand{\targetmeas}{P_\datavar^\star}
\newcommand{\modelmeas}{P_\datavar}

\newcommand{\idx}{u}
\newcommand{\idxspace}{\mathcal{U}}
\newcommand{\idxvar}{U}
\newcommand{\idxdist}{p_{\idxvar|\noisevar}}
\newcommand{\idxcond}{p_{\idxvar|\noisevar}}
\newcommand{\idxcondmeas}{P_{\idxvar|\noisevar}}
\newcommand{\idxmarg}{p_\idxvar}

\newcommand{\fulljoint}{p_{\datavar, \idxvar, \noisevar}}
\newcommand{\dataidxjoint}{p_{\datavar, \idxvar}}
\newcommand{\idxpost}{p_{\idxvar|\datavar}}
\newcommand{\approxidxpost}{q_{\idxvar|\datavar}}
\newcommand{\elbo}{\mathcal{L}}
\newcommand{\scale}{s}
\newcommand{\trans}{t}
\newcommand{\mean}{\mu}

\newcommand{\numlayers}{L}
\newcommand{\layeridx}{\ell}
\newcommand{\lipconst}{M}
\newcommand{\reparamfunc}{H}

\newcommand{\jac}{\mathrm D}

\DeclareMathOperator{\supp}{supp}
\newcommand{\dirac}{\delta}
\newcommand{\idmat}{I}
\DeclareMathOperator{\Id}{Id}
\newcommand*\diff{\mathop{}\!\mathrm{d}}

\newcommand{\circled}[1]{\raisebox{.5pt}{\textcircled{\raisebox{-.9pt} {#1}}}}

\DeclareMathOperator{\interior}{int}
\newcommand{\boundary}{\partial}

\newcommand{\NN}{\texttt{NN}}

\newcommand{\modelname}{continuously indexed flow}
\newcommand{\modelacr}{CIF}

\begin{document}

\twocolumn[
    \icmltitle{\mytitle}

\begin{icmlauthorlist}
\icmlauthor{Rob Cornish}{ox}
\icmlauthor{Anthony Caterini}{ox}
\icmlauthor{George Deligiannidis}{ox,ati}
\icmlauthor{Arnaud Doucet}{ox}
\end{icmlauthorlist}

\icmlaffiliation{ox}{University of Oxford, Oxford, United Kingdom}
\icmlaffiliation{ati}{The Alan Turing Institute, London, United Kingdom}
\icmlcorrespondingauthor{Rob Cornish}{rcornish@robots.ox.ac.uk}

\icmlkeywords{Machine Learning, ICML, Normalizing Flows, Normalising Flows, Density Estimation, Topology, Invertible Neural Networks, Lipschitz continuity, Lipschitz networks}

    \vskip 0.3in
]

\printAffiliationsAndNotice{}

\begin{abstract}
We show that normalising flows become pathological when used to model targets whose supports have complicated topologies.
In this scenario, we prove that a flow must become arbitrarily numerically noninvertible in order to approximate the target closely.
This result has implications for all flow-based models, and especially \emph{residual flows} (ResFlows), which explicitly control the Lipschitz constant of the bijection used.
To address this, we propose \emph{\modelname s} (\modelacr s), which replace the single bijection used by normalising flows with a continuously indexed family of bijections, and which can intuitively ``clean up'' mass that would otherwise be misplaced by a single bijection.
We show theoretically that \modelacr s are not subject to the same topological limitations as normalising flows, and obtain better empirical performance on a variety of models and benchmarks.

\end{abstract}
\section{Introduction}

\emph{Normalising flows} \citep{rezende2015variational} have become popular methods for density estimation \citep{dinh2016density,papamakarios2017masked,kingma2018glow,chen2019residual}.
These methods model an unknown target distribution $\targetmeas$ on a data space $\dataspace \subseteq \R^{\spacedim}$ as the marginal of $\datavar$ obtained by the generative process
\begin{equation} \label{eq:nf_generative}
    \noisevar \sim \priormeas,  \quad 
    \datavar \coloneqq \fwdmap(\noisevar),
\end{equation}
where $\priormeas$ is a \emph{prior} distribution on a space $\noisespace \subseteq \R^d$, and $\fwdmap : \dataspace \to \noisespace$ is a bijection. The use of a bijection means the density of $\datavar$ can be computed analytically by the change-of-variables formula, and the parameters of $\fwdmap$ can be learned by maximum likelihood using i.i.d.\ samples from $\targetmeas$.

\begin{figure}
    \centering

    \begin{subfigure}{.3\linewidth}
        \centering
        \includegraphics[width=\textwidth]{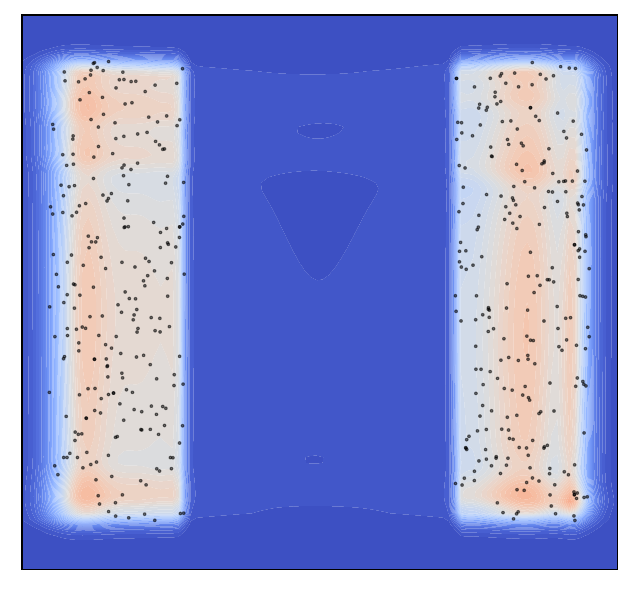}
    \end{subfigure}%
    \begin{subfigure}{.3\linewidth}
      \centering
        \includegraphics[width=\textwidth]{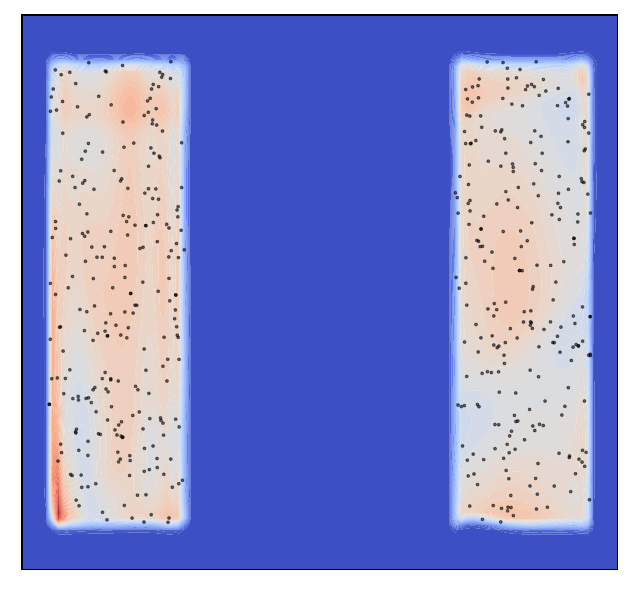}
    \end{subfigure}%
    \begin{subfigure}{.3\linewidth}
      \centering
        \includegraphics[width=\textwidth]{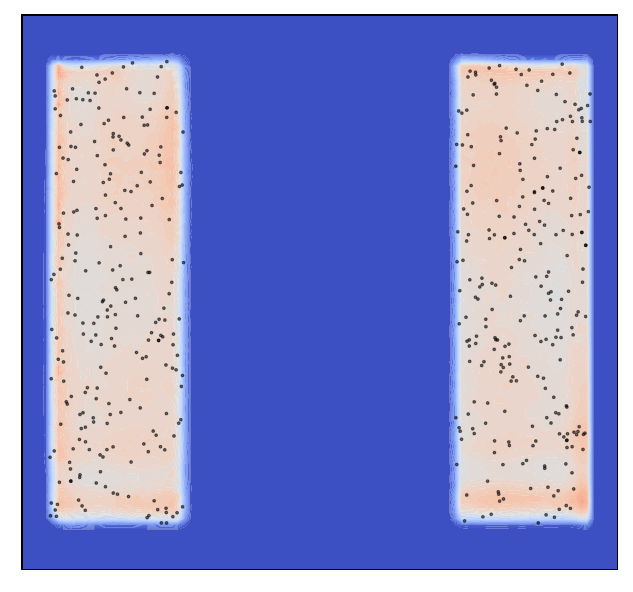}
    \end{subfigure}
    
    \begin{subfigure}{.3\linewidth}
        \centering
        \includegraphics[width=\textwidth]{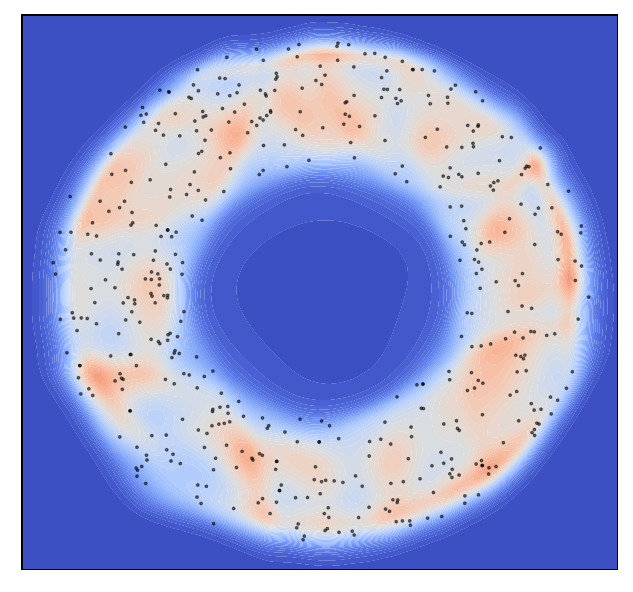}
    \end{subfigure}%
    \begin{subfigure}{.3\linewidth}
      \centering
        \includegraphics[width=\textwidth]{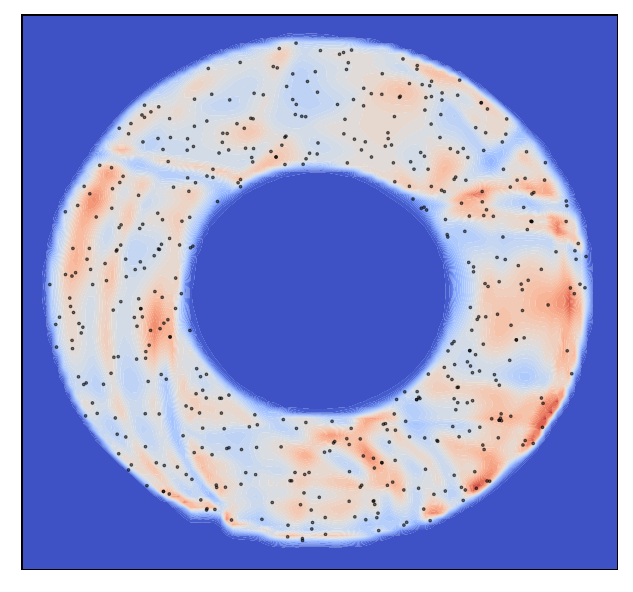}
    \end{subfigure}%
    \begin{subfigure}{.3\linewidth}
      \centering
        \includegraphics[width=\textwidth]{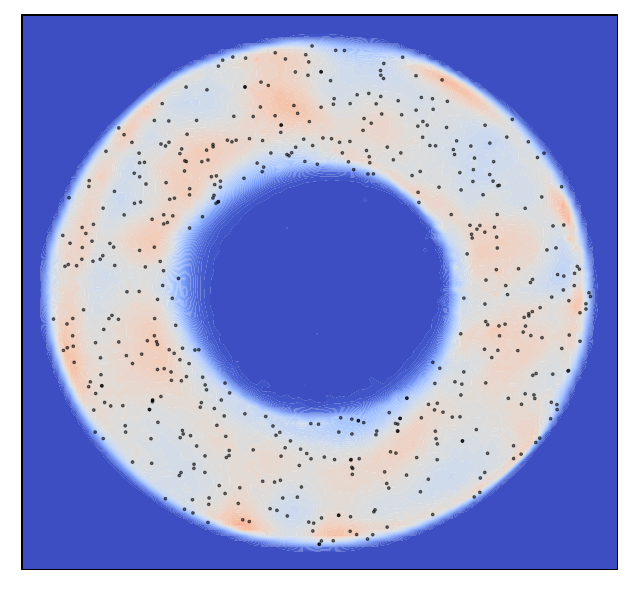}
    \end{subfigure}
    
    \caption{Densities learned by a 10-layer ResFlow (left), 100-layer ResFlow (middle), and 10-layer \modelacr-ResFlow (right) for two datasets (samples shown in black) that are not homeomorphic to the Gaussian prior. The 10-layer ResFlow visibly leaks mass outside of the support of the target due to its small bi-Lipschitz constant. The larger ResFlow improves on this, but still achieves smaller average log probability than the \modelacr-ResFlow, as is apparent from the greater homogeneity of the right-hand densities.}
    \label{fig:iResNet-2uniforms-annulus}
\end{figure}

To be effective, a normalising flow model must specify an expressive family of bijections with tractable Jacobians.
Affine coupling layers \citep{dinh2014nice, dinh2016density}, autoregressive maps \citep{germain2015made,papamakarios2017masked}, invertible linear transformations \citep{kingma2018glow}, ODE-based maps \citep{grathwohl2018ffjord}, and invertible ResNet blocks \citep{behrmann2019invertible,chen2019residual} are all examples of such bijections that can be composed to produce expressive flows.
These models have demonstrated significant promise in their ability to model complex datasets and to synthesise realistic data.

In all these cases, $\fwdmap$ and $\fwdmap^{-1}$ are both continuous.
It follows that $\fwdmap$ is a \emph{homeomorphism}, and therefore preserves the topology of its domain \citep[Definition 3.3.10]{runde2007taste}.
As \citet{dupont2019augmented} and \citet{dinh2019rad} mention, this seems intuitively problematic when $\priormeas$ and $\targetmeas$ are supported on domains with distinct topologies, which occurs for example when the supports differ in their number of connected components or ``holes'', or when they are ``knotted'' differently.
This seems inevitable in practice, as $\priormeas$ is usually quite simple (e.g.\ a Gaussian) while $\targetmeas$ is very complicated (e.g.\ a distribution over images). 

As our first contribution, we make precise the consequences of using a topologically misspecified prior.
We confirm that in this case it is indeed impossible to recover the target perfectly if $\fwdmap$ is a homeomorphism.
Moreover, in \autoref{thm:exploding-bilipschitz-constant} we prove that, in order to \emph{approximate} such a target arbitrarily well, we must have $\BiLip \fwdmap \to \infty$, where $\BiLip \fwdmap$ denotes the \emph{bi-Lipschitz constant} of $\fwdmap$ defined as the infimum over $\lipconst \in [1, \infty]$ such that
\begin{equation} \label{eq:bilipschitz-defn}
     \lipconst^{-1} \norm{\noise - \noise'} \leq \norm{\fwdmap(\noise) - \fwdmap(\noise')} \leq \lipconst \norm{\noise - \noise'}
\end{equation}
for all $\noise, \noise' \in \noisespace$.
\autoref{thm:exploding-bilipschitz-constant} applies essentially regardless of the training objective, and has implications for the case that $\priormeas$ and $\targetmeas$ both have full support but are heavily concentrated on regions that are not homeomorphic.
Since $\BiLip \fwdmap$ is a natural measure of the ``invertibility'' of $\fwdmap$ \citep{behrmann2020on}, this result shows that the goal of designing neural networks with well-conditioned inverses is fundamentally at odds with the goal of designing neural networks that can approximate complicated densities.

\autoref{thm:exploding-bilipschitz-constant} also has immediate implications for \emph{residual flows} (ResFlows) \citep{behrmann2019invertible,chen2019residual}, which have recently achieved state-of-the-art performance on several large-scale density estimation tasks.
Unlike models based on triangular maps \citep{jaini2019sum}, ResFlows have the attractive feature that the structure of their Jacobians is unconstrained, which may explain their greater expressiveness.
However, as part of the construction, the bi-Lipschitz constant of $\fwdmap$ is bounded, and so these models must be composed many times in order to achieve overall the large bi-Lipschitz constant required for a complex $\targetmeas$.\footnote{\citet{chen2019residual} report using 100-200 layers to learn even simple 2D densities.}

To address this problem we introduce \emph{\modelname s} (\modelacr s), which generalise \eqref{eq:nf_generative} by replacing the single bijection $\fwdmap$ with an indexed family of bijections $\{\Fwdmap(\cdot; \idx)\}_{\idx \in \idxspace}$, where the index set $\idxspace$ is continuous.
Intuitively, \modelacr s allow mass that would be erroneously placed by a single bijection to be rerouted into a more optimal location.
We show that \modelacr s can learn the support of a given $\targetmeas$ exactly regardless of the topology of the prior, and without the bi-Lipschitz constant of any $\Fwdmap(\cdot;\idx)$ necessarily becoming infinite.
\modelacr s do not specify the form of $\Fwdmap$, and can be used in conjunction with any standard normalising flow architecture directly.

Our use of a continuous index overcomes several limitations associated with alternative approaches based on a discrete index \citep{dinh2019rad,duan2019transport}, which suffer either from a discontinuous loss landscape or an intractable computational complexity.
However, as a consequence, we sacrifice the ability to compute the likelihood of our model analytically.
To address this, we propose a variational approximation that exploits the bijective structure of the model and is suitable for training large-scale models in practice.
We empirically evaluate \modelacr s applied to ResFlows, neural spline flows (NSFs) \citep{durkan2019neural}, masked autoregressive flows (MAFs) \citep{papamakarios2017masked}, and RealNVPs \citep{dinh2016density}, obtaining improved performance in all cases. 
We observe a particular benefit for ResFlows: with a 10-layer \modelacr-ResFlow we surpass the performance of a 100-layer baseline ResFlow and achieve state-of-the-art results on several benchmark datasets.

\section{Bi-Lipschitz Constraints on Pushforwards} \label{sec:limitations}

Normalising flows fall into a larger class of density estimators based on \emph{pushforwards}. Given a prior measure $\priormeas$ on $\noisespace$ and a mapping $\fwdmap : \noisespace \to \dataspace$, these models are defined as
\[
    \modelmeas \coloneqq \pushforward{\fwdmap}{\priormeas},
\]
where the right-hand side denotes a distribution with $\pushforward{\fwdmap}{\priormeas}(B) \coloneqq \priormeas(\fwdmap^{-1}(B))$ for Borel $B \subseteq \dataspace$. 
Normalising flows take $\fwdmap$ to be bijective, which under sufficient regularity yields a closed-form expression for the density\footnote{Throughout, by ``density'' we mean Lebesgue density. We will write densities using lowercase, e.g.\ $\model$ for the measure $\modelmeas$.}  of $\modelmeas$ \citep[Theorem 17.2]{billingsley2008probability}. 

Intuitively, the pushforward map $\fwdmap$ \emph{transports} the mass allocated by $\priormeas$ into $\dataspace$-space, thereby defining $\modelmeas$ based on where each unit of mass ends up.
This imposes a global constraint on $\fwdmap$ if $\modelmeas$ is to match perfectly a given target $\targetmeas$.
In particular, denote by $\supp \priormeas$ the \emph{support} of $\priormeas$.
While the precise definition of the support involves topological formalities (see \autoref{sec:topological-preliminaries} in the Supplement), intuitively this set defines the region of $\noisespace$ to which $\priormeas$ assigns mass.
It is then straightforward to show that $\modelmeas = \targetmeas$ only if
\begin{equation} \label{eq:pushforward-support-constraint}
    \supp \targetmeas = \closure{\fwdmap(\supp \priormeas)},
\end{equation}
where $\closure{A}$ denotes the closure of $A$ in $\dataspace$.\footnote{See \autoref{lem:pushforward-support} in the Supplement for a proof.}

The constraint \eqref{eq:pushforward-support-constraint} is especially onerous for normalising flows because of their bijectivity.
In practice, $\fwdmap$ and $\fwdmap^{-1}$ are invariably both continuous, and so $\fwdmap$ is a \emph{homeomorphism}.
Consequently, for these models \eqref{eq:pushforward-support-constraint} entails\footnote{Note that $\closure{\fwdmap(\supp\priormeas)} = \fwdmap(\supp\priormeas)$ here since $\supp \priormeas$ is closed by \autoref{lem:support-closed} in the Supplement.}
\begin{equation} \label{eq:homeomorphic-necessary-condition}
    \text{$\supp \modelmeas = \supp \targetmeas$ only if $\supp \priormeas \cong \supp \targetmeas$,}
\end{equation}
where $\mathcal{A} \cong \mathcal{B}$ means that $\mathcal{A}$ and $\mathcal{B}$ are \emph{homeomorphic}, i.e.\ isomorphic as topological spaces \citep[Definition 3.3.10]{runde2007taste}.
This means that $\supp \priormeas$ and $\supp \targetmeas$ must exactly share \emph{all} topological properties, including number of connected components, number of ``holes'', the way they are ``knotted'', etc., in order to learn the target perfectly. 
Condition \eqref{eq:homeomorphic-necessary-condition} therefore suggests that normalising flows are not optimally suited to the task of learning complex real-world densities, where such topological mismatch seems inevitable. 

However, \eqref{eq:homeomorphic-necessary-condition} only rules out the limiting case $\modelmeas = \targetmeas$.
In practice it is likely enough to have $\modelmeas \approx \targetmeas$, and it is therefore relevant to consider the implications of a topologically misspecified prior in this case also.
Intuitively, this seems to require $\fwdmap$ become \emph{almost} nonbijective as $\modelmeas$ approaches $\targetmeas$, but it is not immediately clear what this means, or whether this must occur for all models.
Likewise, in practice it might be reasonable to assume the density of $\targetmeas$ is everywhere strictly positive.
In this case, even if $\targetmeas$ is \emph{concentrated} on some very complicated set, the constraint \eqref{eq:homeomorphic-necessary-condition} would trivially be met if $\priormeas$ is Gaussian, for example.
Nevertheless, it seems that infinitesimal regions of mass should not significantly change the behaviour required of $\fwdmap$, and we would therefore like to extend \eqref{eq:homeomorphic-necessary-condition} to apply here also.

The bi-Lipschitz constant \eqref{eq:bilipschitz-defn} naturally quantifies the ``invertibility'' of $\fwdmap$.
\citet{behrmann2020on} recently showed a relationship between the bi-Lipschitz constant and the \emph{numerical} invertibility of $\fwdmap$.
If $\fwdmap$ is injective and differentiable,
\[
    \BiLip \fwdmap = \max\left(\sup_{\noise \in \noisespace} \opnorm{\jac \fwdmap(\noise)}, \sup_{\data \in \fwdmap(\noisespace)} \opnorm{\jac \fwdmap^{-1}(\data)} \right),
\]
where $\jac g(y)$ is the Jacobian of $g$ at $y$ and $\opnorm{\cdot}$ is the operator norm.
A large bi-Lipschitz constant thus means $\fwdmap$ or $\fwdmap^{-1}$ ``jumps'' somewhere in its domain.
More generally, if $\fwdmap$ is not injective, then $\BiLip \fwdmap = \infty$, while if $\BiLip \fwdmap < \infty$, then $\fwdmap$ is a homeomorphism from $\noisespace$ to $\fwdmap(\noisespace)$.\footnote{See \autoref{sec:lip-and-bilip} in the Supplement for proofs.}

The following theorem shows that if the supports of $\priormeas$ and $\targetmeas$ are not homeomorphic, then the bi-Lipschitz constant of $\fwdmap$ must grow arbitrarily large in order to approximate $\targetmeas$.
Here $\Dto$ denotes weak convergence.
\begin{restatable}{theorem}{lipthm}\label{thm:exploding-bilipschitz-constant}
    Suppose $\priormeas$ and $\targetmeas$ are probability measures on $\R^{d_\noisespace}$ and $\R^{d_\dataspace}$ respectively, and that $\supp \priormeas \not \cong \supp \targetmeas$. Then for any sequence of measurable $\fwdmap_n : \R^{d_\noisespace}\to \R^{d_\dataspace}$, we can have $\fwdmap_n \# \priormeas \Dto \targetmeas$ only if
    \[
        \lim_{n \to \infty} \BiLip \fwdmap_n = \infty.
    \]
\end{restatable}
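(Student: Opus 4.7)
My plan is to argue by contradiction: suppose $\BiLip \fwdmap_n$ does \emph{not} tend to infinity, extract a subsequence along which it is uniformly bounded by some $\lipconst < \infty$, pass to a further subsequence converging to a bi-Lipschitz limit $\fwdmap_\infty$, show $\fwdmap_\infty\#\priormeas = \targetmeas$, and conclude that $\fwdmap_\infty$ restricted to $\supp\priormeas$ gives a homeomorphism onto $\supp\targetmeas$, contradicting the hypothesis $\supp\priormeas \not\cong \supp\targetmeas$.

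First, I would reduce to a subsequence (relabelled $\fwdmap_n$) with $\BiLip \fwdmap_n \le \lipconst$ for all $n$. In particular, each $\fwdmap_n$ is $\lipconst$-Lipschitz, so the family is equicontinuous. To apply Arzel\`a--Ascoli I need pointwise boundedness: fix any $\noise_0 \in \supp\priormeas$ and any small ball $B$ around it, so that $\alpha \coloneqq \priormeas(B) > 0$. Since $\fwdmap_n\#\priormeas$ converges weakly it is tight, so there is a compact $K \subseteq \R^{d_\dataspace}$ with $\fwdmap_n\#\priormeas(K) > 1 - \alpha$ for all $n$. Then $\fwdmap_n(B) \cap K \neq \emptyset$, and by the Lipschitz bound $\norm{\fwdmap_n(\noise_0)}$ is bounded uniformly in $n$ by $\sup_{y \in K} \norm{y} + \lipconst \cdot \mathrm{diam}(B)$. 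Arzel\`a--Ascoli then yields a further subsequence (still denoted $\fwdmap_n$) converging uniformly on compact sets to a continuous $\fwdmap_\infty : \R^{d_\noisespace} \to \R^{d_\dataspace}$, which inherits both the upper and lower Lipschitz bounds, hence $\BiLip \fwdmap_\infty \le \lipconst$.

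Next, I would pass the weak convergence through: since $\priormeas$ is tight and $\fwdmap_n \to \fwdmap_\infty$ uniformly on compact sets, one verifies $\fwdmap_n\#\priormeas \Dto \fwdmap_\infty\#\priormeas$ by testing against bounded continuous functions and controlling tails using tightness. Combined with the assumption $\fwdmap_n\#\priormeas \Dto \targetmeas$ and uniqueness of weak limits, this gives $\fwdmap_\infty\#\priormeas = \targetmeas$, and therefore by the pushforward-support identity (Lemma referenced in the excerpt),
\[
    \supp\targetmeas \;=\; \closure{\fwdmap_\infty(\supp\priormeas)}.
\]

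Finally, I would extract the topological contradiction. Because $\fwdmap_\infty$ is $\lipconst$-bi-Lipschitz, its restriction to the closed set $\supp\priormeas$ is a bi-Lipschitz injection, so it maps Cauchy sequences to Cauchy sequences and back; since $\supp\priormeas$ is complete (closed in $\R^{d_\noisespace}$), the image $\fwdmap_\infty(\supp\priormeas)$ is complete, hence closed in $\R^{d_\dataspace}$, so the closure above is redundant and $\supp\targetmeas = \fwdmap_\infty(\supp\priormeas)$. Thus $\fwdmap_\infty$ restricted to $\supp\priormeas$ is a continuous bijection onto $\supp\targetmeas$ with continuous inverse, i.e.\ a homeomorphism, giving $\supp\priormeas \cong \supp\targetmeas$ and contradicting the hypothesis. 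The main obstacle I anticipate is the Arzel\`a--Ascoli step: pointwise boundedness is not automatic, and using tightness of the pushforwards together with the Lipschitz constraint to pin down $\fwdmap_n(\noise_0)$ is the linchpin that connects the distributional convergence to the analytic compactness argument.
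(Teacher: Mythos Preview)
Your proposal is correct and follows essentially the same route as the paper's proof: reduce to a uniformly bi-Lipschitz subsequence, use tightness of the pushforwards together with the Lipschitz bound to get pointwise boundedness, apply Arzel\`a--Ascoli to extract a bi-Lipschitz limit $\fwdmap_\infty$, identify $\fwdmap_\infty\#\priormeas = \targetmeas$ via uniqueness of weak limits, and then use completeness of $\supp\priormeas$ to remove the closure and obtain the homeomorphism. The paper's pointwise-boundedness argument uses a compact $K$ in the domain (via Prokhorov) rather than a ball around a support point, but the idea is identical, and you correctly flag this step as the crux.
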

Weak convergence is implied by the minimisation of all standard statistical divergences used to train generative models, including the KL and Jensen-Shannon divergences and the Wasserstein metric \citep[Theorem 2]{arjovsky2017wasserstein}.
Thus, \autoref{thm:exploding-bilipschitz-constant} states that these quantities can vanish only if the bi-Lipschitz constant of the learned mapping becomes arbitrarily large.
Likewise, note that we do not assume $d_\noisespace = d_\dataspace$ so that this result also applies to injective flow models \citep{kumar2019learning}, as well as other pushforward-based models such as GANs \citep{goodfellow2014generative}.\footnote{However, the implications for GANs seem less problematic since a GAN generator is not usually assumed to be bijective.}

\autoref{thm:exploding-bilipschitz-constant} also applies when $\supp \priormeas$ is \emph{almost} not homeomorphic to $\supp \targetmeas$, as is made precise by the following corollary. Here $\rho$ denotes any metric for the weak topology; see Chapter 6 of \citet{villani2008optimal} for standard examples.
\begin{restatable}{corollary}{lipthmcorr}
    Suppose $\priormeas$ and $P_\datavar^0$ are probability measures on $\R^{d_\noisespace}$ and $\R^{d_\dataspace}$ respectively with $\supp \priormeas \not \cong \supp P_X^0 $.
    Then there exists nonincreasing $\lipconst : [0, \infty) \to [1, \infty]$ with $\lipconst(\epsilon) \to \infty$ as $\epsilon \to 0$ such that, for any probability measure $\targetmeas$ on $\R^{d_\dataspace}$, we have $\BiLip \fwdmap \geq \lipconst(\epsilon)$ whenever  $\rho(\targetmeas, P_\datavar^0) \leq \epsilon$ and %
    $\rho(\pushforward{\fwdmap}{\priormeas}, \targetmeas) \leq \epsilon$.
\end{restatable}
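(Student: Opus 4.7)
The plan is to prove the corollary by contradiction from \autoref{thm:exploding-bilipschitz-constant}, via an explicit construction of the function $\lipconst$.

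First I would define, for each $\epsilon \geq 0$,
\[
    \lipconst(\epsilon) \coloneqq \inf\Set{\BiLip \fwdmap : \rho(\targetmeas, P_\datavar^0) \leq \epsilon, \ \rho(\pushforward{\fwdmap}{\priormeas}, \targetmeas) \leq \epsilon},
\]
where the infimum ranges over all measurable $\fwdmap : \R^{d_\noisespace} \to \R^{d_\dataspace}$ and probability measures $\targetmeas$ on $\R^{d_\dataspace}$ satisfying the two bounds (and $\lipconst(\epsilon) = \infty$ if no such pair exists). By construction, $\BiLip \fwdmap \geq \lipconst(\epsilon)$ whenever the hypotheses of the corollary hold, and $\lipconst(\epsilon) \in [1, \infty]$ since bi-Lipschitz constants lie in $[1, \infty]$. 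Moreover, $\lipconst$ is nonincreasing because enlarging $\epsilon$ only enlarges the set over which the infimum is taken.

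It remains to show $\lipconst(\epsilon) \to \infty$ as $\epsilon \to 0$. Suppose for contradiction that this fails. Since $\lipconst$ is nonincreasing, there exists $C < \infty$ with $\lipconst(\epsilon) \leq C$ for all $\epsilon > 0$. Then for each $n$ we can find $\epsilon_n \leq 1/n$, a measurable $\fwdmap_n$, and a probability measure $\targetmeas^{(n)}$ such that $\rho(\targetmeas^{(n)}, P_\datavar^0) \leq \epsilon_n$, $\rho(\pushforward{\fwdmap_n}{\priormeas}, \targetmeas^{(n)}) \leq \epsilon_n$, and $\BiLip \fwdmap_n \leq C + 1$. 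By the triangle inequality for $\rho$,
\[
    \rho(\pushforward{\fwdmap_n}{\priormeas}, P_\datavar^0) \leq 2\epsilon_n \to 0,
\]
so $\pushforward{\fwdmap_n}{\priormeas} \Dto P_\datavar^0$ as $n \to \infty$ (using that $\rho$ metrises the weak topology).

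Now, since $\supp \priormeas \not\cong \supp P_\datavar^0$, \autoref{thm:exploding-bilipschitz-constant} applied to the sequence $\fwdmap_n$ with target $P_\datavar^0$ forces $\BiLip \fwdmap_n \to \infty$. This contradicts $\BiLip \fwdmap_n \leq C + 1$, completing the proof. The main substantive step is the reduction to \autoref{thm:exploding-bilipschitz-constant}; the only subtlety is the triangle-inequality step, which goes through because any metric for the weak topology satisfies the triangle inequality by definition.
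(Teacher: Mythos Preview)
Your proposal is correct and follows essentially the same approach as the paper's proof. The only cosmetic difference is that the paper defines $\lipconst(\epsilon)$ as an infimum over $\fwdmap$ alone with the single constraint $\rho(\pushforward{\fwdmap}{\priormeas}, P_\datavar^0) \leq 2\epsilon$ (absorbing the triangle inequality into the definition), whereas you take the infimum over pairs $(\fwdmap,\targetmeas)$ satisfying both constraints and invoke the triangle inequality later in the contradiction step; either way the reduction to \autoref{thm:exploding-bilipschitz-constant} is identical.
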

In other words, if the target is close to a probability measure with non-homeomorphic support to that of the prior (i.e.\ $\rho(\targetmeas, P_\datavar^0)$ is small), and if the model is a good approximation of the target (i.e.\ $\rho(\pushforward{\fwdmap}{\priormeas}, \targetmeas)$ is small), then the Bi-Lipschitz constant of $\fwdmap$ must be large.

Proofs of these results are in \autoref{sec:exploding-bilipschitz-constant-proof} of the Supplement.

\subsection{Practical Implications}

The results of this section indicate a limitation of existing flow-based density models.
This is most direct for \emph{residual flows} (ResFlows) \cite{behrmann2019invertible,chen2019residual}, which take $\fwdmap = \fwdmap_\numlayers \circ \cdots \circ \fwdmap_1$ with each layer of the form
\begin{equation} \label{eq:resflow}
    \fwdmap_\layeridx^{-1}(\data) = \data + g_\layeridx(\data), \qquad \Lip g_\layeridx \leq \kappa < 1.
\end{equation}
Here $\Lip$ denotes the Lipschitz constant, which is bounded by a fixed constant $\kappa$ throughout training.
The Lipschitz constraint is enforced by spectral normalisation \citep{miyato2018spectral,gouk2018regularisation} and ensures each $\fwdmap_\layeridx$ is bijective.
However, it also follows \citep[Lemma 2]{behrmann2019invertible} that
\begin{equation} \label{eq:bilip-constraint}
    \BiLip \fwdmap \leq \max(1 + \kappa, (1 - \kappa)^{-1})^\numlayers < \infty,
\end{equation}
and \autoref{thm:exploding-bilipschitz-constant} thus restricts how well a ResFlow can approximate $\targetmeas$ with non-homeomorphic support to $\priormeas$.
\autoref{fig:iResNet-2uniforms-annulus} illustrates this in practice for simple 2-D examples.

It is possible to relax \eqref{eq:bilip-constraint} by taking $\kappa \to 1$.
However, this can have a detrimental effect on the variance of the Russian roulette estimator \citep{kahn1955use} used by \citet{chen2019residual} to compute the Jacobian, and in \autoref{sec:russian-roulette-variance} of the Supplement we give a simple example in which the variance is in fact infinite.
Alternatively, we can also loosen the bound \eqref{eq:bilip-constraint} by taking $\numlayers \to \infty$, and \autoref{fig:iResNet-2uniforms-annulus} shows that this does indeed lead to better performance.
However, greater depth means greater computational cost.
In the next section we describe an alternative approach that allows relaxing the bi-Lipschitz constraint of \autoref{thm:exploding-bilipschitz-constant} without modifying either $\kappa$ or $\numlayers$, and thus avoids these potential issues.

Unlike ResFlows, most normalising flows used in practice have an unconstrained bi-Lipschitz constant \citep{behrmann2020on}.
As as result, \autoref{thm:exploding-bilipschitz-constant} does not prevent these models from approximating non-homeomorphic targets arbitrarily well, and indeed several architectures have been proposed that can in principle do so \citep{huang2018neural,jaini2019sum}.
Nevertheless, the constraint \eqref{eq:homeomorphic-necessary-condition} shows that these models still face an underlying limitation in practice, and suggests we may improve performance more generally by relaxing the requirement of bijectivity.
We verify empirically in \autoref{sec:experiments} that, in addition to ResFlows, our proposed method also yields benefits for flows without an explicit bi-Lipschitz constraint.

Finally, \autoref{thm:exploding-bilipschitz-constant} has implications for the numerical stability of normalising flows.
It was recently pointed out by \citet{behrmann2020on} that, while having a well-defined mathematical inverse, many common flows can become \emph{numerically} noninvertible over the course of training, leading to low-quality reconstructions and calling into question the accuracy of density values output by the change-of-variables formula.
\citet{behrmann2020on} suggest explicitly constraining $\BiLip \fwdmap$ in order to avoid this problem.
\autoref{thm:exploding-bilipschitz-constant} shows that this involves a fundamental tradeoff against expressivity: if greater numerical stability is required of our normalising flow, then we must necessarily reduce the set of targets we can represent arbitrarily well.

\section{\myshorttitle} \label{sec:model}

In this section we propose \emph{\modelname s} (\modelacr s) for relaxing the bijectivity of standard normalising flows. We begin by defining the model we consider, and then detail our suggested training and inference procedures. In the next section we discuss advantages over related approaches.

\subsection{Model Specification}

\modelacr s are obtained by replacing the single bijection $\fwdmap$ used by normalising flows with an indexed family $\{\Fwdmap(\cdot; \idx)\}_{\idx \in \idxspace}$, where $\idxspace \subseteq \R^{d_\idxspace}$ is our index set and each $\Fwdmap(\cdot; \idx) : \noisespace \to \dataspace$ is a bijection.
We then define the model $\modelmeas$ as the marginal of $\datavar$ obtained from the following generative process:
\begin{equation} \label{eq:our-model}
    \noisevar \sim \priormeas, \quad \idxvar \sim \idxcondmeas(\cdot | \noisevar),   \quad \datavar \coloneqq \Fwdmap(\noisevar; \idxvar).
\end{equation}
Like \eqref{eq:nf_generative}, we assume a prior $\priormeas$ on $\noisespace$, but now also require conditional distributions $\idxcondmeas(\cdot|\noise)$ on $\idxspace$ for each $\noise \in \noisespace$. 

We can increase the complexity of \eqref{eq:our-model} by taking $\priormeas$ itself to have the same form.
This is directly analogous to the standard practice of composing simple bijections to obtain a richer class of normalising flows.
In our context, stacking $\numlayers$ layers of \eqref{eq:our-model} corresponds to the generative process
\begin{equation} \label{eq:our-model-stacked}
    \noisevar_0 \hspace{-.14em} \sim \hspace{-.14em} P_{\noisevar_0}, \hspace{.22em}
    \idxvar_\layeridx \hspace{-.14em} \sim \hspace{-.14em} P_{\idxvar_\layeridx | \noisevar_{\layeridx\!-\!1}}\!(\cdot | \noisevar_{\layeridx\!-\!1}), \hspace{.22em}
    \noisevar_\layeridx \hspace{-.14em} \coloneqq \hspace{-.14em} \Fwdmap_\layeridx(\noisevar_{\layeridx\!-\!1}; \idxvar_\layeridx),\hspace{-.1em}
\end{equation}
where $\layeridx \in \{1, \ldots, \numlayers\}$.
We then take $\modelmeas$ to be the marginal of $\datavar \coloneqq \noisevar_\numlayers$.
We have found this construction to improve significantly the expressiveness of our models and make extensive use of it in our experiments below.
Note that this corresponds to an instance of \eqref{eq:our-model} where, defining $\Fwdmap^\layeridx(\cdot ; \idx_1, \ldots, \idx_\ell) \coloneqq \Fwdmap_\layeridx(\cdot; \idx_\layeridx) \circ \cdots \circ \Fwdmap_1(\cdot; \idx_1)$, we take $\noisevar = \noisevar_0$, $\idxvar = (\idxvar_1, \ldots, \idxvar_\numlayers)$,  $\idxcondmeas(\diff \idx|\noise) = \prod_\ell P_{\idxvar_\layeridx | \noisevar_{\layeridx-1}}(\diff \idx_\layeridx | \Fwdmap^\layeridx(\noise; \idx_1, \ldots, \idx_\layeridx))$, and $\Fwdmap = \Fwdmap^\numlayers$.
We use this to streamline some of the discussion below.

Previous works, most notably RAD \citep{dinh2019rad}, have considered related models with a discrete index set $\idxspace$.
We instead consider a \emph{continuous} index.
In particular, our $\idxspace$ will be an open subset of $\R^{d_\idxspace}$, with each $\idxcondmeas(\cdot | \noise)$ having a density $\idxcond(\cdot|\noise)$.
A continuous index confers various advantages that we describe in \autoref{sec:comparisons}.
The choice also requires a distinct approach to training and inference that we describe in \autoref{sec:training-and-inference}.

We require choices of $\idxcond$ and $\Fwdmap$ for each layer of our model. Straightforward possibilities are
\begin{align}
    \Fwdmap(\noise; \idx) &= \fwdmap\left(e^{-\scale(\idx)} \odot \noise - \trans(\idx)\right) \label{eq:our-bijection-family} \\
    \idxcond(\cdot|\noise) &= \Normal(\mean^p(\noise), \Sigma^p(\noise)) \label{eq:our-idxcond}
\end{align}
for any bijection $\fwdmap$ (e.g.\ a ResFlow step) and appropriately defined neural networks $\scale$, $\trans$, $\mean^p$, and $\Sigma^p$.\footnote{Note this requires $\noisespace = \dataspace = \R^d$ and $\idxspace = \R^{d_\idxspace}$, i.e.\ these domains are not strict subsets. We assume this in all our experiments.}
Here the exponential of a vector is meant elementwise, and $\odot$ denotes elementwise multiplication.
Note that \eqref{eq:our-bijection-family} may be used with all existing normalising flow implementations out-of-the-box.
These choices yielded strong empirical results despite their simplicity, but more sophisticated alternatives are certainly possible and may bring improvements in some applications.

\subsection{Training and Inference} \label{sec:training-and-inference}

Heuristically,\footnote{We make this rigorous in \autoref{sec:density-of-a-cif} of the Supplement.} \eqref{eq:our-model} yields the joint ``density''
\[
    \fulljoint(\data, \idx, \noise) \coloneqq \prior(\noise) \, \idxdist(\idx | \noise) \, \dirac(\data - \Fwdmap(\noise; \idx)),
\]
where $\prior$ is the density of $\priormeas$ and $\dirac$ is the Dirac delta.
If $\Fwdmap$ is sufficiently regular, we can marginalise out the dependence on $\noise$ by making the change of variable $\data' \coloneqq  \Fwdmap(\noise; \idx)$, which means $\diff \noise = \abs{\det \jac \Fwdmap^{-1}(\data'; \idx)} \diff \data'$.\footnote{Here $\jac \Fwdmap(\noise; \idx)$ denotes the Jacobian with respect to $\noise$ only.}
This yields a proper density for $(\datavar, \idxvar)$ by integrating over $\data'$:
\vspace{-.5em}
\begin{multline} \label{eq:cif-joint-density}
    \dataidxjoint(\data, \idx) \coloneqq \prior(\Fwdmap^{-1}(\data; \idx)) \\ \times \idxdist(\idx|\Fwdmap^{-1}(\data; \idx)) \, \abs{\det \jac \Fwdmap^{-1}(\data; \idx)}.
\end{multline}%
For an $\numlayers$-layered model, an extension of this argument also gives the following joint density for each $(\noisevar_\layeridx, \idxvar_{1:\layeridx})$:
\begin{multline} \label{eq:generative_forward_recursion}
    p_{\noisevar_\layeridx, \idxvar_{1:\layeridx}}(\noise_\layeridx, \idx_{1:\layeridx}) \coloneqq
        p_{\noisevar_{\layeridx-1}, \idxvar_{1:\layeridx-1}}(\Fwdmap^{-1}_\layeridx(\noise_{\layeridx}; \idx_{\layeridx}), \idx_{1:\layeridx-1}) \\
        \times p_{\idxvar_\layeridx | \noisevar_{\layeridx-1}}(\idx_\layeridx | \Fwdmap^{-1}_\layeridx(\noise_\layeridx; \idx_\layeridx))
        \abs{\det \jac \Fwdmap^{-1}_\layeridx(\noise_\layeridx; \idx_\layeridx)}.
\end{multline}
Taking $\datavar \coloneqq \noisevar_\numlayers$ as before we obtain $p_{\datavar, \idxvar_{1:\numlayers}}$ and hence a density for $\modelmeas$ via
\begin{equation} \label{eq:our-density}
    \model(\data) \coloneqq \int p_{\datavar, \idxvar_{1:\numlayers}}(\data, \idx_{1:\numlayers}) \, \diff \idx_{1:\numlayers}.
\end{equation}
Since $\idxspace$ is continuous, this is not analytically tractable. To facilitate likelihood-based training and inference, we make use of a variational scheme that we describe now.

Assuming an $\numlayers$-layered model \eqref{eq:our-model-stacked}, we introduce an approximate posterior density $q_{\idxvar_{1:\numlayers}|\datavar} \approx p_{\idxvar_{1:\numlayers}|\datavar}$ and consider the evidence lower bound (ELBO) of $\log \model(\data)$:
\begin{equation} \label{eq:elbo}
    \elbo(\data) \coloneqq \E_{\idx_{1:\numlayers} \sim q_{\idxvar_{1:\numlayers}|\datavar}(\cdot|\data)} \!\left[\log \frac{p_{\datavar, \idxvar_{1:\numlayers}}(\data, \idx_{1:\numlayers})}{q_{\idxvar_{1:\numlayers}|\datavar}(\idx_{1:\numlayers}|\data)}\right].
\end{equation}
It is a standard result that $\elbo(\data) \leq \log \model(\data)$ with equality if and only if $q_{\idxvar_{1:\numlayers}|\datavar}$ is the exact posterior $p_{\idxvar_{1:\numlayers}|\datavar}$. This allows learning an approximation to $\targetmeas$ by maximising $\sum_{i=1}^n \elbo(\data_i)$ jointly in $p_{\datavar, \idxvar_{1:\numlayers}}$ and $q_{\idxvar_{1:\numlayers}|\datavar}$, where we assume a dataset of $n$ i.i.d.\ samples $\data_i \sim \targetmeas$.

We now consider how to parametrise an effective $q_{\idxvar_{1:\numlayers}|\datavar}$.
Standard approaches to designing inference networks for variational autoencoders (VAEs) \citep{kingma2013auto,rezende2014stochastic,rezende2015variational,kingma2016improved}, while mathematically valid, would not exploit the conditional independencies induced by the bijective structure of \eqref{eq:our-model-stacked}. 
We therefore propose a novel inference network that is specifically targeted towards our model, which we compare with existing VAE approaches in \autoref{sec:vae-comparison}.

In particular, our $q_{\idxvar_{1:\numlayers}|\datavar}$ has the following form:
\begin{equation} \label{eq:approx-posterior-form}
    q_{\idxvar_{1:\numlayers}|\datavar}(\idx_{1:\numlayers}|\data) \coloneqq \prod_{\layeridx=1}^\numlayers q_{\idxvar_\layeridx|\noisevar_\layeridx}(\idx_\layeridx | \noise_\layeridx),
\end{equation}
with $\noise_\numlayers \coloneqq \data$ and $\noise_{\layeridx} \coloneqq \Fwdmap^{-1}_{\layeridx+1}(\noise_{\layeridx+1}; \idx_{\layeridx+1})$ for $\layeridx \in \{1, \ldots, \numlayers-1\}$,
and $q_{\idxvar_\layeridx|\noisevar_\layeridx}$ can be any parameterised conditional density.
We show in \autoref{sec:approx-postrior-has-correct-form} of the Supplement that the posterior $p_{\idxvar_{1:\numlayers}|\datavar}$ factors in the same way as \eqref{eq:approx-posterior-form}, so that we do not lose any generality.
Observe also that this scheme shares parameters between $q_{\idxvar_{1:\numlayers}|\datavar}$ and $p_{\datavar,\idxvar_{1:\numlayers}}$ in a natural way, since the same $\Fwdmap_\ell$ are used in both.

We assume each $q_{\idxvar_\layeridx|\noisevar_\layeridx}$ can be suitably reparametrised \citep{kingma2013auto, rezende2014stochastic} so that, for some function $\reparamfunc_\layeridx$ and some density $\eta_\layeridx$ that does not depend on the parameters of $q_{\idxvar_{1:\numlayers}|\noisevar_\layeridx}$ and $p_{\datavar,\idxvar_{1:\numlayers}}$, we have $\reparamfunc_\layeridx(\epsilon_\layeridx, \noise_\layeridx) \sim q_{\idxvar_\layeridx|\noisevar_\layeridx}(\cdot|\noise_\layeridx)$ when $\epsilon_\layeridx \sim \eta_\layeridx$.
We can then obtain unbiased estimates of $\elbo(\data)$ using \autoref{alg:ELBO},
which corresponds to a single-sample approximation to the expectation in \eqref{eq:elbo}.
It is straightforward to see that \autoref{alg:ELBO} has $\Theta(\numlayers)$ complexity.
Differentiating through this procedure allows maximising $\sum_{i=1}^n \elbo(\data_i)$ via stochastic gradient descent.
At test time, we can also estimate $\log \model(\data)$ directly using importance sampling as described by \citet[(40)]{rezende2014stochastic}.
In particular, letting $\hat{\elbo}^{(1)}, \ldots, \hat{\elbo}^{(m)}$ denote the result of separate calls to $\text{ELBO}(\data)$, we have
\begin{equation} \label{eq:IS-estimator}
    \LogSumExp(\hat{\elbo}^{(1)}, \ldots, \hat{\elbo}^{(m)}) - \log m \to \log \model(\data)
\end{equation}
almost surely as $m \to \infty$.

\begin{algorithm}
\caption{Unbiased estimation of $\elbo(\data)$}
\label{alg:ELBO}
\begin{algorithmic}
\FUNCTION{ELBO($\data$)}
    \STATE $\noise_L \gets \data$
    \STATE $\Delta \gets 0$
    \FOR{$\ell=L,\ldots,1$}
        \STATE $\epsilon \sim \eta_\layeridx$
        \STATE $\idx \gets \reparamfunc_\layeridx(\epsilon, \noise_\layeridx)$
        \STATE $\noise_{\layeridx - 1} \gets \Fwdmap^{-1}_\layeridx(\noise_\layeridx; \idx)$
        \STATE $\Delta \gets \Delta + \log p_{\idxvar_\layeridx|\noisevar_{\layeridx-1}}(\idx|\noise_{\layeridx-1}) - \log q_{\idxvar_\layeridx|\noisevar_\layeridx}(\idx | \noise_\layeridx) $
        \STATE $\quad \quad + \log \abs{\det \jac \Fwdmap^{-1}_\layeridx(\noise_\layeridx; \idx)}$
    \ENDFOR
    \STATE {\bf return} $\Delta + \log p_{\noisevar_0}(\noise_0)$
\ENDFUNCTION
\end{algorithmic}
\end{algorithm}

In all our experiments we used
\begin{equation} \label{eq:our-approx-posterior}
    q_{\idxvar_\layeridx|\noisevar_\layeridx}(\cdot|\noise_\layeridx) = \Normal(\mu^q_\layeridx(\noise_\layeridx), \Sigma^q_\layeridx(\noise_\layeridx))
\end{equation}
for appropriate neural networks $\mu^q_\layeridx$ and $\Sigma^q_\layeridx$, which is immediately reparameterisable as described e.g.\ by \citet{kingma2013auto}. We found this gave good enough performance that we did not require alternatives such as IAF \citep{kingma2016improved}, but such options may also be useful.

Finally, \autoref{alg:ELBO} requires an expression for $\log \abs{\det \jac \Fwdmap^{-1}_\layeridx(\noise_\layeridx; \idx_\layeridx)}$.
For \eqref{eq:our-bijection-family} this is
\[
    \log \Abs{\det \jac \fwdmap_\layeridx^{-1}\left(e^{\scale_\ell(\idx_\ell)} \odot \left(\noise_\ell + \trans_\ell(\idx_\ell)\right)\right)} +\sum_{i=1}^{d} [\scale_\layeridx(\idx_\layeridx)]_i,
\]
where $[\data]_i$ denotes the $i^{\text{th}}$ dimension of $\data$.

\section{Comparison with Related Models} \label{sec:comparisons}

\subsection{Comparison with Normalising Flows}

We now compare \modelacr s with normalising flows, and in particular describe how \modelacr s relax the constraints of bijectivity identified in \autoref{sec:limitations}.

\subsubsection{Advantages} \label{sec:advantages-over-nfs}

Observe that \eqref{eq:our-model} generalises normalising flows: if $\Fwdmap(\cdot; \idx)$ does not depend on $\idx$, then we obtain \eqref{eq:nf_generative}.
Moreover, training with the ELBO in this case does not reduce performance compared with training a flow directly, as the following result shows.
Here the components of our model $\Fwdmap_\theta$, $\idxcond^\theta$, and $\approxidxpost^\theta$ are parameterised by $\theta \in \Theta$, and for a given choice of parameters $\theta$ we will denote by $\modelmeas^\theta$ and $\elbo^\theta$ the corresponding distribution and ELBO \eqref{eq:elbo} respectively.
\begin{restatable}{proposition}{cifnoworse}\label{prop:cif-no-worse-than-flow}
    Suppose there exists $\phi \in \Theta$ such that, for some bijection $\fwdmap : \noisespace \to \dataspace$, $\Fwdmap_\phi(\cdot; \idx) = \fwdmap(\cdot)$ for all $\idx \in \idxspace$.
    Likewise, suppose $\idxcond^\phi$ and $\approxidxpost^\phi$ are such that, for some density $r$ on $\idxspace$, $\idxcond^\phi(\cdot|\noise) = \approxidxpost^\phi(\cdot|\data) = r(\cdot)$ for all $\noise \in \noisespace$ and $\data \in \dataspace$.
    If $\E_{\data \sim \targetmeas}[\elbo^\theta(\data)] \geq \E_{\data \sim \targetmeas}[\elbo^\phi(\data)]$, then
    \[
        \KL{\targetmeas}{\modelmeas^\theta} \leq \KL{\targetmeas}{\pushforward{\fwdmap}{\priormeas}}.
    \]
\end{restatable}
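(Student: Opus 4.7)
The plan is to show that at the specific parameter setting $\phi$ the CIF collapses exactly to the normalising flow $\fwdmap_\# \priormeas$ and, moreover, the ELBO at $\phi$ is tight, so that $\elbo^\phi$ equals the exact log-density of $\fwdmap_\#\priormeas$. The proposition then follows from the ELBO inequality together with the chain of comparisons given in the hypothesis.

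First, I would show $\modelmeas^\phi = \pushforward{\fwdmap}{\priormeas}$. Under the generative process \eqref{eq:our-model} instantiated at $\phi$, we have $\datavar = \Fwdmap_\phi(\noisevar; \idxvar) = \fwdmap(\noisevar)$ almost surely, irrespective of $\idxvar$, so the marginal of $\datavar$ is exactly $\fwdmap_\#\priormeas$. Written with densities, the joint obtained from \eqref{eq:cif-joint-density} factorises as
\[
    \dataidxjoint^\phi(\data, \idx) = \prior(\fwdmap^{-1}(\data)) \, \Abs{\det \jac \fwdmap^{-1}(\data)} \, r(\idx),
\]
so the $\datavar$-marginal is the standard change-of-variables formula for $\fwdmap_\#\priormeas$.

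Second, I would verify the ELBO is tight at $\phi$, i.e.\ $\elbo^\phi(\data) = \log \model^\phi(\data)$ for $\targetmeas$-a.e.\ $\data$. The standard ELBO gap identity gives
\[
    \log \model^\phi(\data) - \elbo^\phi(\data) = \KL{\approxidxpost^\phi(\cdot|\data)}{\idxpost^\phi(\cdot|\data)}.
\]
From the factorisation above, the exact posterior at $\phi$ is $\idxpost^\phi(\idx|\data) = r(\idx)$, which by hypothesis coincides with $\approxidxpost^\phi(\idx|\data) = r(\idx)$; hence the KL vanishes. This is the crux of the argument, and is where the specific choice of $\phi$ (both $\idxcond^\phi$ and $\approxidxpost^\phi$ equal to the same $u$-only density $r$, and $\Fwdmap_\phi(\cdot;\idx)$ independent of $\idx$) is used in an essential way.

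Third, I would assemble the inequalities. By the ELBO lower bound applied at $\theta$, and by the assumption $\E_{\datavar \sim \targetmeas}[\elbo^\theta] \geq \E_{\datavar \sim \targetmeas}[\elbo^\phi]$, and by tightness at $\phi$,
\[
    \E_{\datavar \sim \targetmeas}[\log \model^\theta(\datavar)]
    \geq \E_{\datavar \sim \targetmeas}[\elbo^\theta(\datavar)]
    \geq \E_{\datavar \sim \targetmeas}[\elbo^\phi(\datavar)]
    = \E_{\datavar \sim \targetmeas}[\log \model^\phi(\datavar)].
\]
Subtracting both sides from $\E_{\datavar \sim \targetmeas}[\log p^\star_\datavar(\datavar)]$ and invoking $\modelmeas^\phi = \fwdmap_\#\priormeas$ from step one yields $\KL{\targetmeas}{\modelmeas^\theta} \leq \KL{\targetmeas}{\fwdmap_\#\priormeas}$, as required.

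The main obstacle is the verification in step two. Although the computation itself is short once the right factorisation is in hand, it relies on the $\phi$-joint having a density in the sense of \eqref{eq:cif-joint-density}, which in turn presupposes the standard regularity of $\fwdmap$ used throughout the paper; modulo this, the argument is a direct application of the ELBO identity. A remark on extending to the $L$-layer case would mirror this reasoning layer-by-layer using \eqref{eq:approx-posterior-form} and \eqref{eq:generative_forward_recursion}, with each layer's $\Fwdmap_\layeridx(\cdot;\idx_\layeridx)$ and $(\idxcond_\layeridx,\approxidxpost_\layeridx)$ set in the analogous trivial way.
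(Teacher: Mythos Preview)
Your proposal is correct and follows essentially the same route as the paper: compute the joint $\dataidxjoint^\phi$ from \eqref{eq:cif-joint-density}, observe that it factorises so $\model^\phi$ is the flow density and the posterior $\idxpost^\phi(\cdot|\data)=r(\cdot)=\approxidxpost^\phi(\cdot|\data)$, conclude the ELBO is tight at $\phi$, and then chain the inequalities and subtract $\E_{\targetmeas}[\log\target]$ to obtain the KL comparison. Your explicit invocation of the ELBO--KL gap identity is a slight rephrasing of the paper's ``$\elbo^\phi$ is tight'' step, but the content is identical.
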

Simply stated, in the limit of infinite data, optimising the ELBO will yield at least as performant a model (as measured by the KL) as any normalising flow our model family can express.
The proof is in \autoref{sec:conditions_to_outperform} of the Supplement.
In practice, our choices \eqref{eq:our-bijection-family}, \eqref{eq:our-idxcond}, and \eqref{eq:our-approx-posterior} can easily realise the conditions of \autoref{prop:cif-no-worse-than-flow} by zeroing out the output weights of the neural networks (other than $\fwdmap$) involved.
Thus, for a given $\fwdmap$, we have reason to expect a comparative or better performing model (as measured by average log-likelihood) when trained as a \modelacr\ rather than as a normalising flow.

We expect this will in fact lead to \emph{improved} performance because, intuitively, $\idxcondmeas$ can reroute $\noise$ that would otherwise map outside of $\supp \targetmeas$.
To illustrate, fix $\fwdmap$ in \eqref{eq:our-bijection-family} and choose some $\noise \in \noisespace$.
If $\fwdmap(\noise) \in \supp \targetmeas$, then setting $\Fwdmap(\noise;\idx) = \fwdmap(\noise)$ for all $\idx \in \idxspace$ as described above ensures $\Fwdmap(\noise; \idxvar) \in \supp \targetmeas$ when $\idxvar \sim \idxcondmeas(\cdot|\noise)$.
If conversely $\fwdmap(\noise) \not \in \supp \targetmeas$, then we \emph{still} have $\Fwdmap(\noise; \idxvar) \in \supp \targetmeas$ almost surely if $\idxcondmeas(\cdot|\noise)$ is supported on $\{\idx \in \idxspace : \Fwdmap(\noise; \idx) \in \supp \targetmeas\}$.
Of course, if $\fwdmap$ is too simple, then $\idxcondmeas$ must heuristically become very complex in order to obtain this behaviour.
This would seem to make inference harder, leading to a looser ELBO \eqref{eq:elbo} and thus overall worse performance after training.
We therefore expect \modelacr s to work well for $\fwdmap$ that, like the 10-layer ResFlow in \autoref{fig:iResNet-2uniforms-annulus}, can learn a close approximation to the support of the target but ``leak'' some mass outside of it due to \eqref{eq:homeomorphic-necessary-condition} or \autoref{thm:exploding-bilipschitz-constant}.
A \modelacr\ can then use $\idxcondmeas$ to ``clean up'' these small extraneous regions of mass.

We provide empirical support for this argument in \autoref{sec:experiments}.
We also summarise our discussion above with the following precise result.
Here $\boundary A$ denotes the boundary of a set $A$.
\begin{proposition} \label{prop:support-correction}
    If $\targetmeas(\boundary \supp \targetmeas) = 0$ and $(\noise, \idx) \mapsto \Fwdmap(\noise; \idx)$ is jointly continuous with
    \begin{equation} \label{eq:support-coverage-condition}
        \closure{\Fwdmap(\supp \priormeas \times \idxspace)} \supseteq \supp \targetmeas,
    \end{equation}
    then there exists $\idxcondmeas$ such that $\supp \modelmeas = \supp \targetmeas$ if and only if, for all $\noise \in \supp \priormeas$, there exists $\idx \in \idxspace$ with
    \begin{equation} \label{eq:support-correction-condition}
        \Fwdmap(\noise; \idx) \in \supp \targetmeas.
    \end{equation}
\end{proposition}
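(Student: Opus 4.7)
The plan is to rewrite $\modelmeas$ as the pushforward $\Fwdmap \# \pi$, where $\pi$ is the joint law of $(\noisevar, \idxvar)$ induced by \eqref{eq:our-model}. By the pushforward-support lemma cited earlier (Lemma~\ref{lem:pushforward-support} in the Supplement), $\supp \modelmeas = \closure{\Fwdmap(\supp \pi)}$, so both directions of the equivalence reduce to statements about which $(\noise, \idx)$ pairs $\pi$ can charge, evaluated through $\Fwdmap$.

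For the forward direction, I would argue by contradiction. Assume $\supp \modelmeas = \supp \targetmeas$ and suppose some $\noise_0 \in \supp \priormeas$ has $\Fwdmap(\noise_0; \idx) \notin \supp \targetmeas$ for every $\idx \in \idxspace$. Picking any $\idx_0 \in \supp \idxcondmeas(\cdot | \noise_0)$, joint continuity of $\Fwdmap$ produces an open product neighbourhood $U \times V \ni (\noise_0, \idx_0)$ whose image lies in $(\supp \targetmeas)^c$. Since $\noise_0 \in \supp \priormeas$ and $\idx_0 \in \supp \idxcondmeas(\cdot|\noise_0)$, both $\priormeas(U) > 0$ and $\idxcondmeas(V | \noise_0) > 0$; under the paper's parametric choice \eqref{eq:our-idxcond}, the map $\noise \mapsto \idxcondmeas(V | \noise)$ is continuous and hence positive on a neighbourhood of $\noise_0$, yielding $\pi(U \times V) > 0$. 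Consequently $\modelmeas((\supp \targetmeas)^c) > 0$, contradicting $\supp \modelmeas = \supp \targetmeas$.

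For the reverse direction, define $A_\noise := \{\idx \in \idxspace : \Fwdmap(\noise; \idx) \in \supp \targetmeas\}$, which is nonempty by hypothesis and closed as the preimage of $\supp \targetmeas$ under the continuous map $\Fwdmap(\noise; \cdot)$. I would first use $\targetmeas(\boundary \supp \targetmeas) = 0$ to deduce $\targetmeas((\supp \targetmeas)^\circ) = 1$, so $\closure{(\supp \targetmeas)^\circ}$ is a closed full-$\targetmeas$ set and hence contains $\supp \targetmeas$ by minimality of the support; combined with the trivial reverse inclusion this gives $\supp \targetmeas = \closure{(\supp \targetmeas)^\circ}$. The key geometric observation then follows by combining this density with the coverage hypothesis \eqref{eq:support-coverage-condition}: for any open $W$ meeting $\supp \targetmeas$, the open set $W \cap (\supp \targetmeas)^\circ$ is nonempty and intersected by $\Fwdmap(\supp \priormeas \times \idxspace)$, producing a pair $(\noise, \idx) \in \supp \priormeas \times \idxspace$ with $\Fwdmap(\noise; \idx) \in W \cap (\supp \targetmeas)^\circ$ and hence $\idx \in A_\noise$. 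The construction of $\idxcondmeas$ would then set $\idxcondmeas(\cdot|\noise) := \mu(\cdot \cap A_\noise)/\mu(A_\noise)$ for a full-support reference $\mu$ on $\idxspace$ (e.g.\ a standard Gaussian) whenever the denominator is positive, and fall back to a Dirac on a Kuratowski--Ryll-Nardzewski measurable selection from $A_\noise$ otherwise. This forces $\supp \pi \subseteq \{(\noise, \idx) : \idx \in A_\noise\}$, giving $\supp \modelmeas \subseteq \supp \targetmeas$, while the key observation applied to a countable basis of open sets meeting $\supp \targetmeas$ gives density of $\Fwdmap(\supp \pi)$ in $\supp \targetmeas$, hence the matching inclusion.

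The hard part will be making the reverse-direction construction fully rigorous: ensuring the kernel above is measurable in $\noise$, that its slice-wise support is genuinely inside $A_\noise$, and that the resulting joint support is rich enough to push forward densely onto $\supp \targetmeas$. Standard measurable-selection and disintegration tools, together with the second countability of $\dataspace$, should handle the bookkeeping, but it is where essentially all the work lies. A subordinate technicality is the regularity of $\idxcondmeas$ used in the forward direction; this is automatic for the paper's Gaussian neural-network kernels but would need to be stated explicitly to cover more exotic choices.
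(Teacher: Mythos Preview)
Your overall plan—reduce to the pushforward-support identity and then build $\idxcondmeas(\cdot|\noise)$ supported on the ``good'' indices—is the same as the paper's. The execution differs in two places, and one of them is a genuine gap.

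\textbf{Forward direction.} The paper does not argue by contradiction through positive mass. It first proves a support formula (its Lemma~\ref{lem:support-expression}) stating that, for \emph{any} kernel $\idxcondmeas$,
\[
\supp \modelmeas \;=\; \closure{\bigcup_{\noise \in \supp \priormeas} \Fwdmap\bigl(\{\noise\} \times \supp \idxcondmeas(\cdot|\noise)\bigr)}.
\]
Given $\supp \modelmeas = \supp \targetmeas$, this immediately yields $\Fwdmap(\noise;\idx) \in \supp \targetmeas$ for every $\noise \in \supp \priormeas$ and every $\idx \in \supp \idxcondmeas(\cdot|\noise)$, and the latter set is nonempty. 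Your route instead needs $\pi(U\times V)>0$, which forces you to assume $\noise \mapsto \idxcondmeas(V|\noise)$ is continuous. This is not a ``subordinate technicality'': the $\idxcondmeas$ in this direction is an \emph{arbitrary} kernel hypothesised to achieve equality of supports, not the Gaussian family \eqref{eq:our-idxcond}, so you cannot invoke the parametric form. As written, your forward direction only proves a weaker statement. The paper's support formula sidesteps this entirely, and you should either prove and use that lemma or find another argument that does not require regularity of the given kernel.

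\textbf{Reverse direction.} The constructions are similar in spirit but differ in a useful detail: the paper defines $A_\noise \coloneqq \{\idx : \Fwdmap(\noise;\idx) \in \interior(\supp\targetmeas)\}$, the preimage of the \emph{interior}, which is open. This lets them build a probability measure with support exactly $\closure{A_\noise}$ by an elementary weighted sum over a countable base of open balls; the Dirac fallback covers the case $A_\noise = \emptyset$. Your $A_\noise$ is the closed preimage of the full support, and conditioning a full-support reference $\mu$ on it need not give $\supp = A_\noise$ (for instance if $A_\noise$ has isolated points or $\mu(A_\noise)=0$). Your key geometric observation actually lands $\idx$ in the \emph{interior} of your $A_\noise$ anyway, so switching to the paper's open set loses nothing and removes these edge cases. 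The paper, for its part, does not address measurability of $\noise \mapsto \idxcondmeas(\cdot|\noise)$; your concern there is legitimate but orthogonal.
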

The assumptions here are fairly minimal: the boundary condition ensures $\targetmeas$ is not pathological, and if \eqref{eq:support-coverage-condition} does not hold, then $\KL{\targetmeas}{\modelmeas} = \infty$ for \emph{every} $\idxcondmeas$.\footnote{See \autoref{prop:support-absolute-continuity} and  \autoref{lem:pushforward-support} in the Supplement.}
Additionally, the following result gives a sufficient condition under which it is possible to learn the target exactly.
\begin{proposition}
    If $\Fwdmap(\noise; \cdot) : \idxspace \to \dataspace$ is surjective for each $\noise \in \noisespace$, then there exists $\idxcondmeas$ such that $\modelmeas = \targetmeas$.
\end{proposition}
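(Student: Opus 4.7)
The plan is to exhibit $\idxcondmeas$ explicitly via a measurable right inverse (section) of $\Fwdmap(\noise;\cdot)$. Since $\Fwdmap(\noise;\cdot) : \idxspace \to \dataspace$ is surjective for each $\noise$, for every $(\noise,\data)$ there is some $\idx \in \idxspace$ with $\Fwdmap(\noise;\idx) = \data$. First I would construct a jointly measurable $\sigma : \noisespace \times \dataspace \to \idxspace$ such that $\Fwdmap(\noise;\sigma(\noise,\data)) = \data$ for all $(\noise,\data)$, and then define the kernel
\[
    \idxcondmeas(B \mid \noise) \coloneqq \targetmeas\bigl(\{\data \in \dataspace : \sigma(\noise,\data) \in B\}\bigr),
\]
i.e.\ $\idxcondmeas(\cdot \mid \noise)$ is the pushforward of $\targetmeas$ by $\sigma(\noise,\cdot)$. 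Joint measurability of $\sigma$ makes $\noise \mapsto \idxcondmeas(B\mid\noise)$ measurable for each Borel $B$, so this is a valid probability kernel.

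The main obstacle is producing the measurable section $\sigma$. This is not automatic from pointwise surjectivity alone, but under the standing regularity assumptions used elsewhere in the paper (standard Borel spaces, jointly Borel measurable---indeed typically continuous---$\Fwdmap$), it follows from a standard measurable selection result such as the Jankov--von Neumann uniformization theorem or the Kuratowski--Ryll-Nardzewski selection theorem applied to the multifunction $(\noise,\data) \mapsto \Fwdmap(\noise;\cdot)^{-1}(\data)$. Since $\idxspace, \dataspace, \noisespace$ are open subsets of Euclidean spaces and $\Fwdmap$ is assumed (at least) Borel measurable, these hypotheses are met.

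Once $\sigma$ is in hand, the verification is an immediate change of variables. For any Borel $C \subseteq \dataspace$, the marginal of $\datavar = \Fwdmap(\noisevar;\idxvar)$ under \eqref{eq:our-model} is
\[
\modelmeas(C) = \int_\noisespace \!\!\int_\idxspace \ind[\Fwdmap(\noise;\idx)\in C]\,\idxcondmeas(d\idx \mid \noise)\,\priormeas(d\noise).
\]
Substituting the definition of $\idxcondmeas$ and using $\Fwdmap(\noise;\sigma(\noise,\data)) = \data$ gives
\[
\modelmeas(C) = \int_\noisespace \!\!\int_\dataspace \ind[\data \in C]\,\targetmeas(d\data)\,\priormeas(d\noise) = \targetmeas(C),
\]
as required. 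Thus aside from the measurable selection step, which is the only delicate ingredient, the proof reduces to a one-line pushforward identity $\Fwdmap(\noise;\cdot) \circ \sigma(\noise,\cdot) = \Id_\dataspace$ applied fibrewise in $\noise$.
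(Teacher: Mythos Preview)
Your proof is correct and follows essentially the same route as the paper: define a right inverse $\sigma(\noise,\cdot)$ of $\Fwdmap(\noise;\cdot)$ and take $\idxcondmeas(\cdot\mid\noise)$ to be the pushforward of $\targetmeas$ by this section, after which $\modelmeas = \targetmeas$ follows by a one-line change of variables. The only difference is that you explicitly address the measurable-selection issue needed to make $\idxcondmeas$ a bona fide Markov kernel, whereas the paper simply invokes surjectivity to pick a section $H_\noise$ pointwise without discussing measurability; in that sense your version is slightly more careful than the original.
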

See \autoref{sec:support-correction-result} of the Supplement for proofs.
These results do not require $\supp \priormeas \cong \supp \targetmeas$, thereby showing \modelacr s relax the constraint \eqref{eq:homeomorphic-necessary-condition} for standard normalising flows.

Of course, in practice, our parameterisation \eqref{eq:our-bijection-family} does not necessarily ensure that $\Fwdmap$ will satisfy these conditions, and our parameterisation \eqref{eq:our-idxcond} may not be expressive enough to instantiate the $\idxcondmeas$ that is required.
However, these results show that \modelacr s provide at least a \emph{mechanism} for correcting a topologically misspecified prior.
When $\Fwdmap$ and $\idxcondmeas$ are sufficiently expressive, we can expect that they will learn to approximate these conditions over the course of training if doing so produces a better density estimate.
We therefore anticipate \modelacr s will improve performance for ResFlows, where \autoref{thm:exploding-bilipschitz-constant} applies, and may have benefits more generally, since all flows are ultimately constrained by \eqref{eq:homeomorphic-necessary-condition}.

\subsubsection{Disadvantages}

On the other hand, \modelacr s introduce additional overhead compared with regular normalising flows.
It therefore remains to show we obtain better performance on a fixed computational budget, which requires using a smaller model.
Empirically this holds for the models and datasets we consider in \autoref{sec:experiments}, but there are likely cases where it does not, particularly if the topologies of the target and prior are similar.

Likewise, \modelacr s sacrifice the exactness of normalising flows.
We do not see this as a significant problem for the task of density estimation, since the importance sampling estimator \eqref{eq:IS-estimator} means that at test time we can obtain arbitrary accuracy by taking $m$ to be large.
However, the lack of a closed-form density does limit the use of \modelacr s in some downstream tasks.
In particular, \modelacr s cannot immediately be plugged in to a variational approximation in the manner of \citet{rezende2015variational}, since this requires exact likelihoods.
However, it may be possible to use \modelacr s in the context of an extended-space variational framework along the lines of \citet{agakov2004auxiliary}, and we leave this for future work.

\subsection{Comparison with Discretely Indexed Models}

Similar models to \modelacr s have been proposed that use a discrete index space.
In the context of Bayesian inference, \citet{duan2019transport} proposes a single-layer ($\numlayers = 1$) model consisting of \eqref{eq:our-model} with $\idxspace = \{1, \ldots, I\}$ and $\Fwdmap(\cdot; i) = \fwdmap_i$ for separate normalising flows $\fwdmap_1, \ldots, \fwdmap_I$.
A special case of this framework is given by \emph{deep Gaussian mixture models} \citep{van2014factoring,van2015locally}, which corresponds to using invertible linear transformations for each $\fwdmap_i$.
In this case, \eqref{eq:our-density} becomes a summation that can be computed analytically.
However, this quickly becomes intractable as $\numlayers$ grows larger, since the cost to compute this is seen to be $\Theta(I^\numlayers)$.
Unlike for a continuous $\idx$, this cannot easily be reduced to $\Theta(\numlayers)$ using a variational approximation as in \autoref{sec:training-and-inference}, since a discrete $\approxidxpost$ is not amenable to the reparameterisation trick.
In addition, the use of separate bijections also means that the number of parameters of the model grows as $I$ increases.
In contrast, a continuous index allows a natural mechanism for sharing parameters across different $\Fwdmap(\cdot; \idx)$ as in \eqref{eq:our-bijection-family}.

Prior to \citet{duan2019transport}, \citet{dinh2019rad} proposed RAD as a means to mitigate the $\Theta(I^\numlayers)$ cost of na\"ively stacking discrete layers.
RAD partitions $\dataspace$ into $I$ disjoint subsets $B_1, \ldots, B_I$ and defines bijections $\fwdmap_i : \noisespace \to B_i$ for each $i$.
The model is then taken to be the marginal of $\datavar$ in
\[
    \noisevar \sim \priormeas, \quad \idxvar \sim \idxcondmeas(\cdot|\noisevar), \quad \datavar \coloneqq \fwdmap_\idxvar(\noisevar),
\]
where each $\idxcondmeas(\cdot|\noise)$ is a discrete distribution on $\{1, \ldots, I\}$.
Note that this is not an instance of our model \eqref{eq:our-model}, since we require each $\Fwdmap(\cdot; \idx)$ to be surjective onto $\dataspace$.
The use of partitioning means that \eqref{eq:our-density} is a summation with only a single term, which reduces the cost for $\numlayers$ layers to $\Theta(\numlayers)$.
However, partitioning also makes $\model$ discontinuous.
This leads to a very difficult optimisation problem and \citet{dinh2019rad} only report results for simple 2-D densities.
Additionally, partitioning requires ad-hoc architectural changes to existing normalising flows, and does not directly address the increasing parameter cost as $I$ grows large.

\begin{figure}[t]
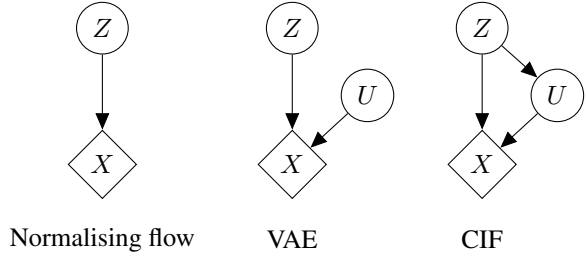

    \centering
  \tikz{
 \node[diamond,draw=black,inner sep=2.5pt] (x1) {$X$};%
 \node[circle,draw=black,above=of x1] (z1) {$Z$};%
 \node[below of=x1] () {Normalising flow};%
 \node[diamond,draw=black,right=of x1,xshift=.6cm,inner sep=2.5pt] (x2) {$X$};%
 \node[circle,draw=black,above=of x2,xshift=1cm,yshift=-.9cm] (u2) {$U$};%
 \node[circle,draw=black,above=of x2] (z2) {$Z$};%
 \node[below of=x2] () {VAE};%
 \node[diamond,draw=black,right=of x2,xshift=.6cm,inner sep=2.5pt] (x3) {$X$};%
 \node[circle,draw=black,above=of x3,xshift=1cm,yshift=-.9cm] (u3) {$U$};%
 \node[circle,draw=black,above=of x3] (z3) {$Z$}; %
 \node[below of=x3] () {\modelacr};%
 \edge {z1} {x1}
 \edge {z2,u2} {x2}
 \edge {z3} {u3}
 \edge {z3,u3} {x3}
}
 \caption{Comparison of related generative models. Circular nodes are random and diamond nodes are deterministic. \modelacr s generalise both normalising flows and VAEs as shown.} \label{fig:model-comparison}
\end{figure}

\subsection{Comparison with Variational Autoencoders} \label{sec:vae-comparison}

\modelacr s also generalise a broad family of variational autoencoders (VAEs) \citep{kingma2013auto,rezende2014stochastic}.
Recall that VAEs take
\begin{equation} \label{eq:vae-marginal}
    \model(\data) \coloneqq \int \idxmarg(\idx) p_{\datavar|\idxvar}(\data|\idx) \diff\idx
\end{equation}
for some choices of densities $\idxmarg$ and $p_{\datavar|\idxvar}$.\footnote{Note that this notation is nonstandard for VAEs in order to align with the rest of the paper. Here our $\idxvar$ corresponds to $z$ as used by \citet{kingma2013auto}.}
For instance, a mean-field Gaussian observation density has
\[
    p_{\datavar|\idxvar}(\cdot|\idx) \coloneqq \Normal\left(\trans(\idx), \diag\left(e^{\scale(\idx)}\right)\right),
\]
where $\trans, \scale : \idxspace \to \dataspace$, and $\diag(v)$ denotes the matrix with diagonal $v \in \R^d$ and zeros elsewhere.
If $\priormeas$ is a standard Gaussian, if each $\idxcondmeas(\cdot|\noise)$ has independent density $\idxmarg$, and if $\Fwdmap$ is \eqref{eq:our-bijection-family} with $\fwdmap$ the identity, then it follows that \eqref{eq:our-model} has marginal density \eqref{eq:vae-marginal} (modulo the signs of $\scale$ and $\trans$).\footnote{Here $\noisevar$ corresponds to $\epsilon$ as used by \citet{kingma2013auto}.}

More generally, every VAE model \eqref{eq:vae-marginal} with each $p_{\datavar|\idxvar}(\cdot|\idx)$  strictly positive corresponds to an instance of \eqref{eq:our-model} where $\idxvar$ is sampled independently of $\noisevar$.
To see this, let $\prior$ be any strictly positive density on $\noisespace$, and let each $\Fwdmap(\cdot; \idx)$ be the Knothe-Rosenblatt coupling \citep{villani2008optimal} of $\prior$ and $p_{\datavar|\idxvar}(\cdot|\idx)$.
By construction each $\Fwdmap(\cdot; \idx)$ is invertible and gives $\Fwdmap(\noisevar; \idx) \sim p_{\datavar|\idxvar}(\cdot|\idx)$ when $\noisevar \sim \prior$.
As a result, \eqref{eq:our-model} again yields $\datavar$ with a marginal density defined by \eqref{eq:vae-marginal}.
Consequently, \modelacr s generalise the VAE framework by adding an additional edge in the graphical model as shown in \autoref{fig:model-comparison}.

On the other hand, \modelacr s differ from VAEs in the way they are composed.
Whereas \modelacr s stack by taking $\prior$ to be a \modelacr, VAEs are typically stacked by taking $\idxmarg$ to be a VAE \citep{rezende2014stochastic,kingma2014semi,burda2015importance,sonderby2016ladder}.
This has implications for the design of the inference network $q_{\idxvar_{1:\numlayers}|\datavar}$.
In particular, a hierarchical VAE obtained in this way is \emph{Markovian}, so that
\[
    p_{\idxvar_{1:\numlayers}|\datavar}(\data, \idx_{1:\numlayers}) = p_{\idxvar_{\numlayers}|\datavar}(\idx_{\numlayers}|\data) \prod_{\layeridx=1}^\numlayers p_{\idxvar_\layeridx|\idxvar_{\layeridx-1}}(\idx_\layeridx|\idx_{\layeridx-1})
\]
where $\numlayers$ is the number of layers.
This directly allows specifying $q_{\idxvar_{1:\numlayers}|\datavar}$ to be of the same form without any loss of generality \citep{kingma2014semi,burda2015importance,sonderby2016ladder}.
Conversely, \modelacr s do not factor in this way, which motivates our alternative approach in \autoref{sec:training-and-inference}.

Note finally that \modelacr s should not be conflated with the large class of methods that use normalising flows to improve the \emph{inference} procedure in VAEs \citep{rezende2015variational,kingma2016improved,berg2018sylvester}.
These approaches are orthogonal to ours and indeed may be useful for improving our own inference procedure by replacing \eqref{eq:our-approx-posterior} with a more expressive model.

\subsection{Other Related Work}

Additional related methods have been proposed.
Within a classification context, \citet{dupont2019augmented} identify topological problems related to ODE-based mappings \citep{chen2018neural}, which like normalising flows are homeomorphisms and hence preserve the topology of their input.
To avoid this, \citet{dupont2019augmented} propose augmenting the data by appending auxiliary dimensions and learning a new mapping on this space.
In contrast, \modelacr s may be understood as augmenting not the data but instead the \emph{model} by considering a family of individual bijections on the \emph{original} space.

In addition, \citet{ho2019flow++} use a variational scheme to improve on the standard dequantisation method proposed by \citet{theis2015note} for modelling image datasets with normalising flows.
This approach is potentially complementary to \modelacr s, but we do not make use of it in our experiments.

\begin{table*}
\caption[Caption Test]{Mean $\pm$ standard error (over 3 seeds) of average test set log-likelihood (in nats). Higher is better. Best performing runs for each group are shown in bold. A $\star$ indicates state-of-the-art performance according to \citet[Table 1]{durkan2019neural}.}
\label{tab:UCI}
\begin{center}
\begin{small}
\begin{sc}
\begin{tabular}{llllll}
\toprule
                              & Power                               & Gas                       & Hepmass                   & Miniboone         & BSDS300 \\
\midrule
ResFlow ($L = 10$)            & $-2.73 \pm 0.03$                    & $4.16 \pm 0.08$           & $-20.68 \pm 0.02$         & $-14.2 \pm 0.10$  & $123.51 \pm 0.09$ \\
ResFlow ($L = 100$)           & $0.48 \pm 0.00$                     & $10.57 \pm 0.17$          & $-16.67 \pm 0.05$         & $-11.16 \pm 0.04$ & $148.05 \pm 0.61$ \\
\modelacr-ResFlow ($L = 10$)  & ${\bf 1.60 \pm 0.21^\star}$         & ${\bf 12.12 \pm 0.10}$    & ${\bf -13.74 \pm 0.03^\star}$         & ${\bf -8.10 \pm 0.04^\star}$ & ${\bf 160.50 \pm 0.08^\star}$ \\
\midrule
MAF                     & $0.19 \pm 0.02$       & $9.23 \pm 0.07$           & $-18.33 \pm 0.10$         & $-10.98 \pm 0.03$ & $156.13 \pm 0.00$  \\
\modelacr-MAF           & ${\bf 0.48 \pm 0.01}$ & ${\bf 12.02 \pm 0.10}$    & ${\bf -16.63 \pm 0.09}$   & ${\bf -9.93 \pm 0.04}$ & ${\bf 156.67 \pm 0.02}$  \\
\midrule
NSF                     & ${\bf 0.69 \pm 0.00}$      & $13.01 \pm 0.02$          &   $-14.30 \pm 0.05$       & $-10.68 \pm 0.06$ & ${\bf 157.59 \pm 0.02 }$ \\
\modelacr-NSF-1      & ${\bf 0.68 \pm 0.01}$       & $12.94 \pm 0.01$          &   ${\bf -13.83 \pm 0.10}$   
  & ${\bf -9.93 \pm 0.06}$    & ${\bf 157.60 \pm 0.02}$   \\
\modelacr-NSF-2      & ${\bf 0.69 \pm 0.00}$      & ${\bf 13.08 \pm 0.00}$          &   $-14.18 \pm 0.09$    &  $-10.80 \pm 0.01$ & $157.56 \pm 0.02$ \\
\bottomrule
\end{tabular}
\end{sc}
\end{small}
\end{center}
\vspace{-1em}
\end{table*}

\section{Experiments} \label{sec:experiments}

We evaluated the performance of \modelacr s on several problems of varying difficulty, including synthetic 2-D data, several tabular datasets, and three image datasets.
In all cases we took $\noisespace = \dataspace = \R^d$ with $d$ the dimension of the dataset.
We used the stacked architecture \eqref{eq:our-model-stacked} with the prior $P_{\noisevar_0}$ a Gaussian.
At each layer, $\Fwdmap$ had form \eqref{eq:our-bijection-family} with $\fwdmap$ a primitive flow step from a baseline architecture (e.g.\ a single residual block for ResFlow).
Each $\idxcond$ and $\approxidxpost$ had form \eqref{eq:our-idxcond} and \eqref{eq:our-approx-posterior} respectively.
We provide an overview of our results for the tabular and image datasets here.
Full experimental details, including additional 2-D figures along the lines of \autoref{fig:iResNet-2uniforms-annulus}, are in \autoref{sec:full-experimental-details} of the Supplement.
See \url{github.com/jrmcornish/cif} for our code.

\subsection{Tabular Datasets} \label{sec:uci-experiments}

We tested the performance of \modelacr s on the tabular datasets used by \citet{papamakarios2017masked}.
For each dataset, we trained 10 and 100-layer baseline fully connected ResFlows, and corresponding 10-layer \modelacr-ResFlows.
The \modelacr-ResFlows had roughly 1.5-4.5\% more parameters (depending on the dimension of the dataset) than the otherwise identical 10-layer ResFlows, and roughly 10\% of the parameters of the 100-layer ResFlows.
\autoref{tab:UCI} reports the average log-probability of the test set that we obtained for each model.
Observe that in all cases \modelacr-ResFlows significantly outperform both baseline models.
Moreover, for all but GAS, the \modelacr-ResFlows achieve state-of-the-art performance based on the results reported by \citet[Table 1]{durkan2019neural}.
This is particularly noticeable for POWER and BSDS300, where \modelacr-ResFlow improves on the best results of \citet{durkan2019neural} by 0.94 and 2.77 nats respectively.

We additionally tried using \emph{masked autoregressive flows} (MAFs) \citep{papamakarios2017masked} and \emph{neural spline flows} (NSFs) \citep{durkan2019neural} for $\fwdmap$.
In each case, we closely match the experimental settings of the baselines and augment using \modelacr s, controlling for the number of parameters used by the \modelacr\ extensions.
\autoref{tab:UCI} reports the average log-probability across the test set for each experiment.
Here, \modelacr-NSF-1 is a \modelacr\ with the same number of parameters as the baseline, and \modelacr-NSF-2 is a model using a baseline configuration for $\fwdmap$ (but having more parameters overall).
We see that \modelacr-MAFs consistently outperform MAFs across datasets; \modelacr-NSFs do not improve upon NSFs as dramatically, although we still notice improvements and would expect to improve further with more hyperparameter tuning.
Lastly it is important to notice that MAFs and NSFs do not restrict the Lipschitz constant of $\fwdmap$.
These results show that \modelacr s can yield benefits for normalising flows even if \autoref{thm:exploding-bilipschitz-constant} is not directly a limitation.

Finally, for ablation purposes we tried taking $\fwdmap$ to be the identity.
We obtained consistently worse performance than for \modelacr-ResFlows and \modelacr-MAF in this case, which aligns with our conjecture in \autoref{sec:advantages-over-nfs} that a performant \modelacr\ requires an expressive base flow $\fwdmap$.
Details and results are given in \autoref{sec:ablating-fwdmap} of the Supplement.

\subsection{Image Datasets}

We also considered \modelacr s applied to the MNIST \cite{lecun1998mnist}, Fashion-MNIST \citep{xiao2017fashion}, and CIFAR-10 \citep{krizhevsky2009learning} datasets.
Following our tabular experiments, we trained a multi-scale convolutional ResFlow and a corresponding \modelacr-ResFlow, as well as a larger baseline ResFlow to account for the additional parameters and depth introduced by our method.
Note that these models were significantly smaller than those used by \citet{chen2019residual}: e.g.\ for CIFAR10, the ResFlow used by \citet{chen2019residual} had 25M parameters, while our two baseline ResFlows and our \modelacr-ResFlow had 2.4M, 6.2M, and 5.6M parameters respectively.
We likewise considered RealNVPs with the same multi-scale convolutional architecture used by  \citet{dinh2016density} for their CIFAR-10 experiments.
For these runs we trained baseline RealNVPs, corresponding \modelacr-RealNVPs, and larger baseline RealNVPs with more depth and parameters.

The results are given in \autoref{tab:images-resflow} and \autoref{tab:images}.
Observe \modelacr s outperformed the baseline models for all datasets, which shows that our approach can scale to high dimensions.
For the \modelacr-ResFlows, we also obtained better performance than \citet{chen2019residual} on MNIST and better performance than Glow \citep{kingma2018glow} on CIFAR10, despite using a much smaller model.
Samples from all models are shown in \autoref{sec:images} of the Supplement.

\begin{table}%
\caption{Average test bits per dimension.\protect\footnotemark\ Lower is better.}
\label{tab:images-resflow}
\begin{center}
\begin{small}
\begin{sc}
\begin{tabular}{lll}
\toprule
                          &\multicolumn{1}{c}{MNIST} &\multicolumn{1}{c}{CIFAR-10} \\
\midrule
ResFlow (small)           & $1.074$                 & $3.474$ \\
ResFlow (big)             & $1.018$                 & $3.422$ \\
\modelacr-ResFlow         & ${\bf 0.922}$           & ${\bf 3.334}$ \\
\bottomrule
\end{tabular}
\end{sc}
\end{small}
\end{center}
\end{table}

\begin{table}%
\caption{Mean $\pm$ standard error of average test set bits per dimension over 3 random seeds. Lower is better.} %
\label{tab:images}
\begin{center}
\begin{small}
\begin{sc}
\begin{tabular}{lll}
\toprule
                          &\multicolumn{1}{c}{Fashion-MNIST} &\multicolumn{1}{c}{CIFAR-10} \\
\midrule
RealNVP (small)             & $2.944 \pm 0.003$           & $3.565 \pm 0.001$ \\
RealNVP (big)            & $2.946 \pm 0.002$           & $3.554 \pm 0.001$ \\
\modelacr-RealNVP        & ${\bf 2.823 \pm 0.003}$     & ${\bf 3.477 \pm 0.019}$ \\
\bottomrule
\end{tabular}
\end{sc}
\end{small}
\end{center}
\vspace{-1em}
\end{table}

\footnotetext{Only one seed was used per run due to computational limitations. However, the results were not cherry-picked.}

\section{Conclusion and Future Work}

The constraint \eqref{eq:homeomorphic-necessary-condition} shows that normalising flows are unable to exactly model targets whose topology differs from that of the prior.
Moreover, in order to approximate such targets closely, \autoref{thm:exploding-bilipschitz-constant} shows that the bi-Lipschitz constant of a flow must become arbitrarily large.
To address these problems, we have proposed \modelacr s, which can ``clean up'' regions of mass that are placed outside the support of the target by a standard flow.
\modelacr s perform well in practice and outperform baseline flows on several benchmark datasets.

While we have focussed on the use of \modelacr s for density estimation in this paper, it would also be interesting to apply \modelacr s in other contexts where normalising flows have been used successfully.
As \modelacr s do not have an analytically available density, this would likely require the modification of existing numerical frameworks, but the expressiveness benefits provided by \modelacr s might make this additional effort worthwhile.
We leave this direction for future work.

\section*{Acknowledgements}

Rob Cornish is supported by the EPSRC Centre for Doctoral Training in Autonomous Intelligent Machines \& Systems (EP/L015897/1) and NVIDIA. 
Anthony Caterini is a Commonwealth Scholar supported by the U.K.\ Government.
Arnaud Doucet is partially supported by the U.S.\ Army Research
Laboratory, the U.S.\ Army Research Office, and by the U.K.\ Ministry of
Defence (MoD) grant EP/R013616/1 and the U.K.\ EPSRC under grant numbers
EP/R034710/1 and EP/R018561/1.

\nocite{dupont2019augmented}
\nocite{ho2019flow++}

\bibliography{main}
\bibliographystyle{icml2020}

\clearpage
\onecolumn

\icmltitle{\mytitle: Supplementary Material}

\appendix
\numberwithin{equation}{section}
\renewcommand\thefigure{\thesection.\arabic{figure}}
\renewcommand\thetable{\thesection.\arabic{table}}

\section{Guide to Notation} \label{sec:notation}

\begin{table}[h]
  \begin{tabularx}{\linewidth}{l X}
    $(a_n)$ & A sequence of elements $a_1, a_2, \ldots$ \\
    $a(n) = \Theta(b(n))$ & $a(n)$ differs from $b(n)$ by at most a constant factor as $n \to \infty$ \\
    $u \odot v$ & The elementwise product of tensors $u$ and $v$ \\
    $\LogSumExp(a_1, \ldots, a_m)$ & $\log\left(\sum_{i=1}^m \exp(a_i)\right)$ \\
    $e^v$, where $v \in \R^d$ & $(e^{v_1}, \ldots, e^{v_d})$ \\
    $\norm{v}$ & The norm of a vector $v \in \R^d$ (our results are agnostic to the specific choice of $\norm{\cdot}$) \\
    $\opnorm{A}$ & The operator norm of a matrix $A \in \R^{d_1 \times d_2}$ induced by $\norm{\cdot}$ \\
    $\idmat_d$ & The $d \times d$ identity matrix \\
    $\det A$ & The determinant of a square matrix $A$ \\
    $\jac f(z)$ & The Jacobian matrix of a function $f$ evaluated at $z$ \\
    $\jac F(z; u)$ & The Jacobian matrix of a function $\jac F(\cdot; u)$ (i.e. with $u$ fixed) evaluated at $z$ \\
    $\Lip f$ & The Lipschitz constant of a function $f$ \\
    $\BiLip f$ & The bi-Lipschitz constant of a function $f$ \\
    $\mathcal{A} \cong \mathcal{B}$ & The topological spaces $\mathcal{A}$ and $\mathcal{B}$ are homeomorphic \\
    $\closure{B}$ & The topological closure of a set $B$ \\
    $\interior(B)$ & The interior of a set $B$ \\
    $\boundary B$ & The boundary of a set $B$ \\
    $\supp \mu$ & The support of a measure $\mu$ \\
    $f \# \mu$ & The pushforward of a measure $\mu$ by a function $f$ \\
    $\mu_n \Dto \mu$ & Weak convergence of the measures $\mu_n$ to $\mu$ \\
  \end{tabularx}
\end{table}

\section{Proofs}

\subsection{Preliminaries} \label{sec:topological-preliminaries}

We require some basic results that we include here for completeness.
We will make use of standard definitions and results from topology and real analysis.
A complete background to these topics can be found in \citet{dudley2002real}.

\subsubsection{Supports of Measures}

Recall that for a Borel measure $\mu$ on a topological space $\mathcal{Z}$, the \emph{support} of $\mu$, denoted $\supp \mu$, is the set of all $z \in \mathcal{Z}$ such that $\mu(N_z) > 0$ for every open set $N_z$ containing $z$.

The following is an immediate consequence:

\begin{proposition} \label{prop:support-absolute-continuity}
    Suppose $\mu$ and $\nu$ are Borel measures with $\mu$ absolutely continuous with respect to $\nu$. Then
    \[
        \supp \mu \subseteq \supp \nu.
    \]
\end{proposition}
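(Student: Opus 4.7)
The plan is to prove this by contrapositive (or equivalently by a direct argument on neighbourhoods). Suppose $z \in \supp \mu$; I want to show $z \in \supp \nu$, which by definition means verifying $\nu(N_z) > 0$ for every open $N_z$ containing $z$.

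First I would fix an arbitrary open neighbourhood $N_z$ of $z$. Since $z \in \supp \mu$, the definition of support gives $\mu(N_z) > 0$. I would then invoke absolute continuity in its measure-theoretic form: $\nu(A) = 0$ implies $\mu(A) = 0$ for every Borel set $A$. Applied contrapositively to $A = N_z$, the fact that $\mu(N_z) > 0$ forces $\nu(N_z) > 0$. Since $N_z$ was arbitrary, this shows $z \in \supp \nu$, and hence $\supp \mu \subseteq \supp \nu$.

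There is essentially no obstacle here; the proposition is an immediate unwinding of the two definitions. The only mild care required is to note that open sets are Borel (so absolute continuity applies to them) and that the definition of support as stated in the paper quantifies over open neighbourhoods, which matches exactly what the absolute continuity hypothesis delivers. No measurability or $\sigma$-finiteness subtleties arise, and the argument does not rely on the ambient space being anything more than a topological space carrying Borel measures.
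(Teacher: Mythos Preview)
Your proof is correct and is essentially identical to the paper's: the paper argues the contrapositive (if $z \notin \supp \nu$ then some open $N_z$ has $\nu(N_z)=0$, hence $\mu(N_z)=0$ by absolute continuity, so $z \notin \supp \mu$), which is just your direct argument read backwards. There is no meaningful difference in approach.
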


\begin{proof}
    Suppose $z \not \in \supp \nu$. Then there exists an open set $N_z$ containing $z$ such that $\nu(N_z) = 0$. By absolute continuity, we have also that $\mu(N_z) = 0$ and hence $z \not \in \supp \mu$.
\end{proof}

In general the converse need not hold. For example, the Dirac measure on $0$ has support contained within the Lebesgue measure on $\R$ (which has full support), but is not absolutely continuous with respect to it.

The following characterisation is useful:

\begin{proposition} \label{lem:support-closed}
    For any Borel measure $\mu$,
    \begin{equation} \label{eq:support-complement-as-union}
        (\supp \mu)^c = \bigcup_{\substack{\text{$A$ $\mathrm{open}$:} \\ \mu(A) = 0}} A,
    \end{equation}
    and hence $\supp \mu$ is closed.
\end{proposition}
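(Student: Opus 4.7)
The plan is to prove the set equality by establishing both inclusions directly from the definition of $\supp \mu$, and then to deduce closedness as an immediate corollary of the right-hand side being a union of open sets.

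First I would show the inclusion $(\supp \mu)^c \subseteq \bigcup_{A \text{ open},\, \mu(A)=0} A$. Take any $z \notin \supp \mu$. By the definition of the support, there must exist some open neighbourhood $N_z$ containing $z$ with $\mu(N_z) = 0$. Then $N_z$ is itself an open set of $\mu$-measure zero containing $z$, so $z$ lies in the right-hand union.

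Next I would show the reverse inclusion. Suppose $z$ belongs to the right-hand union. Then there exists an open set $A$ with $\mu(A) = 0$ and $z \in A$. This $A$ serves as an open neighbourhood of $z$ having measure zero, which by the definition of the support means $z \notin \supp \mu$, i.e.\ $z \in (\supp \mu)^c$.

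Finally, the right-hand side of \eqref{eq:support-complement-as-union} is a union of open sets and therefore open, so $(\supp \mu)^c$ is open, which means $\supp \mu$ is closed. There is no real obstacle here: the argument is essentially a direct unpacking of the definition of support together with the basic topological fact that arbitrary unions of open sets are open.
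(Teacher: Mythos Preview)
Your proof is correct and follows essentially the same approach as the paper: both arguments unpack the definition of $\supp \mu$ to obtain the biconditional $z \notin \supp \mu \iff z$ lies in some open $\mu$-null set, and then conclude closedness from the right-hand side being a union of open sets. The paper states the two inclusions as a single ``if and only if'' sentence, but the content is identical.
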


\begin{proof}
    This follows directly from the definitions, since $z \not \in \supp \mu$ if and only if there exists open $N_z$ with $z \in N_z$ and $\mu(N_z) = 0$, which is just another way of saying that $z$ is contained in the right-hand side of \eqref{eq:support-complement-as-union}.
    It follows that $(\supp \mu)^c$ is open, and hence $\supp \mu$ is closed.
\end{proof}

We mainly care about how the support of a measure is transformed by a pushforward function. The following proposition characterises what occurs in this case.

\begin{proposition} \label{lem:pushforward-support}
    Suppose $\mathcal{Z}$ and $\mathcal{X}$ are topological spaces. If $\mu$ is a Borel measure on $\mathcal{Z}$ such that $\mu((\supp \mu)^c) = 0$, and if $f: \mathcal{Z} \to \mathcal{X}$ is continuous, then
    \[
        \supp f\#\mu = \closure{f(\supp \mu)}.
    \]
\end{proposition}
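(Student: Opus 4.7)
The plan is to prove the two inclusions separately, relying only on (i) continuity of $f$, (ii) the pushforward identity $f\#\mu(B) = \mu(f^{-1}(B))$, (iii) the fact that $\supp f\#\mu$ is closed (by \autoref{lem:support-closed}), and (iv) the hypothesis $\mu((\supp\mu)^c) = 0$.

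For the inclusion $\closure{f(\supp \mu)} \subseteq \supp f\#\mu$, I would first show $f(\supp \mu) \subseteq \supp f\#\mu$ and then take the closure on both sides, using that $\supp f\#\mu$ is already closed. To establish the pointwise inclusion, pick $z \in \supp \mu$ and any open neighbourhood $N_{f(z)}$ of $f(z)$ in $\mathcal{X}$. By continuity, $f^{-1}(N_{f(z)})$ is an open neighbourhood of $z$, so by definition of the support $\mu(f^{-1}(N_{f(z)})) > 0$, which is exactly $f\#\mu(N_{f(z)}) > 0$. Hence $f(z) \in \supp f\#\mu$. This direction does \emph{not} use the hypothesis $\mu((\supp\mu)^c) = 0$.

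For the reverse inclusion $\supp f\#\mu \subseteq \closure{f(\supp \mu)}$, I would argue by contrapositive: fix $x \notin \closure{f(\supp \mu)}$ and produce an open neighbourhood of $x$ of $f\#\mu$-measure zero. Since the complement of the closure is open and disjoint from $f(\supp \mu)$, there is an open $N_x \ni x$ with $N_x \cap f(\supp \mu) = \emptyset$, i.e.\ $f^{-1}(N_x) \cap \supp \mu = \emptyset$, so $f^{-1}(N_x) \subseteq (\supp \mu)^c$. Now the hypothesis $\mu((\supp\mu)^c) = 0$ gives $f\#\mu(N_x) = \mu(f^{-1}(N_x)) = 0$, so $x \notin \supp f\#\mu$.

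The only subtle point — and the reason the hypothesis $\mu((\supp\mu)^c) = 0$ appears at all — is in the second inclusion: without it one can only conclude that $f^{-1}(N_x)$ lies in the support's complement, but not that it has measure zero, and indeed the conclusion can fail for pathological (non-inner-regular) measures on non-separable spaces. In the settings relevant to this paper (Borel measures on subsets of $\R^d$, which are inner regular and hence always satisfy $\mu((\supp\mu)^c) = 0$) the assumption is automatic, so this is purely a technical hypothesis rather than a genuine obstacle.
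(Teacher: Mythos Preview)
Your proposal is correct and follows essentially the same approach as the paper: both inclusions are proved exactly as you describe, using continuity of $f$ for $f(\supp\mu) \subseteq \supp f\#\mu$ (then closing up), and the contrapositive together with the hypothesis $\mu((\supp\mu)^c) = 0$ for the reverse inclusion. Your additional remark on why the hypothesis is needed and why it is automatic in the paper's Euclidean setting is accurate and aligns with the paper's subsequent \autoref{lem:support-all-that-matters}.
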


\begin{proof}
    Suppose $x \not \in \closure{f(\supp\mu)}$. Then $x$ must have an open neighbourhood $N_x$ such that
    \[
        N_x \cap f(\supp \mu) = \emptyset.
    \]
    This implies
    \begin{align*}
        f^{-1}(N_x) \cap \supp \mu
        &\subseteq f^{-1}(N_x) \cap f^{-1}(f(\supp \mu)) \\
        &= f^{-1}(N_x \cap f(\supp \mu)) \\
        &= f^{-1}(\emptyset) \\
        &= \emptyset.
    \end{align*}
    We then have
    \[
        f\#\mu(N_x) = \mu(f^{-1}(N_x)) = \mu(f^{-1}(N_x) \cap \supp \mu) = 0,
    \]
    where the second equality follows since we assumed $\mu((\supp \mu)^c) = 0$,
    and hence $x \not \in \supp f\#\mu$. 
    Consequently
    \[
        \supp f\#\mu \subseteq \closure{f(\supp \mu)}.
    \]
    
    In the other direction, suppose $x \in f(\supp \mu)$, so that $x = f(z)$ for some $z \in \supp \mu$. Given an open neighbourhood $N_x$ it then follows from continuity that $f^{-1}(N_x)$ is an open neighbourhood of $z$, and so
    \[
        f\#\mu(N_x) = \mu(f^{-1}(N_x)) > 0
    \]
    since $z \in \supp \mu$. This entails $\supp f\#\mu \supseteq f(\supp \mu)$, which means
    \[
        \supp f\#\mu = \closure{\supp f\#\mu} \supseteq \closure{f(\supp \mu)}
    \]
    by \autoref{lem:support-closed}.
\end{proof}

Note that in general we need not have $\supp f\#\mu = f(\supp \mu)$. For example, if $\mu$ is Gaussian and $f = \arctan$, then
\[
    f(\supp \mu) = (-1, 1) \neq [-1, 1] = \supp f\#\mu.
\]
Likewise, in general we do require the assumption $\mu((\supp \mu)^c) = 0$. This is because there exist examples of nontrivial Borel measures $\mu$ such that $\supp \mu = \emptyset$. 
Taking $f \equiv x_0$ to be any constant $x_0 \in \mathcal{X}$ (in which case $f$ is certainly continuous) then gives
\[
    \closure{f(\supp \mu)} = \emptyset \neq \{x_0\} = \supp f\#\mu.
\]
However, for our purposes, the following proposition shows that this is not a restriction.

\begin{proposition} \label{lem:support-all-that-matters}
    Suppose $\mu$ is a Borel measure on a separable metric space $\mathcal{Z}$. Then
    \[
        \mu((\supp \mu)^c) = 0.
    \]
\end{proposition}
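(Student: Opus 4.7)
The plan is to exploit the countable base guaranteed by separability to reduce the union in \autoref{lem:support-closed} to a countable one, after which countable subadditivity finishes the job.

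First, by \autoref{lem:support-closed}, we have the representation
\[
    (\supp \mu)^c = \bigcup_{\substack{A \text{ open} \\ \mu(A) = 0}} A.
\]
This union is typically uncountable, so direct subadditivity is unavailable. To fix this, I would invoke separability of $\mathcal{Z}$ to obtain a countable base $\{B_n\}_{n \in \N}$ for its topology, and then consider the sub-collection
\[
    \mathcal{B}' \coloneqq \{B_n : \mu(B_n) = 0\},
\]
which is at most countable.

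The key step is to show $(\supp \mu)^c = \bigcup_{B \in \mathcal{B}'} B$. The inclusion $\supseteq$ is immediate from \autoref{lem:support-closed}, since each $B \in \mathcal{B}'$ is open with $\mu(B) = 0$. For $\subseteq$, take $z \in (\supp \mu)^c$; by definition there exists an open neighbourhood $A$ of $z$ with $\mu(A) = 0$, and by the base property there exists some $B_n$ with $z \in B_n \subseteq A$. Monotonicity of $\mu$ then gives $\mu(B_n) = 0$, so $B_n \in \mathcal{B}'$ and $z$ is in the desired union.

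Once this reduction is established, countable subadditivity yields
\[
    \mu((\supp \mu)^c) \leq \sum_{B \in \mathcal{B}'} \mu(B) = 0,
\]
which is the desired conclusion. I do not anticipate any substantial obstacle here: the only subtle point is recognising that separability of a metric space implies second countability (i.e.\ the existence of a countable base), which is a standard fact.
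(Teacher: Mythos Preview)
Your proof is correct. Both your argument and the paper's ultimately reduce the (possibly uncountable) union in \autoref{lem:support-closed} to a countable one so that subadditivity applies, but the reductions differ. You invoke second countability to extract a countable base, select those basic sets of measure zero, and show they cover $(\supp \mu)^c$. The paper instead works directly from a countable dense subset $(z_k)$ of $(\supp \mu)^c$: for each $z \notin \supp \mu$ it defines the maximal radius $r^\star(z) = \sup\{r > 0 : \mu(B(z,r)) = 0\}$, shows via continuity of measure that $\mu(B(z, r^\star(z))) = 0$, and then argues using the triangle inequality that the balls $B(z_k, r^\star(z_k))$ cover $(\supp \mu)^c$. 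Your route is shorter and more standard; the paper's is more hands-on and avoids naming second countability explicitly, at the cost of a slightly longer covering argument.
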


\begin{proof}
    Throughout the proof, for each $z$ and $r > 0$, we will denote by $B(z, r)$ an open ball of radius $r$ centered at $z$.
    Likewise, for each $z \not \in \supp \mu$, let
    \[
        r^\star(z) \coloneqq \sup \{r > 0 \mid \mu(B(z, r)) = 0\}.
    \]
    Observe that $r^\star$ is well-defined (but possibly infinite) since $z \not \in \supp \mu$ means there must exist some $r > 0$ such that $\mu(B(z, r)) = 0$.

    We first show that $\mu(B(z, r^\star(z))) = 0$ for all $z \not \in \supp \mu$.
    To this end, fix $z$ and choose a sequence $r_m \uparrow r^\star(z)$ with $r_m < r^\star(z)$.
    We then have
    \[
        B(z, r^\star(z)) = \bigcup_{m=1}^\infty B(z, r_m),
    \]
    and so
    \[
        \mu(B(z, r^\star(z))) = \lim_{m\to \infty} \mu(B(z, r_m)) = 0
    \]
    by continuity of measure.

    Now, by separability, we can choose a countable sequence $(z_k) \subseteq (\supp \mu)^c$ such that $\overline{\{z_k\}} = \overline{(\supp \mu)^c}$. 
    We show that
    \[
        (\supp \mu)^c = \bigcup_{k=1}^\infty B(z_k, r^\star(z_k)),
    \]
    from which the result follows by countable subadditivity.
    It is clear from \eqref{eq:support-complement-as-union} that the left-hand side is a superset of the right.
    In the other direction, let $z \in (\supp \mu)^c$.
    By construction of $(z_k)$, there exists a subsequence $(z_{k'})$ such that $z_{k'} \to z$.
    For all $k'$ large enough we then have $z_{k'} \in B(z, r^\star(z)/2)$ and hence
    \[
        B(z_{k'}, r^\star(z)/2) \subseteq B(z, r^\star(z))
    \]
    by triangle inequality.
    It follows that for such $k'$ we have
    \[
        \mu(B(z_{k'}, r^\star(z)/2)) \leq \mu(B(z, r^\star(z))) = 0,
    \]
    and so $r^\star(z_{k'}) \geq r^\star(z)/2$ since $r^\star(z_{k'})$ is the supremum.
    But then we have
    \[
        z \in B(z_{k'}, r^\star(z)/2) \subseteq B(z_{k'}, r^\star(z_{k'})),
    \]
    so that
    \[
        z \in \bigcup_{k=1}^\infty B(z_k, r^\star(z_k))\\
    \]
    and we are done.
\end{proof}

\subsection{Lipschitz and Bi-Lipschitz Functions} \label{sec:lip-and-bilip}

We assume that $\noisespace \subseteq \R^{d_\noisespace}$, $\dataspace \subseteq \R^{d_\dataspace}$, and $\fwdmap : \noisespace \to \dataspace$.
Recall that the \emph{Lipschitz} constant of $\fwdmap$, denoted $\Lip \fwdmap$, is defined as the infimum over $\lipconst \in [0, \infty]$ such that
\[
    \norm{\fwdmap(\noise) - \fwdmap(\noise')} \leq \lipconst \norm{\noise - \noise'}
\]
for all $\noise, \noise' \in \noisespace$.
Likewise the \emph{bi-Lipschitz} constant $\BiLip \fwdmap$ is defined as the infimum over $\lipconst \in [1, \infty]$ such that
\[
     \lipconst^{-1} \norm{\noise - \noise'} \leq \norm{\fwdmap(\noise) - \fwdmap(\noise')} \leq \lipconst \norm{\noise - \noise'}
\]
for all $\noise, \noise' \in \noisespace$.
We prove some basic properties that follow from this definition.

\begin{proposition} \label{prop:bilip-const-finite-cond}
    $\BiLip \fwdmap < \infty$ if and only if $\fwdmap$ is injective and $\max(\Lip \fwdmap, \Lip \fwdmap^{-1}) < \infty$, where $\fwdmap^{-1} : \fwdmap(\noisespace) \to \noisespace$.
    For all injective $\fwdmap$, we then have $\BiLip \fwdmap = \max(\Lip \fwdmap, \Lip \fwdmap^{-1})$.
\end{proposition}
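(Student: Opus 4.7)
The plan is to obtain both directions and the equality simultaneously by showing that the single scalar appearing in the bi-Lipschitz definition bounds, and is bounded by, the two separate Lipschitz constants of $\fwdmap$ and $\fwdmap^{-1}$. Everything reduces to rearranging the two inequalities in the bi-Lipschitz definition, so no serious analytic machinery is required.

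For the forward direction, suppose $\BiLip \fwdmap = M < \infty$ and fix any $\lipconst \in [1,\infty]$ with $\lipconst > M$ witnessing the infimum up to slack. The upper inequality $\norm{\fwdmap(\noise) - \fwdmap(\noise')} \leq \lipconst \norm{\noise - \noise'}$ immediately gives $\Lip \fwdmap \leq \lipconst$; letting $\lipconst \downarrow M$ yields $\Lip \fwdmap \leq M$. The lower inequality $\lipconst^{-1}\norm{\noise - \noise'} \leq \norm{\fwdmap(\noise) - \fwdmap(\noise')}$ forces $\fwdmap(\noise) = \fwdmap(\noise')$ to imply $\noise = \noise'$, so $\fwdmap$ is injective; rewriting it for $\data = \fwdmap(\noise)$ and $\data' = \fwdmap(\noise')$ gives $\norm{\fwdmap^{-1}(\data) - \fwdmap^{-1}(\data')} \leq \lipconst \norm{\data - \data'}$ for all $\data, \data' \in \fwdmap(\noisespace)$, so $\Lip \fwdmap^{-1} \leq M$ in the same limit. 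This shows $\max(\Lip \fwdmap, \Lip \fwdmap^{-1}) \leq \BiLip \fwdmap$ whenever the latter is finite.

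For the converse, assume $\fwdmap$ is injective and set $M \coloneqq \max(\Lip \fwdmap, \Lip \fwdmap^{-1}) < \infty$ (the argument also works with $M = \infty$ but the conclusion is trivial then). The definition of $\Lip \fwdmap$ gives the upper bound with constant $M$, and applying the Lipschitz bound of $\fwdmap^{-1}$ to $\data = \fwdmap(\noise)$ and $\data' = \fwdmap(\noise')$ yields $\norm{\noise - \noise'} \leq M \norm{\fwdmap(\noise) - \fwdmap(\noise')}$, i.e.\ the lower bound with constant $M$. Hence $M$ is an admissible choice of $\lipconst$ in the bi-Lipschitz definition and $\BiLip \fwdmap \leq M < \infty$.

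Combining the two inequalities gives $\BiLip \fwdmap = \max(\Lip \fwdmap, \Lip \fwdmap^{-1})$ for injective $\fwdmap$, and the biconditional follows since the converse direction already establishes finiteness. The only mild subtlety is being careful that $\fwdmap^{-1}$ is interpreted with codomain restricted to $\fwdmap(\noisespace)$ (as stated in the proposition), which is needed for the $\Lip \fwdmap^{-1}$ quantity to be well-defined; this is the only real ``obstacle'' and it is purely notational.
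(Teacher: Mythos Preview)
Your proof is correct and follows essentially the same approach as the paper's: both arguments establish the two inequalities $\max(\Lip \fwdmap, \Lip \fwdmap^{-1}) \leq \BiLip \fwdmap$ and $\BiLip \fwdmap \leq \max(\Lip \fwdmap, \Lip \fwdmap^{-1})$ by reading off the upper and lower halves of the bi-Lipschitz definition and substituting $\data = \fwdmap(\noise)$, $\data' = \fwdmap(\noise')$. The only cosmetic difference is that you pass to the limit $\lipconst \downarrow M$ rather than using $\BiLip \fwdmap$ directly as the constant (the paper implicitly uses that the infimum is attained), and the paper is marginally more explicit about the $\BiLip \fwdmap = \infty$ case for injective $\fwdmap$, handling it via contrapositive.
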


\begin{proof}
    For the first statement, suppose $\BiLip \fwdmap < \infty$.
    It is immediate that $\BiLip \fwdmap \geq \Lip \fwdmap$.
    To see that $\fwdmap$ is injective, note that for $\noise \neq \noise'$ we have
    \[
        \norm{\fwdmap(\noise) - \fwdmap(\noise')} \geq
        (\BiLip \fwdmap)^{-1} \norm{\noise - \noise'} > 0
    \]
    and so $\fwdmap(\noise) \neq \fwdmap(\noise')$.
    On the other hand, for $\data, \data' \in \fwdmap(\noisespace)$, we have
    \[
        (\BiLip\fwdmap)^{-1} \norm{\fwdmap^{-1}(\data) - \fwdmap^{-1}(\data')} \leq \norm{\fwdmap(\fwdmap^{-1}(\data)) - \fwdmap(\fwdmap^{-1}(\data'))} = \norm{\data - \data'},
    \]
    which gives that $\BiLip \fwdmap \geq \Lip \fwdmap^{-1}$.
    Altogether we have
    \begin{equation} \label{eq:max_leq_bilip}
         \max(\Lip \fwdmap, \Lip \fwdmap^{-1}) \leq \BiLip \fwdmap < \infty,
    \end{equation}
    which gives the forward direction.
    
    Next suppose $\fwdmap$ is injective and that
    \[
        \lipconst \coloneqq \max(\Lip \fwdmap, \Lip \fwdmap^{-1}) < \infty.
    \]
    For $\noise, \noise' \in \noisespace$, we certainly have
    \[
        \norm{\fwdmap(\noise) - \fwdmap(\noise')} \leq \lipconst \norm{\noise - \noise'}.
    \]
    Likewise, since $\fwdmap(\noise), \fwdmap(\noise') \in \fwdmap(\noisespace)$,
    \[
        \norm{\noise - \noise'} = \norm{\fwdmap^{-1}(\fwdmap(\noise)) - \fwdmap^{-1}(\fwdmap(\noise'))} \leq \lipconst \norm{\fwdmap(\noise) - \fwdmap(\noise')},
    \]
    so that
    \[
        \lipconst^{-1} \norm{\noise - \noise'} \leq \norm{\fwdmap(\noise) - \fwdmap(\noise')}
    \]
    because injectivity of $\fwdmap$ means that $\lipconst > 0$.
    From this it follows that
    \begin{equation} \label{eq:bilip_leq_max}
        \BiLip \fwdmap \leq \lipconst < \infty,
    \end{equation}
    which gives the reverse direction, proving the first statement.
    
    For the second statement, suppose $\fwdmap$ is injective.
    Then if $\BiLip \fwdmap < \infty$, \eqref{eq:max_leq_bilip} and \eqref{eq:bilip_leq_max} together give
    \[
        \BiLip \fwdmap = \max(\Lip \fwdmap, \Lip \fwdmap^{-1}).
    \]
    On the other hand, if $\BiLip \fwdmap = \infty$ then $\max(\Lip \fwdmap, \Lip \fwdmap^{-1}) = \infty$ since we would otherwise obtain a contradiction by the first statement of the proposition.
    This completes the proof.
\end{proof}

It follows directly that if $\BiLip \fwdmap < \infty$, then $\fwdmap$ is a homeomorphism from $\noisespace$ to $\fwdmap(\noisespace)$.\footnote{Note however that the converse is not true in general: for example, $\exp$ is a homeomorphism from $\R$ to $(0, \infty)$, but $\BiLip \exp = \infty$.} Moreover, in this case $\fwdmap$ maps closed sets to closed sets, as the following result shows:

\begin{proposition} \label{prop:bilip-image-closed}
    If $\BiLip \fwdmap < \infty$ and $\noisespace$ is closed in $\R^{d_\noisespace}$, then $\fwdmap(\noisespace)$ is closed in $\R^{d_\dataspace}$.
\end{proposition}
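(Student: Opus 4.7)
The plan is to show $\fwdmap(\noisespace)$ is sequentially closed in $\R^{d_\dataspace}$, which is equivalent to being closed since we are in a metric space. So I would take an arbitrary convergent sequence $(\data_n) \subseteq \fwdmap(\noisespace)$ with $\data_n \to \data \in \R^{d_\dataspace}$ and aim to show $\data \in \fwdmap(\noisespace)$.

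First I would invoke the preceding \autoref{prop:bilip-const-finite-cond} to extract the two ingredients I need from $\BiLip \fwdmap < \infty$: injectivity of $\fwdmap$, and a finite Lipschitz constant $\lipconst \coloneqq \BiLip \fwdmap$ controlling $\fwdmap^{-1} : \fwdmap(\noisespace) \to \noisespace$. Then writing $\data_n = \fwdmap(\noise_n)$ for the unique $\noise_n \in \noisespace$, the lower bi-Lipschitz inequality gives
\[
    \norm{\noise_n - \noise_m} \leq \lipconst \norm{\fwdmap(\noise_n) - \fwdmap(\noise_m)} = \lipconst \norm{\data_n - \data_m}.
\]
Since $(\data_n)$ is convergent it is Cauchy, hence $(\noise_n)$ is Cauchy in $\R^{d_\noisespace}$, and by completeness converges to some $\noise \in \R^{d_\noisespace}$.

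Next I would use that $\noisespace$ is closed in $\R^{d_\noisespace}$ to conclude $\noise \in \noisespace$, so $\fwdmap(\noise)$ is defined. Finally, the upper bi-Lipschitz inequality implies $\fwdmap$ is (Lipschitz and hence) continuous, so $\fwdmap(\noise_n) \to \fwdmap(\noise)$, giving $\data = \fwdmap(\noise) \in \fwdmap(\noisespace)$. This shows $\fwdmap(\noisespace)$ contains all its limit points and is therefore closed.

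There is no real obstacle here; the argument is essentially a standard ``bi-Lipschitz maps send complete sets to complete sets'' fact. The only things one must be careful about are (i) applying the lower bi-Lipschitz bound to transfer the Cauchy property from $(\data_n)$ back to $(\noise_n)$ in the source space, and (ii) invoking closedness of $\noisespace$ (not just of $\R^{d_\noisespace}$) to ensure the limit of $(\noise_n)$ lies in the domain of $\fwdmap$, which is exactly the hypothesis provided.
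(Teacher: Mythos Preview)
Your proposal is correct and essentially identical to the paper's proof: both pull back a Cauchy/convergent sequence in $\fwdmap(\noisespace)$ via the lower bi-Lipschitz bound to obtain a Cauchy sequence in $\noisespace$, use closedness of $\noisespace$ in the complete space $\R^{d_\noisespace}$ to obtain a limit there, and push forward via Lipschitz continuity of $\fwdmap$. The only cosmetic difference is that the paper phrases the conclusion as ``$\fwdmap(\noisespace)$ is complete, hence closed in $\R^{d_\dataspace}$'' whereas you argue sequential closedness directly.
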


\begin{proof}
    It is a straightforward consequence of \autoref{prop:bilip-const-finite-cond} that if $(\data_n) \subseteq \fwdmap(\noisespace)$ is Cauchy, then $(\fwdmap^{-1}(\data_n))$ is Cauchy.
    Consequently $(\fwdmap^{-1}(\data_n))$ converges to some $\noise_\infty \in \noisespace$, since $\noisespace$ is a closed subset of a complete space and therefore complete.
    But then
    \begin{align*}
        \norm{\data_n - \fwdmap(\noise_\infty)} &= \norm{\fwdmap(\fwdmap^{-1}(\data_n)) - \fwdmap(\noise_\infty)} \\
        &\leq \lipconst \norm{\fwdmap^{-1}(\data_n) - \noise_\infty} \\
        &\to 0
    \end{align*}
    as $n \to \infty$.
    Consequently $\fwdmap(\noisespace)$ is complete, and so $\fwdmap(\noisespace)$ is closed as desired since the ambient space $\R^{d_\dataspace}$ is complete.
\end{proof}

The Lipschitz constant can be computed from the \emph{operator norm} $\opnorm{\cdot}$ of the Jacobian of $\fwdmap$.
Recall that $\opnorm{\cdot}$ is defined as for a matrix $A \in \R^{d_\dataspace \times d_\noisespace}$ as
\[
    \opnorm{A} \coloneqq \sup_{\substack{v \in \R^{d_\noisespace} : \\ \norm{v} = 1}} \norm{Av}
\]
where we think of elements of $\R^{d_\noisespace}$ as column vectors.

\begin{proposition} \label{prop:lip-const-op-norm-expression}
    If $\noisespace = \R^{d_\noisespace}$, $\dataspace = \R^{d_\dataspace}$, and $\fwdmap$ is everywhere differentiable, then
    \[
        \Lip \fwdmap = \sup_{\noise \in \noisespace} \opnorm{\jac \fwdmap(\noise)}.
    \]
\end{proposition}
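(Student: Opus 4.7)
The plan is to prove the equality by establishing both inequalities separately.

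For the direction $\sup_{\noise \in \noisespace} \opnorm{\jac \fwdmap(\noise)} \leq \Lip \fwdmap$, I would fix $\noise \in \noisespace$ and a unit vector $v \in \R^{d_\noisespace}$, and use the definition of the Jacobian as the limit
\[
    \jac \fwdmap(\noise) v = \lim_{t \to 0} \frac{\fwdmap(\noise + tv) - \fwdmap(\noise)}{t}.
\]
Taking norms and applying the Lipschitz bound to the numerator inside the limit yields $\norm{\jac\fwdmap(\noise) v} \leq \Lip \fwdmap$. Taking the supremum over unit $v$ gives $\opnorm{\jac \fwdmap(\noise)} \leq \Lip \fwdmap$, and then the supremum over $\noise$ finishes this direction. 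If $\Lip \fwdmap = \infty$ the bound is trivial.

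For the reverse direction $\Lip\fwdmap \leq \sup_\noise \opnorm{\jac\fwdmap(\noise)}$, let $\lipconst \coloneqq \sup_\noise \opnorm{\jac\fwdmap(\noise)}$, which we may assume finite. Given $\noise, \noise' \in \noisespace$, consider the segment $\gamma(t) \coloneqq \noise + t(\noise' - \noise)$ for $t \in [0,1]$, which lies in $\noisespace = \R^{d_\noisespace}$. Since $\fwdmap$ is differentiable everywhere, the chain rule gives that $\fwdmap \circ \gamma$ is differentiable with $(\fwdmap \circ \gamma)'(t) = \jac\fwdmap(\gamma(t))(\noise' - \noise)$. I would then apply the mean value inequality for vector-valued differentiable functions on an interval: for any continuous linear functional $w^*$ with dual norm at most one, the scalar function $t \mapsto w^*(\fwdmap(\gamma(t)))$ is differentiable on $[0,1]$, and the ordinary mean value theorem yields some $t^\star \in (0,1)$ with
\[
    w^*\bigl(\fwdmap(\noise') - \fwdmap(\noise)\bigr) = w^*\bigl(\jac\fwdmap(\gamma(t^\star))(\noise' - \noise)\bigr) \leq \opnorm{\jac\fwdmap(\gamma(t^\star))} \norm{\noise' - \noise} \leq \lipconst \norm{\noise' - \noise}.
\]
Taking the supremum over such $w^*$ using the Hahn-Banach duality characterisation of $\norm{\cdot}$ yields $\norm{\fwdmap(\noise') - \fwdmap(\noise)} \leq \lipconst\norm{\noise' - \noise}$, hence $\Lip \fwdmap \leq \lipconst$.

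The main subtlety will be the second direction: because $\fwdmap$ is only assumed differentiable rather than $C^1$, one cannot simply write $\fwdmap(\noise') - \fwdmap(\noise) = \int_0^1 \jac\fwdmap(\gamma(t))(\noise' - \noise)\, dt$ and pull the norm inside the integral, since this would require integrability of $t \mapsto \jac \fwdmap(\gamma(t))$. The duality/scalar-MVT trick above circumvents this by reducing to the one-dimensional mean value theorem, which only needs differentiability. If one is willing to assume $\fwdmap \in C^1$ (or even just that $\jac\fwdmap$ is locally bounded along segments, which follows automatically once $\lipconst < \infty$), the integral argument gives a cleaner one-line proof.
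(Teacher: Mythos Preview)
Your proof is correct and follows essentially the same approach as the paper: the first inequality is obtained identically (Jacobian as a limit of difference quotients), and the second is obtained via a mean-value argument along the segment joining $\noise$ and $\noise'$. The only cosmetic differences are that the paper frames the second direction as a contrapositive (assume $\Lip\fwdmap > \lipconst$ and produce a point with $\opnorm{\jac\fwdmap(\noise_0)} > \lipconst$) and invokes the vector-valued mean value inequality directly \citep[Theorem 5.19]{rudin1964principles}, whereas you argue directly and re-derive that inequality inline via the scalar MVT together with Hahn--Banach duality; your version has the small advantage of being transparently valid for an arbitrary norm $\norm{\cdot}$.
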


\begin{proof}
    If $v \in \noisespace$ with $\norm{v} = 1$, then
    \begin{align*}
        \norm{[\jac \fwdmap(\noise)] v} &= \lim_{t \to 0} \frac{\norm{\fwdmap(\noise + tv) - \fwdmap(\noise)}}{\abs{t}} \\
        &\leq \lim_{t \to 0} \frac{(\Lip \fwdmap) \norm{(\noise + tv) - \noise}}{\abs{t}} \\
        &= \Lip \fwdmap.
    \end{align*}
    It follows directly that
    \[
        \opnorm{\jac \fwdmap(\noise)} \leq \Lip \fwdmap.
    \]
    On the other hand, suppose $\Lip \fwdmap > \lipconst$.
    Then there exists $\noise, \noise' \in \noisespace$ such that
    \[
        \norm{\fwdmap(\noise) - \fwdmap(\noise')} > \lipconst \norm{\noise - \noise'}.
    \]
    Since $\fwdmap$ is differentiable, so too is the map $\varphi : [0, 1] \to \dataspace$ defined by
    \[
        \varphi(t) \coloneqq \fwdmap(t \noise + (1 - t) \noise').
    \]
    By Theorem 5.19 of \citet{rudin1964principles}, there exists $t_0 \in (0, 1)$ such that the derivative $\varphi'$ satisfies
    \[
        \norm{\varphi'(t_0)} \geq \norm{\fwdmap(\noise') - \fwdmap(\noise)} > \lipconst \norm{\noise - \noise'}.
    \]
    But, letting $\noise_0 \coloneqq t_0 \noise + (1 - t_0) \noise'$, observe that
    \begin{align*}
        \varphi'(t_0) &= \lim_{t \to 0} \frac{\fwdmap(\noise_0 + t(\noise - \noise')) - \fwdmap(\noise_0)}{t} \\
        &= [\jac \fwdmap(\noise_0)](\noise - \noise'),
    \end{align*}
    where we think of $z, z'$ as column vectors.
    As such,
    \begin{align*}
        \opnorm{\jac \fwdmap(\noise_0)} \norm{\noise - \noise'} &\geq \norm{[\jac \fwdmap(\noise_0)] (\noise - \noise')} \\
        &= \norm{\varphi'(t_0)} \\
        &> \lipconst \norm{\noise - \noise'}
    \end{align*}
    and so
    \[
        \sup_{\noise \in \noisespace} \opnorm{\jac \fwdmap(\noise)} > \lipconst.
    \]
    Since $\lipconst$ was arbitrary this means that
    \[
        \Lip \fwdmap \leq \sup_{\noise \in \noisespace} \opnorm{\jac \fwdmap(\noise)}
    \]
    which gives the result.
\end{proof}

\autoref{prop:bilip-const-finite-cond} and \autoref{prop:lip-const-op-norm-expression} then immediately entail the following:

\begin{corollary}
    Suppose $\noisespace = \R^{d_\noisespace}$ and $\dataspace = \R^{d_\dataspace}$. If $\fwdmap$ is injective, and if $\fwdmap$ and $\fwdmap^{-1} : \fwdmap(\noisespace) \to \noisespace$ are everywhere differentiable, then
    \[
        \BiLip \fwdmap = \max\left(\sup_{\noise \in \noisespace} \opnorm{\jac \fwdmap(\noise)}, \sup_{\data \in \fwdmap(\noisespace)} \opnorm{\jac \fwdmap^{-1}(\data)} \right).
    \]
\end{corollary}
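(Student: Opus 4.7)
The plan is to combine Propositions \ref{prop:bilip-const-finite-cond} and \ref{prop:lip-const-op-norm-expression}. First, since $\fwdmap$ is assumed injective, the second statement of Proposition \ref{prop:bilip-const-finite-cond} gives
\[
    \BiLip \fwdmap = \max(\Lip \fwdmap, \Lip \fwdmap^{-1}),
\]
regardless of whether $\BiLip \fwdmap$ is finite or infinite. It thus suffices to rewrite each of $\Lip \fwdmap$ and $\Lip \fwdmap^{-1}$ as a supremum of operator norms of Jacobians, and take the max.

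For $\fwdmap$ itself the hypotheses of Proposition \ref{prop:lip-const-op-norm-expression} hold verbatim: its domain is $\noisespace = \R^{d_\noisespace}$ and $\fwdmap$ is everywhere differentiable. Hence $\Lip \fwdmap = \sup_{\noise \in \noisespace} \opnorm{\jac \fwdmap(\noise)}$ immediately.

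The main obstacle is applying Proposition \ref{prop:lip-const-op-norm-expression} to $\fwdmap^{-1}$, whose domain is the (possibly proper) subset $\fwdmap(\noisespace)$ of $\R^{d_\dataspace}$, so the proposition does not apply as stated. The forward bound $\opnorm{\jac \fwdmap^{-1}(\data)} \leq \Lip \fwdmap^{-1}$ only uses a one-sided limit along an arbitrarily short line segment and carries over wherever $\fwdmap(\noisespace)$ is locally open at $\data$. The reverse bound in the proof of Proposition \ref{prop:lip-const-op-norm-expression} uses the mean value theorem along a straight segment joining two points, which need not remain inside $\fwdmap(\noisespace)$ in general. This can be handled either by strengthening the hypothesis (assuming $\fwdmap(\noisespace)$ convex), or more naturally by transporting the argument through $\fwdmap$: for $\data, \data' \in \fwdmap(\noisespace)$, set $\noise = \fwdmap^{-1}(\data)$ and $\noise' = \fwdmap^{-1}(\data')$, parametrise via $\varphi(t) = \fwdmap(t\noise + (1 - t)\noise')$ (whose trace lies in $\fwdmap(\noisespace)$ since $\noisespace = \R^{d_\noisespace}$ is convex), and use the chain rule together with the inverse-function identity $\jac \fwdmap^{-1}(\fwdmap(\noise)) = [\jac \fwdmap(\noise)]^{-1}$ to rerun the mean value argument of Proposition \ref{prop:lip-const-op-norm-expression} along this curve. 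Substituting the resulting two sup expressions into the first display above then yields the claim.
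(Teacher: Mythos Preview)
The paper itself treats this as a one-line consequence of Propositions~\ref{prop:bilip-const-finite-cond} and~\ref{prop:lip-const-op-norm-expression} and does not address the difficulty you raise, so you are already being more careful than the source. Your diagnosis is correct: Proposition~\ref{prop:lip-const-op-norm-expression} is stated only for maps with full Euclidean domain, and under the hypotheses as written $\fwdmap(\noisespace)$ need not be all of $\R^{d_\dataspace}$ (nor convex).

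However, your curved-path repair does not close the gap. From the chain rule applied to $\fwdmap^{-1}\circ\varphi$ you obtain $\noise-\noise' = \jac\fwdmap^{-1}(\varphi(t))\,\varphi'(t)$ for every $t$, hence $\opnorm{\jac\fwdmap^{-1}(\varphi(t))} \geq \norm{\noise-\noise'}/\norm{\varphi'(t)}$. To conclude this exceeds $\lipconst$ when $\norm{\noise-\noise'} > \lipconst\,\norm{\data-\data'}$, you would need some $t_1$ with $\norm{\varphi'(t_1)} \leq \norm{\data-\data'}$. But the vector-valued mean-value inequality (the one used in Proposition~\ref{prop:lip-const-op-norm-expression}) only furnishes the \emph{opposite} direction, $\norm{\varphi'(t_0)} \geq \norm{\varphi(1)-\varphi(0)} = \norm{\data-\data'}$ for some $t_0$; a curve can have uniformly large speed yet small endpoint displacement (e.g.\ an arc that nearly closes on itself), so no ``slow'' point need exist. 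Integrating instead gives $\norm{\noise-\noise'} \leq \sup\opnorm{\jac\fwdmap^{-1}}\cdot\text{length}(\varphi)$, but $\text{length}(\varphi)\geq\norm{\data-\data'}$, again the wrong way round. The cleanest fix is simply to add surjectivity of $\fwdmap$ onto $\R^{d_\dataspace}$ to the hypotheses (this is the only case the paper actually uses), after which Proposition~\ref{prop:lip-const-op-norm-expression} applies to $\fwdmap^{-1}$ verbatim; your ``assume $\fwdmap(\noisespace)$ convex'' alternative also suffices.
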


\subsubsection{Arzel\`a-Ascoli}

Our proof of \autoref{thm:exploding-bilipschitz-constant} makes use of the Arzel\`a-Ascoli theorem. This is a standard and foundational result in analysis, but we include a statement here for completeness. To this end, suppose we have a sequence of functions $f_n : \mathcal{Z} \subseteq \R^{d_\mathcal{X}} \to \mathcal{X} \subseteq \R^{d_{\mathcal{X}}}$. We say that $(f_n)$ is \emph{pointwise bounded} if, for all $z \in \mathcal{Z}$,
\[
    \sup_{n} \norm{f_n(z)} < \infty.
\]
Likewise, $(f_n)$ is \emph{uniformly equicontinuous} if for every $\epsilon > 0$ there exists $\delta > 0$ such that, for all $n$,
\[
    \norm{f_n(z) - f_n(z')} < \epsilon
\]
whenever $\norm{z - z'} < \delta$.

\begin{theorem}[Arzel\`a-Ascoli] \label{thm:arzela-ascoli}
    If a sequence of functions $f_n : \mathcal{Z} \subseteq \R^{d_\mathcal{Z}} \to \mathcal{X} \subseteq \R^{d_\mathcal{X}}$ is pointwise bounded and uniformly equicontinuous, then there exists a subsequence of $(f_n)$ that converges uniformly on every compact subset of $\mathcal{Z}$.
\end{theorem}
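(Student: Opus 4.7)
The plan is to follow the standard two-stage proof: first use a diagonal extraction to obtain a subsequence that converges pointwise on a countable dense subset of $\mathcal{Z}$, and then use uniform equicontinuity to upgrade this to uniform convergence on any compact $K \subseteq \mathcal{Z}$.

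For the first stage, I would start by choosing a countable dense subset $D = \{z_1, z_2, \ldots\} \subseteq \mathcal{Z}$, which exists because $\mathcal{Z} \subseteq \R^{d_\mathcal{Z}}$ is separable. Since $(f_n)$ is pointwise bounded, the sequence $(f_n(z_1))$ lies in a bounded subset of $\R^{d_\mathcal{X}}$, so Bolzano--Weierstrass yields a subsequence $(f_{n^{(1)}_k})$ converging at $z_1$. Applying the same reasoning to $(f_{n^{(1)}_k}(z_2))$ produces a further subsequence $(f_{n^{(2)}_k})$ converging at $z_1$ and $z_2$, and so on. The diagonal subsequence $g_k \coloneqq f_{n^{(k)}_k}$ then converges at every $z_j \in D$.

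For the second stage, fix a compact $K \subseteq \mathcal{Z}$ and $\epsilon > 0$. By uniform equicontinuity, choose $\delta > 0$ such that $\|g_k(z) - g_k(z')\| < \epsilon/3$ for all $k$ whenever $\|z - z'\| < \delta$. By compactness of $K$ and density of $D$ in $\mathcal{Z}$, cover $K$ by finitely many balls $B(z_{j_1}, \delta), \ldots, B(z_{j_m}, \delta)$ with centres in $D$. Since $(g_k(z_{j_i}))$ converges in $\R^{d_\mathcal{X}}$ for each of the finitely many indices $i = 1, \ldots, m$, it is Cauchy, so I can choose $N$ such that $\|g_k(z_{j_i}) - g_{k'}(z_{j_i})\| < \epsilon/3$ for all $k, k' \geq N$ and all $i$. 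For an arbitrary $z \in K$, pick $i$ with $z \in B(z_{j_i}, \delta)$ and apply the triangle inequality:
\[
    \|g_k(z) - g_{k'}(z)\| \leq \|g_k(z) - g_k(z_{j_i})\| + \|g_k(z_{j_i}) - g_{k'}(z_{j_i})\| + \|g_{k'}(z_{j_i}) - g_{k'}(z)\| < \epsilon.
\]
This bound is uniform in $z \in K$, so $(g_k)$ is uniformly Cauchy on $K$ and therefore uniformly convergent on $K$ (the target space $\R^{d_\mathcal{X}}$ is complete, and the uniform limit of functions into $\mathcal{X} \subseteq \R^{d_\mathcal{X}}$ lies in the closure of $\mathcal{X}$; if one wants the limit to land in $\mathcal{X}$, one can observe that the relevant convergence statement is about convergence in $\R^{d_\mathcal{X}}$ since $\mathcal{X}$ carries the induced topology).

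There is no serious obstacle here, since both ingredients are standard. The one mildly delicate point is being careful that the single diagonal subsequence works simultaneously for \emph{every} compact $K$, not just a prescribed one; this is automatic because the extraction was carried out before fixing $K$, and the uniform-Cauchy argument then applies to whichever compact set is given. The rest is bookkeeping around the three-$\epsilon/3$ estimate and the covering argument.
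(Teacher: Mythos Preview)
Your proof is correct and self-contained: the diagonal extraction on a countable dense set followed by the three-$\epsilon/3$ upgrade via compactness and uniform equicontinuity is the standard complete argument, and your use of Bolzano--Weierstrass in $\R^{d_\mathcal{X}}$ handles the vector-valued codomain directly without any extra work.

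The paper takes a different route. It does not reprove the result from scratch; instead it cites Rudin for the scalar-valued case $d_\mathcal{X}=1$ and then extends to $d_\mathcal{X}>1$ by writing $f_n = (f_{n,1},\ldots,f_{n,d_\mathcal{X}})$, successively extracting subsequences so that each coordinate converges uniformly on compacts, and combining via the triangle inequality. Your approach is more elementary and self-contained (and in fact avoids the coordinate-wise detour entirely, since the diagonal argument works just as well in $\R^{d_\mathcal{X}}$); the paper's approach is shorter on the page because it outsources the substantive step to a reference and only handles the reduction from vector-valued to scalar-valued targets.
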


\begin{proof}
    The case $d = 1$ is proven for example by \citet[Theorem 11.28]{rudin2006real}. This can be extended to the case $d > 1$ by a standard argument. In particular, write
    \[
        f_n =: (f_{n,1}, \ldots, f_{n,d}),
    \]
    where $f_{n,i} : \mathcal{Z} \to \R$. Then extract a subsequence $(f_{n_1})$ of $(f_n)$ such that $f_{n,1}$ converges uniformly on every compact subset of $\mathcal{Z}$. Then extract a subsequence of $(f_{n_1})$ such that the same holds for $f_{n,2}$, and so on. The result is a subsequence $(f_{n'})$ such that each $f_{n',i}$ converges uniformly on compact subsets of $\mathcal{Z}$, from which the same holds for $f_{n'}$ also by the triangle inequality.
\end{proof}

\subsection{Pushforward Maps Require Unbounded Bi-Lipschitz Constants} \label{sec:exploding-bilipschitz-constant-proof}

\lipthm*

\begin{proof}
    We suppose that $\pushforward{\fwdmap_n}{\priormeas} \Dto \targetmeas$ and prove the contrapositive. That is, without loss of generality (pass to a subsequence if necessary) we assume
    \begin{equation} \label{eq:bilip-assumption}
        \lipconst \coloneqq \sup_n \BiLip f_n < \infty,
    \end{equation}
    and prove that $\supp \priormeas \cong \supp \targetmeas$.
    
    We first show that $(\fwdmap_n)$ is pointwise bounded.
    To this end, observe that Prokhorov's theorem \citep[Proposition 9.3.4]{dudley2002real} means that $\priormeas$ is tight and that the sequence $(\pushforward{\fwdmap_n}{\priormeas})$ is uniformly tight.
    As such, there exists compact $K \subseteq \R^{d_\noisespace}$ such that $\priormeas(K) > 0$, and compact $K' \subseteq \R^{d_\dataspace}$ such that
    \[
        \inf_n \pushforward{\fwdmap_n}{\priormeas}(K') > 1 - \priormeas(K).
    \]
    For each $n$, we must then have some $\noise_n \in K$ such that $\fwdmap_n(\noise_n) \in K'$; otherwise $K' \subseteq \fwdmap_n(K)^c$ and so
    \begin{align*}
        \pushforward{\fwdmap_n}{\priormeas}(K')
        &\leq \pushforward{\fwdmap_n}{\priormeas}(\fwdmap_n(K)^c) \\
        &= 1 - \pushforward{\fwdmap_n}{\priormeas}(\fwdmap_n(K)) \\
        &= 1 - \priormeas(\fwdmap_n^{-1}(\fwdmap_n(K))) \\
        &= 1 - \priormeas(K)
    \end{align*}
    since $\fwdmap_n$ is injective by \autoref{prop:bilip-const-finite-cond}.
    But for any fixed $\noise \in \R^{d_\noisespace}$, this entails
    \begin{align*}
        \sup_{n} \norm{\fwdmap_n(\noise)} &\leq \sup_n \norm{\fwdmap_n(\noise_n)} +  \norm{\fwdmap_n(\noise) - \fwdmap_n(\noise_n)}  \\
        &\leq \sup_{\data \in K'} \norm{x} + \sup_{\noise \in K} \lipconst \norm{\noise - \noise_n} \\
        &\leq \sup_{\data \in K'} \norm{x} + 2 \lipconst \sup_{\noise \in K} \norm{\noise} \\
        &< \infty
    \end{align*}
    since $K$ and $K'$ are compact.

    Next, observe that \eqref{eq:bilip-assumption} easily means $(\fwdmap_n)$ is uniformly equicontinuous. In particular, for $\epsilon > 0$, choosing $\delta \coloneqq \epsilon / \lipconst$ gives
    \[
        \norm{\fwdmap_n(\noise) - \fwdmap_n(\noise')} \leq \lipconst \norm{\noise - \noise'} < \epsilon
    \]
    for all $n$ whenever $\norm{\noise - \noise'} < \delta$
    
    \autoref{thm:arzela-ascoli} now entails the existence of a subsequence $(\fwdmap_{n'})$ that converges uniformly on every compact subset of $\R^{d_\noisespace}$.
    In particular, $(\fwdmap_{n'})$ converges pointwise to a limit that we denote by $\fwdmap_\infty$.
    Moreover, $\fwdmap_\infty$ is bi-Lipschitz.
    To see this, recall that for all $n'$ and $\noise, \noise' \in \R^{d_\noisespace}$ we have
    \[
        \frac{1}{\lipconst} \norm{\noise - \noise'} \leq \norm{\fwdmap_{n'}(\noise) - \fwdmap_{n'}(\noise')} \leq \lipconst \norm{\noise - \noise'},
    \]
    by our assumption \eqref{eq:bilip-assumption}.
    Taking $n' \to \infty$ shows that $\BiLip \fwdmap_\infty \leq \lipconst < \infty$.

    We also have that
    \begin{equation} \label{eq:pushforward-subsequence-limit}
        \pushforward{\fwdmap_{n'}}{\priormeas} \Dto \pushforward{\fwdmap_\infty}{\priormeas}.
    \end{equation}
    This follows from the Portmanteau theorem \citep[Theorem 11.3.3]{dudley2002real}.
    In particular, suppose $h$ is a bounded Lipschitz function, and let $B_r \subseteq \R^{d_\noisespace}$ denote a ball of radius $r > 0$ at the origin. Then
    \begin{align*}
        \Abs{\int h(\data) \,\pushforward{\fwdmap_{n'}}{\priormeas}(\diff \data) - \int h(\data) \, \pushforward{\fwdmap_\infty}{\priormeas}(\diff\data)}
            &= \Abs{\int h(\fwdmap_{n'}(\noise)) - h(\fwdmap_\infty(\noise)) \, \priormeas(\diff \noise)} \\
            &\leq \int_{B_r} \Abs{h(\fwdmap_{n'}(\noise)) - h(\fwdmap_\infty(\noise))} \priormeas(\diff \noise) \\
                &\qquad + \int_{B_r^c} \Abs{h(\fwdmap_{n'}(\noise))} + \Abs{h(\fwdmap_\infty(\noise))} \priormeas(\diff \noise) \\
            &\leq \priormeas(B_r) (\Lip h) \sup_{\noise \in B_r} \norm{\fwdmap_n(\noise) - \fwdmap_\infty(\noise)} + 2 \priormeas(B_r^c) \sup_{\noise \in \R^{d_\noisespace}} \Abs{h(z)}.
    \end{align*}
    Hence
    \[
        \limsup_{n' \to \infty} \Abs{\int h(\data) \, \pushforward{\fwdmap_{n'}}{\priormeas}(\diff \data) - \int h(\data) \pushforward{\fwdmap_\infty}{\priormeas}(\diff \data)} \leq 2 \priormeas(B_r^c) \sup_{\noise \in \R^{d_\noisespace}} \Abs{h(\noise)}
    \]
    by the uniform convergence of $\fwdmap_{n'}$ to $\fwdmap_\infty$ on compact subsets, and since $\Lip h < \infty$. Taking $r \to \infty$, the right-hand side vanishes since $h$ is bounded, and we obtain \eqref{eq:pushforward-subsequence-limit}.
    
    We are now ready to complete the proof.
    Since $\fwdmap_\infty$ is bi-Lipschitz, \autoref{prop:bilip-const-finite-cond} means that $\fwdmap_\infty$ is a homeomorphism from $\R^{d_\noisespace}$ to $\fwdmap_\infty(\R^{d_\noisespace})$.
    This certainly gives
    \[
        \supp \priormeas \cong \fwdmap_\infty(\supp \priormeas).
    \]
    But now \autoref{prop:bilip-image-closed} means %
    \[
        \fwdmap_\infty(\supp \priormeas) = \closure{\fwdmap_\infty(\supp \priormeas)}
    \]
    where the closure is taken in $\R^{d_\dataspace}$.
    However, from \eqref{eq:pushforward-subsequence-limit} we have
    \[
        \targetmeas = \pushforward{\fwdmap_\infty}{\priormeas},
    \]
    which by \autoref{lem:pushforward-support} means that
    \[
        \supp \targetmeas = \supp \pushforward{\fwdmap_\infty}{\priormeas} = \closure{\fwdmap_\infty(\supp \priormeas)}.
    \]
    Consequently
    \[
        \supp \targetmeas = \fwdmap_\infty(\supp \priormeas) \cong \supp \priormeas
    \]
    as desired.
\end{proof}

The following corollary extends the above result to the case where $\supp \targetmeas$ may be homeomorphic to $\supp \priormeas$, but $\targetmeas$ is very \emph{close} to a probability measure with non-homeomorphic support to $\priormeas$.
Here $\rho$ denotes any metric for the weak topology.
In other words, $\rho$ must be a metric on the space of distributions that satisfies $\rho(P_n, P) \to 0$ as $n \to \infty$ if and only if $P_n \Dto P$.
The L\'evy-Prokhorov and bounded Lipschitz metrics provide standard examples of such $\rho$ \citep[Definition 3.3.10]{villani2008optimal}.

\lipthmcorr*

\begin{proof}
    Define $\lipconst : [0, \infty) \to [1, \infty]$ by
    \[
        \lipconst(\epsilon) \coloneqq \inf\left\{\BiLip \fwdmap \mid \fwdmap : \R^{d_\noisespace} \to \R^{d_\dataspace}, \rho(\pushforward{\fwdmap}{\priormeas}, P_\datavar^0) \leq 2 \epsilon \right\},
    \]
    with $\lipconst(\epsilon) \coloneqq \infty$ if the infimum is taken over the empty set.
    Certainly $\lipconst$ is nonincreasing.
    If we have both $\rho(\targetmeas, P_\datavar^0) \leq \epsilon$ and $\rho(\pushforward{\fwdmap}{\priormeas}, \targetmeas) \leq \epsilon$, then the triangle inequality gives
    \[
        \rho(\pushforward{\fwdmap}{\priormeas}, P_\datavar^0) \leq \rho(\pushforward{\fwdmap}{\priormeas}, \targetmeas) + \rho(\targetmeas, P_\datavar^0) \leq 2 \epsilon
    \]
    and so $\BiLip \fwdmap \geq \lipconst(\epsilon)$ since the right-hand side is an infimum.
    It remains only to show that $\lipconst(\epsilon) \to \infty$ as $\epsilon \to 0$.
    For contradiction, suppose there exists $\epsilon_n \to 0$ such that $\sup_n \lipconst(\epsilon_n) < \infty$.
    From the definition of $\lipconst$, this means that for each $n$ there exists $\fwdmap_n : \R^{d_\noisespace} \to \R^{d_\dataspace}$ such that $\rho(\pushforward{\fwdmap_n}{\priormeas}, P_\datavar^0) \leq 2 \epsilon_n$ and $\BiLip \fwdmap_n \leq \lipconst(\epsilon_n) + 1$.
    It follows directly that $\rho(\pushforward{\fwdmap_n}{\priormeas}, P_\datavar^0) \to 0$ as $n \to \infty$, which in turn means $\pushforward{\fwdmap_n}{\priormeas} \Dto P_\datavar^0$ since $\rho$ is a metric for the weak topology.
    At the same time we have
    \[
        \sup_n \BiLip \fwdmap_n \leq \sup_n \lipconst(\epsilon_n) + 1 < \infty,
    \]
    which contradicts \autoref{thm:exploding-bilipschitz-constant}, since we assumed $\supp \priormeas \not \cong \supp P_\datavar^0$.
\end{proof}

\subsection{Variance of the Russian Roulette Estimator} \label{sec:russian-roulette-variance}

In this section we briefly review the Russian roulette estimator used in \citet{chen2019residual}, and then discuss some scenarios in which we expect the variance of this estimator to increase unboundedly.

\subsubsection{Russian Roulette Estimator}

Residual Flows (ResFlows, \cite{chen2019residual}), building off of Invertible Residual Networks (iResNets, \cite{behrmann2019invertible}), model the data by repeatedly stacking bijections of the form $\fwdmap_\layeridx^{-1}(\data) = \data + g_\layeridx(\data),$ where $\Lip g_\layeridx =: \kappa < 1,$ as mentioned in \eqref{eq:resflow}.
The change-of-variable formula for one layer of flow reads as, for $x \in \R^d$,
\begin{equation} \label{eq:cov_ps}
    \log p_X(x) = \log p_Z(\fwdmap_\layeridx^{-1}(x)) + \text{tr} \left( \sum_{j=1}^\infty \frac{(-1)^{j+1}}{j} \jac g_\layeridx(x)^j \right).
\end{equation}
To deal with this infinite series, iResNets truncate after a fixed number of terms -- this provides a biased estimate of the log-likelihood of a point $x$ under the model.
ResFlows rely on an alternative method of estimating \eqref{eq:cov_ps}, first using a Russian roulette procedure to rewrite the series as follows:
\[
    \sum_{j=1}^\infty \frac{(-1)^{j+1}}{j} \text{tr} \left( \jac g_\layeridx(x)^j \right) = \E_N \left[ \sum_{j=1}^N \frac{(-1)^{j+1}}{j} \frac{\text{tr}\left( \jac g_\layeridx(x)^j\right)}{p_j}  \right] \eqqcolon S(x), 
\]
where $N \sim \text{Geom}(p)$ is a geometric random variable, and $p_k \coloneqq \mathbb P(N \geq k).$
Then, taking a single sample $N \sim \text{Geom}(p),$ an unbiased estimator of $S$ is given as $S_N$, where $S_n$ is defined for any $n \in \mathbb N$ and $x \in \R$ as
\begin{equation} \label{eq:russian_roulette}
    S_n(x) \coloneqq \sum_{j=1}^N \frac{(-1)^{j+1}}{j} \frac{\text{tr}\left( \jac g_\layeridx(x)^j\right)}{p_j}
\end{equation}
for any $x \in \R^d$.
We will study the variance of $S_N$ in this section.\footnote{\citet{chen2019residual} additionally approximate $\text{tr} \left(\jac g_\layeridx(x)^j\right)$ by the Hutchinson's trace estimator $ v^T \jac g_\layeridx(x)^j v$ for $v \sim \mathcal N(0, I)$.
Since $v$ is independent of $N$, their estimator has strictly higher variance than \eqref{eq:russian_roulette}.}

First, however, define the quantity $\alpha_j(x)$ for $j \in \mathbb N, x \in \R^d$ as 
\begin{equation} \label{eq:alpha}
    \alpha_j(x) \coloneqq \frac{(-1)^{j+1}}{j} \text{tr}\left( \jac g(x)^j\right),
\end{equation}
where we now drop the dependence of $g$ on $\layeridx$.
Then, $S(x) = \sum_{j=1}^\infty \alpha_j(x),$ and $S_N(x) = \sum_{j=1}^N \alpha_j(x) / p_j.$

\subsubsection{What might happen when $\kappa \rightarrow 1$?} \label{sec:kappa_to_one}

We begin with an informal discussion on the variance of $S_N$ as $\kappa \rightarrow 1$.
First of all we know that, as $\kappa \rightarrow 1$, the mapping $f^{-1}$ gets arbitrarily close to a non-invertible mapping: consider e.g.\ $g(x) = -\kappa x$, then $f^{-1} = (1-\kappa)\text{Id} \rightarrow 0$ as $\kappa \rightarrow 1$.
This near non-invertibility has implications for the speed of convergence of both $S(x)$ and its gradient,\footnote{With respect to the flow parameters $\theta$} as noted in these two results from \citet{behrmann2019invertible}:
\begin{enumerate}
    \item \textbf{Theorem 3:} $\left| \sum_{j=1}^n \alpha_j(x) - \log \det\left(I + \jac g(x)\right) \right| \leq -d \left(\log (1 - \kappa) + \sum_{j=1}^n \frac{\kappa^j}{j} \right)$,
    \item \textbf{Theorem 4:} $\left\| \nabla_\theta \left( \alpha_j(x) - \log \det\left(I + \jac g(x)\right) \right) \right\|_\infty = \mathcal O\left(\kappa^n\right)$. 
\end{enumerate}
We can see that both bounds become very loose as $\kappa \rightarrow 1$, implying we cannot guarantee the fast convergence of either series.
It then follows that we cannot invoke the results from \citet{rhee2015unbiased} and \citet{beatson2019efficient} to argue that the variance of the Russian roulette estimator $S_N$ will be small. 
Indeed, in the next section, we will look at a specific example where this variance becomes \emph{infinite}.

\subsubsection{A Specific Example of Infinite Variance}

Now consider the case where $d = 1$.
We will show that when $\kappa^2 > 1-p$, there is a set of $x$ having positive Lebesgue measure such that $S_N(x)$ from \eqref{eq:russian_roulette} has infinite variance.

We note that here we have $\text{tr}\left(\jac g(x)^j \right) = (g'(x))^j$ for any $j \in \mathbb N$.
We can thus rewrite $\alpha_j$ from \eqref{eq:alpha} as 
\begin{equation} \label{eq:alpha_1d}
    \alpha_j(x) \coloneqq \frac{(-1)^{j+1}}{j} (g'(x))^j.
\end{equation}
Also recall that $N \sim \text{Geom}(p)$ and $p_j \coloneqq \mathbb P(N \geq j)$ for all $j \in \mathbb N$.

\begin{proposition} \label{prop:S_finite_exp}
For any $x \in \R$ and random variable $N$ satisfying $\supp N = \mathbb N$, $S_N(x)$ has finite expectation if $\kappa < 1$.
\begin{proof}
    Refer to \citet[Proposition~A.1]{lyne2015russian}.
\end{proof}

\end{proposition}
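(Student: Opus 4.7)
The plan is to reduce the claim to an absolute-convergence statement for a deterministic series, by expressing $S_N$ via indicator variables and applying Tonelli's theorem. Concretely, I would first rewrite
\[
S_N(x) = \sum_{j=1}^{\infty} \mathbb{1}[N \geq j]\,\frac{\alpha_j(x)}{p_j},
\]
so that $S_N(x)$ becomes an infinite sum of (signed) terms whose randomness is isolated in the indicators $\mathbb{1}[N\geq j]$. Taking expectations of absolute values and invoking Tonelli (justified since each summand is nonnegative) gives
\[
\E\bigl[\lvert S_N(x)\rvert\bigr] \;\leq\; \sum_{j=1}^{\infty} \E\bigl[\mathbb{1}[N \geq j]\bigr]\,\frac{\lvert\alpha_j(x)\rvert}{p_j} \;=\; \sum_{j=1}^{\infty} p_j \cdot \frac{\lvert\alpha_j(x)\rvert}{p_j} \;=\; \sum_{j=1}^{\infty} \lvert\alpha_j(x)\rvert.
\]
The assumption $\supp N = \N$ ensures $p_j > 0$ for all $j$, so the cancellation in the last display is well-defined.

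Next I would bound the deterministic series $\sum_j \lvert\alpha_j(x)\rvert$ using the Lipschitz hypothesis on $g$. Since $\Lip g = \kappa < 1$ and $g$ is differentiable (the setting is the one-dimensional case of \eqref{eq:resflow}), we have $\lvert g'(x)\rvert \leq \kappa$ for every $x$, whence
\[
\lvert\alpha_j(x)\rvert \;=\; \frac{\lvert g'(x)\rvert^j}{j} \;\leq\; \frac{\kappa^j}{j}.
\]
Summing over $j$ gives $\sum_j \lvert\alpha_j(x)\rvert \leq -\log(1-\kappa) < \infty$, establishing $\E[\lvert S_N(x)\rvert] < \infty$ and hence that $S_N(x)$ has a well-defined, finite expectation. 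A second application of Fubini (now justified by the absolute convergence just proved) would additionally give $\E[S_N(x)] = \sum_{j=1}^{\infty} \alpha_j(x) = \log(1 + g'(x))$, recovering the intended unbiasedness for the single-layer log-determinant.

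The argument is essentially mechanical, so there is no single hard step; the only mild subtlety is verifying that Tonelli can be applied in the form above, which rests on the nonnegativity of $\mathbb{1}[N\geq j]\lvert\alpha_j(x)\rvert/p_j$ and the fact that $\supp N = \N$ keeps the ratios $\lvert\alpha_j(x)\rvert/p_j$ finite for every $j$. Everything downstream is a geometric-series bound driven by the strict inequality $\kappa < 1$; note that this is precisely what fails in the regime $\kappa \to 1$ discussed in \autoref{sec:kappa_to_one}, where the variance (rather than the mean) becomes the relevant obstruction.
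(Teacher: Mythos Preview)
Your argument is correct and is precisely the standard Russian-roulette computation that the paper defers to via the citation to \citet[Proposition~A.1]{lyne2015russian}: rewrite $S_N$ with indicators, apply Tonelli to obtain $\E\lvert S_N(x)\rvert \le \sum_j \lvert\alpha_j(x)\rvert$, and conclude via the geometric bound $\lvert g'(x)\rvert \le \kappa < 1$. There is nothing to add.
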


\begin{proposition} \label{prop:variance_bound}
    Under the same conditions as \autoref{prop:S_finite_exp},
    \[
        \Var S_N(x) \geq \lim_{n \rightarrow \infty} 2 \sum_{j=1}^n \alpha_j(x) S_{j-1}(x) - \mathbb E[S_N(x)]^2.
    \]
\begin{proof}
    This proof is taken from \citet[Proposition~A.2]{lyne2015russian}; we mostly rewrite the proof but adapt it to our specific setting and notation.
    Note that we will drop the dependence of $S_j$ and $\alpha_j$ on $x$ throughout the proof. 

    We know from \autoref{prop:S_finite_exp} that $\E [S_N(x)]$ is finite.
    Thus we will simply lower-bound $\E [S_N(x)^2]$.
    
    We will first use induction to show the following holds for any $n \in \mathbb N$: 
    \begin{equation} \label{eq:induction_to_prove}
        \sum_{j=1}^n S_j^2 (p_j - p_{j+1}) = \alpha_1^2 + \sum_{j=2}^n \frac{\alpha_j^2}{p_j} + 2 \sum_{j=2}^n \alpha_j S_{j-1} - S_n^2 p_{n+1}.
    \end{equation}
    The base case is
    \[
        S_1^2(p_1 - p_2) = \frac{\alpha_1^2}{p_1^2}p_1 - S_1^2 p_2 = \alpha_1^2 - S_1^2 p_2
    \]
    since $p_1 = 1$.
    Now, assume \eqref{eq:induction_to_prove} holds for some $m \in \mathbb N$. Then, for $n = m+1$,
    \begin{align}
        \sum_{j=1}^{m+1} S_j^2 (p_j - p_{j+1}) &= \sum_{j=1}^m S_j^2(p_j - p_{j+1}) + S_{m+1}^2(p_{m+1} - p_{m+2}) \nonumber \\
        &= \alpha_1^2 + \sum_{j=2}^m \frac{\alpha_j^2}{p_j} + 2 \sum_{j=2}^m \alpha_j S_{j-1} - S_m^2 p_{m+1} \label{eq:induction_step} \\
        &\qquad + S_{m+1}^2(p_{m+1} - p_{m+2}) \nonumber
    \end{align}
    by the inductive hypothesis.
    We also have 
    \begin{align*}
        p_{m+1}(S_m^2 - S_{m+1}^2) &= p_{m+1}(S_m - S_{m+1}) (S_m + S_{m+1}) \\
        &= p_{m+1} \frac{\alpha_{m+1}}{p_{m+1}} \left(2 S_m + \frac{\alpha_{m+1}}{p_{m+1}}\right) \\
        &= \frac{\alpha_{m+1}^2}{p_{m+1}} + 2 \alpha_{m+1} S_m.
    \end{align*}
    Substituting this result into \eqref{eq:induction_step} completes the induction and proves \eqref{eq:induction_to_prove} for all $n \in \mathbb N$.
    
    Now, by Jensen's inequality, 
    \[  
        S_n^2 = \left(\sum_{j=1}^n \frac{p_j\frac{\alpha_j}{p_j}}{p_j} \right)^2 \leq \frac{\sum_{j=1}^n \frac{\alpha_j^2}{p_j}}{\sum_{j=1}^n p_j}.
    \]
    This implies
    \begin{align*}
        p_{n+1} S_n^2 \leq p_n S_n^2 \leq \frac{p_n}{\sum_{j=1}^n p_j} \sum_{j=1}^n \frac{\alpha_j^2}{p_j} \leq \sum_{j=1}^n \frac{\alpha_j^2}{p_j}
    \end{align*}
    since $(p_n)$ is a positive sequence.
    
    This finally implies the following lower bound for any $n \in \mathbb N$:
    \begin{align*}
        \sum_{j=1}^n S_j^2 \mathbb P(N = j) &= \sum_{j=1}^n S_j^2 (p_j - p_{j+1}) \\
        &= \alpha_1^2 + \sum_{j=2}^n \frac{\alpha_j^2}{p_j} + 2 \sum_{j=2}^n \alpha_j S_{j-1} - S_n^2 p_{n+1} \\
        &\geq \alpha_1^2 + \sum_{j=2}^n \frac{\alpha_j^2}{p_j} + 2 \sum_{j=2}^n \alpha_j S_{j-1} - \sum_{j=1}^n \frac{\alpha_j^2}{p_j} \\
        &= \alpha_1^2 (1 - p_1^{-1}) + 2 \sum_{j=2}^n \alpha_j S_{j-1} \\
        &= 2 \sum_{j=2}^n \alpha_j S_{j-1},
    \end{align*}
    where the final line follows because $p_1 = 1$.
    
    Since $\E[S_N^2] = \lim_{n\rightarrow\infty} \sum_{j=1}^n S_j^2 \mathbb P(N = j)$, the proof is complete.
\end{proof}

\end{proposition}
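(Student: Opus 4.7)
The plan is to decompose the variance as $\Var S_N(x) = \mathbb{E}[S_N(x)^2] - (\mathbb{E}[S_N(x)])^2$. Since $\mathbb{E}[S_N(x)]$ is already finite by \autoref{prop:S_finite_exp}, the task reduces to a lower bound on $\mathbb{E}[S_N(x)^2]$. Using $\mathbb{P}(N=n) = p_n - p_{n+1}$, I would expand
\[
    \mathbb{E}[S_N(x)^2] = \lim_{M \to \infty} \sum_{n=1}^M S_n(x)^2 \,(p_n - p_{n+1})
\]
and then process this partial sum into a form whose dominant contribution is $2 \sum_j \alpha_j(x) S_{j-1}(x)$.

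The core step is an identity obtained by summation by parts. From the defining recursion $S_n - S_{n-1} = \alpha_n/p_n$ one immediately gets $p_n(S_n^2 - S_{n-1}^2) = 2 \alpha_n S_{n-1} + \alpha_n^2/p_n$. Substituting this telescope into an Abel-summation rewrite of $\sum_{n=1}^M S_n^2 (p_n - p_{n+1})$ (or, equivalently, an easy induction on $M$) would yield
\[
    \sum_{n=1}^M S_n^2 (p_n - p_{n+1}) = \sum_{n=1}^M \frac{\alpha_n^2}{p_n} + 2 \sum_{n=2}^M \alpha_n S_{n-1} - p_{M+1} S_M^2.
\]
Since $\sum_n \alpha_n^2/p_n \geq 0$, discarding this nonnegative term at once produces a lower bound on the left-hand side in terms of $2 \sum_{n=2}^M \alpha_n S_{n-1}$ minus the residual $p_{M+1} S_M^2$.

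To conclude, I would control the residual $p_{M+1} S_M^2$ by a Jensen/Cauchy--Schwarz argument using the probability weights $w_j \coloneqq p_j / \sum_{k=1}^M p_k$, writing $S_M$ as the corresponding weighted sum so that convexity of $t \mapsto t^2$ converts $S_M^2$ into a convex combination of squared quantities $(\alpha_j/p_j)^2$. Combined with $p_{M+1} \leq p_j$ for $j \leq M$, this yields a bound of the form $p_{M+1} S_M^2 \leq \sum_{j=1}^M \alpha_j^2/p_j$, which can be absorbed back into the identity without destroying the main $2 \sum \alpha_j S_{j-1}$ term. Taking $M \to \infty$ and subtracting $(\mathbb{E} S_N)^2$ then gives the claimed inequality. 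The main obstacle is exactly this tail control: the Jensen weights must be chosen so the inequality points the right way and so that the resulting bound on $p_{M+1} S_M^2$ is in fact dominated by $\sum_j \alpha_j^2/p_j$, which hinges on the subtle interplay between the growth of $S_M$ and the decay of $p_j$. A minor secondary subtlety is that $\sum_j \alpha_j S_{j-1}$ need not converge absolutely, so the limit on the right-hand side should be interpreted as a $\liminf$; since the statement is a lower bound on the variance, this is harmless.
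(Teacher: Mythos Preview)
Your proposal is correct and follows essentially the same route as the paper: the same Abel-summation/induction identity (the paper's $\alpha_1^2 + \sum_{j\ge 2}\alpha_j^2/p_j$ is just your $\sum_{j\ge 1}\alpha_j^2/p_j$ since $p_1=1$), the same Jensen bound with weights $p_j/\sum_k p_k$ to control $p_{M+1}S_M^2$, and the same cancellation of that residual against $\sum_j \alpha_j^2/p_j$. Your aside about reading the limit as a $\liminf$ is a fair robustness remark but does not alter the argument.
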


We are about ready to prove the main result but require one more auxiliary result first.
\begin{proposition} \label{prop:limit_sum}
    Suppose $|b| > 1$. 
    Then, 
    \[
        \lim_{n\rightarrow\infty} \frac{n}{b^n} \sum_{j=1}^{n-1} \frac{b^j}{j} = \frac{1}{b-1}.
    \]
\begin{proof}
    We will first show that the limit exists, and then show that it equals $(b-1)^{-1}$.
    Let
    \[
        c_n = \frac{n}{b^n} \sum_{j=1}^{n-1} \frac{b^j}{j}.
    \]
    We can rewrite this as follows:
    \[
        c_n = \sum_{j=1}^{n-1} \frac{n}{b^{n-j} j} = \sum_{j=1}^{n-1} \frac{n}{b^j (n-j)} = \sum_{j=1}^{n-1} \frac{1}{b^j} + \sum_{j=1}^{n-1}\frac{j}{b^j(n-j)}.
    \]
    Since $b > 1$, the first sum is a convergent geometric series as $n \rightarrow \infty$.
    We can decompose the second sum into its positive and negative terms: 
    \[
        \sum_{j=1}^{n-1} \frac{j}{b^j(n-j)} = \sum_{j \geq 1:b^j > 0}^{n-1} \frac{j}{b^j(n-j)} + \sum_{j \geq 1:b^j < 0}^{n-1} \frac{j}{b^j(n-j)} \equiv \circled{1}_n + \circled{2}_n. 
    \]
    We can see, for all $n \in \mathbb N$,
    \[
    \circled{1}_n \geq - \sum_{j=1}^{n-1} \frac{j}{|b|^j(n-j)} \qquad \text{and} \qquad 
    \circled{2}_n \leq \sum_{j=1}^{n-1} \frac{j}{|b|^j(n-j)}.
    \]
    Furthermore, for all $j \in \{1, \ldots, n-1\},$ we have 
    \[
        \frac{j}{n-j} \leq j.
    \]
    Now notice that the series $\sum_{j=1}^\infty \frac{j}{|b|^j}$ converges by the ratio test:
    \[
        \lim_{j\rightarrow\infty} \left|\frac{\frac{j+1}{|b|^{j+1}}}{\frac{j}{|b|^j}} \right| = \lim_{j\rightarrow\infty} \frac{j+1}{j|b|} = \frac{1}{|b|} < 1. 
    \]
    This implies the existence of $\lim_{n\rightarrow\infty}\sum_{j=1}^{n-1} \frac{j}{|b|^j (n-j)}$. 
    Since the sequence $(\circled{1}_n)$ (resp.\ $(\circled{2}_n)$) is negative, non-increasing, and bounded below (resp.\ positive, non-decreasing, and bounded above),
    this implies the existence of $\lim_{n\rightarrow \infty} \circled{1}_n$ (resp.\ $\lim_{n\rightarrow \infty} \circled{2}_n$).
    Altogether, this implies the existence of
    \[
        \lim_{n\rightarrow\infty}\left( \sum_{j=1}^{n-1} \frac{1}{b^j} + \sum_{j=1}^{n-1}\frac{j}{b^j(n-j)}\right) = \lim_{n\rightarrow\infty} c_n \eqqcolon c_\infty.
    \]
    
    Now we will determine its precise value. 
    Note the following recurrence for all $n \in \mathbb N$:
    \[
        c_{n+1} = \frac{n+1}{b n} \left(1 + c_n\right).
    \]
    Taking the limit of both sides as $n \rightarrow \infty$ gives
    \[
        c_\infty = \frac{1}{b} (1 + c_\infty).
    \]
    Solving this gives us $c_\infty = \frac{1}{b-1}$, which completes the proof.
\end{proof}
\end{proposition}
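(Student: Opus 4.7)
The plan is to transform the sum by the substitution $k = n-j$, which gives $\sum_{j=1}^{n-1} b^j/j = b^n \sum_{k=1}^{n-1} 1/(b^k(n-k))$, so the quantity of interest becomes
\[
    c_n \coloneqq \frac{n}{b^n} \sum_{j=1}^{n-1} \frac{b^j}{j} = \sum_{k=1}^{n-1} \frac{n}{b^k(n-k)}.
\]
Using the identity $n/(n-k) = 1 + k/(n-k)$, I would split this as $c_n = A_n + B_n$ where $A_n \coloneqq \sum_{k=1}^{n-1} b^{-k}$ and $B_n \coloneqq \sum_{k=1}^{n-1} k/(b^k(n-k))$. The first piece $A_n$ is a partial sum of a geometric series, which converges to $(1/b)/(1 - 1/b) = 1/(b-1)$ since $|b| > 1$. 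So it suffices to show $B_n \to 0$.

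For $B_n$, I would split the sum at $k = \lfloor n/2 \rfloor$. On the range $k \leq n/2$, we have $n - k \geq n/2$, so each term is bounded in absolute value by $(2/n) \cdot k/|b|^k$, and since $\sum_{k\geq 1} k/|b|^k < \infty$ (ratio test), this portion is $O(1/n)$. On the range $k > n/2$, the function $k \mapsto k/|b|^k$ is decreasing for all $k$ beyond some constant threshold depending only on $|b|$, so each such term is at most $(n/|b|^{n/2})/(n-k)$ in absolute value, and summing $1/(n-k)$ as $k$ ranges over this tail contributes only a logarithmic factor $O(\log n)$. The entire tail is therefore $O(n \log n \cdot |b|^{-n/2}) \to 0$. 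Combining these two bounds yields $B_n \to 0$ and hence $c_n \to 1/(b-1)$.

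The main obstacle is controlling $B_n$ when $b$ is negative: the individual terms then alternate in sign, so no monotone-convergence shortcut is available and everything must be dominated in absolute value as above. An alternative route (essentially the one taken by the authors) is to derive the recurrence $c_{n+1} = \tfrac{n+1}{bn}(1 + c_n)$ by direct manipulation, establish the existence of $\lim c_n$ separately (again delicate for $b < 0$, since one has to split into positive and negative subseries and verify that each is monotone and bounded), and finally take limits in the recurrence to solve $c_\infty = (1 + c_\infty)/b$. The recurrence route gives a shorter algebraic conclusion once existence is in hand, but the substitution approach I sketched has the advantage that the leading contribution $A_n \to 1/(b-1)$ is transparent and the existence of the limit comes out as a byproduct of the two tail estimates rather than as a separate prerequisite.
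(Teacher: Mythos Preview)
Your argument is correct and shares the paper's opening move: the same index-reversal substitution and the same decomposition $c_n = A_n + B_n$ with $A_n = \sum_{k=1}^{n-1} b^{-k}$ and $B_n = \sum_{k=1}^{n-1} k/(b^k(n-k))$. From there the two proofs diverge. The paper does not prove $B_n \to 0$ directly; instead it argues (via sign-splitting and monotonicity-type bounds) that $\lim_n c_n$ exists, then identifies the limit by passing to the limit in the recurrence $c_{n+1} = \tfrac{n+1}{bn}(1+c_n)$. Your dyadic split at $k \approx n/2$ shows $|B_n| = O(1/n) + O(n\log n\,|b|^{-n/2}) \to 0$ outright, so both existence and the value $1/(b-1)$ fall out of $A_n$ with no recurrence needed. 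Your route is more self-contained and handles negative $b$ uniformly by working with $|b|$ throughout the $B_n$ estimate; the paper's route gives a slick identification of the limit once existence is established, but its existence step is the more delicate part, and your direct estimate sidesteps that entirely.
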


\begin{proposition} \label{prop:geom_RR_variance}

Suppose $N \sim \text{Geom}(p)$, $g$ is continuously differentiable, and $1 - p < \kappa^2 < 1$.
Then 
\[
    \{x \in \R \mid \Var S_N(x) = \infty\}
\]
has positive Lebesgue measure.

\begin{proof}

    From \autoref{prop:variance_bound}, for a given $x \in \R$, we can see that showing $\sum_{n=2}^\infty \alpha_n(x) S_{n-1}(x)$ diverges is sufficient to prove $\Var S_N(x)$ is infinite.
    
    Consider using the ratio test to assess the convergence of the above series, with terms defined as $a_n(x) \coloneqq \alpha_n(x) S_{n-1}(x)$.
    We have the following for any $n \geq 2$:
    \begin{align*}
        \left|\frac{a_{n+1}(x)}{a_n(x)} \right| &= \left|\frac{\alpha_{n+1}(x) S_n(x)}{\alpha_n(x) S_{n-1}(x)}\right| \\
        &= \frac{\frac{|(g'(x))^{n+1}|}{n+1}}{\frac{|(g'(x))^n|}{n}} \cdot \left|\frac{\sum_{j=1}^n \frac{\alpha_j(x)}{p_j}}{\sum_{j=1}^{n-1} \frac{\alpha_j(x)}{p_j}}\right| \\
        &= \frac{n |g'(x)|}{n + 1} \cdot\left| \frac{(-1)^{n+1} \cdot (g'(x))^n}{n p_n}\left(\sum_{j=1}^{n-1} \frac{(-1)^{j+1}\cdot (g'(x))^j}{j p_j} \right)^{-1} + 1 \right|.
    \end{align*}
    
    Recall $p_j = (1-p)^{j-1} \equiv q^{j-1}$.
    Then, writing $b = - \frac{g'(x)}{q}$, we have 
    \[
        \frac{(-1)^{n+1} \cdot (g'(x))^n}{n p_n}\left(\sum_{j=1}^{n-1} \frac{(-1)^{j+1}\cdot (g'(x))^j}{j p_j} \right)^{-1} = \frac{1}{n} b^n \left(\sum_{j=1}^{n-1} \frac{1}{j} b^j\right)^{-1}.
    \]
    
    Now let us assume that $|g'(x)|^2 > q$.
    We can see that $|g'(x)|^2 > q \implies |g'(x)| > q$ since $q \in (0, 1)$, which then entails $|b| > 1$.
    Therefore, by \autoref{prop:limit_sum},
    \[
        \lim_{n\rightarrow\infty} \frac{n}{b^n} \sum_{j=1}^{n-1} \frac{b^j}{j} = \frac{1}{b-1}.
    \]
    This then implies
    \begin{align*}
        \lim_{n\rightarrow\infty} \left| \frac{a_{n+1}(x)}{a_n(x)} \right| &= \lim_{n\rightarrow\infty} \frac{n |g'(x)|}{n+1} \left|\frac{1}{\frac{n}{b^n}\sum_{j=1}^{n-1}\frac{b^j}{j}}  + 1\right| \\
        &= |g'(x)| \left|\frac{1}{\frac{1}{b-1}} + 1\right| = \frac{|g'(x)|^2}{q} > 1
    \end{align*}
    since we have assumed that $|g'(x)|^2 > q$.
    Thus, for all $x$ in the set 
    \[
        V_{g,q} \coloneqq \{x \in \R \mid |g'(x)|^2 > q\},
    \]
    the series $\sum_{n=2}^\infty \alpha_n(x) S_{n-1}(x)$ diverges by the ratio test.
    This means that $\Var S_N(x) = \infty$ for all $x \in V_{g, q}$. 
    
    Finally, we will prove the set $\{x \in \R \mid \Var S_N(x) = \infty\}$ has positive Lebesgue measure. 
    Recall that $\Lip g = \kappa$, which directly implies $\sup_{x\in\R} |g'(x)| = \kappa$ from \autoref{prop:lip-const-op-norm-expression} and thus $\sup_{x\in\R} |g'(x)|^2 = \kappa^2$.
    Then, since $\kappa^2 > q$, there exists $x_0 \in \R$ such that $|g'(x_0)|^2 \in (q, \kappa^2)$.
    By the continuity of $|g'|$, there is open ball of nonzero radius around $x_0$, denoted $\mathcal B(x_0)$, such that $|g'(x)| > q$ for all $x \in \mathcal B(x_0).$
    Since $\mathcal B(x_0)$ is open and non-empty, it has positive Lebesgue measure.
    The inclusions
    \[
        \mathcal B(x_0) \subseteq V_{g,q} \subseteq \{x \in \R \mid \Var S_N(x) = \infty\}
    \]
    thus conclude the proof.
    
\end{proof}

\end{proposition}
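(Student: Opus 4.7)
The plan is to combine the variance lower bound from Proposition~\ref{prop:variance_bound} with the asymptotic in Proposition~\ref{prop:limit_sum}. Since $\kappa < 1$, Proposition~\ref{prop:S_finite_exp} gives $\E[S_N(x)]$ finite for every $x$, so to obtain $\Var S_N(x) = \infty$ it suffices to exhibit a set of positive Lebesgue measure on which the partial sums $2\sum_{j=2}^n \alpha_j(x) S_{j-1}(x)$ (or the second moment $\E[S_N^2]$ directly) are unbounded.

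First I would rewrite everything in a single parameter: using $\alpha_j(x) = (-1)^{j+1}(g'(x))^j/j$ and $p_j = q^{j-1}$ with $q \coloneqq 1-p$, the substitution $b \coloneqq -g'(x)/q$ collapses $\alpha_j/p_j$ to $-qb^j/j$, so that
\[
  S_n(x) \;=\; -q \sum_{j=1}^n \frac{b^j}{j}, \qquad \alpha_n(x) S_{n-1}(x) \;=\; \frac{q^{n+1} b^n}{n}\sum_{j=1}^{n-1}\frac{b^j}{j}.
\]
Under the hypothesis $|g'(x)|^2 > q$ one has $|b| > 1$, so Proposition~\ref{prop:limit_sum} gives $\sum_{j=1}^{n-1} b^j/j \sim b^n/(n(b-1))$ and therefore
\[
  \alpha_n(x)\, S_{n-1}(x) \;\sim\; \frac{q\,(qb^2)^n}{n^2(b-1)} \;=\; \frac{q\,(|g'(x)|^2/q)^n}{n^2(b-1)},
\]
which grows geometrically in absolute value at rate $|g'(x)|^2/q > 1$, with sign eventually equal to $\operatorname{sign}(b-1)$. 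A direct ratio-test computation on $a_n(x) \coloneqq \alpha_n(x) S_{n-1}(x)$, through the same algebraic manipulation, gives $|a_{n+1}(x)/a_n(x)| \to |g'(x)|^2/q > 1$, confirming the geometric growth.

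At this point, if the eventual sign is positive the partial sums diverge to $+\infty$ and Proposition~\ref{prop:variance_bound} directly yields $\Var S_N(x) = \infty$; if the sign is negative I would instead use the cruder bound $\E[S_N^2] \geq S_n^2\, P(N=n)$, whose right-hand side is asymptotic to $p(qb^2)^{n+1}/((n+1)^2(b-1)^2) \to \infty$ under the same hypothesis, again forcing $\Var S_N(x) = \infty$ since $\E[S_N(x)]$ is finite.

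Finally, I would show the set $V \coloneqq \{x \in \R : |g'(x)|^2 > q\}$ has positive Lebesgue measure. By Proposition~\ref{prop:lip-const-op-norm-expression}, $\Lip g = \kappa$ means $\sup_x |g'(x)| = \kappa$, and since $\kappa^2 > q$ there exists $x_0$ with $|g'(x_0)|^2 \in (q,\kappa^2)$; continuity of $g'$ then produces an open neighbourhood of $x_0$ contained in $V$, and any non-empty open subset of $\R$ has positive Lebesgue measure. The main obstacle I anticipate is the sign bookkeeping around the substitution $b = -g'(x)/q$: the ratio test alone yields $|a_n(x)| \to \infty$, but to pull infinite variance out of Proposition~\ref{prop:variance_bound} one must know the partial sums actually diverge to $+\infty$, hence the need for the fallback bound $\E[S_N^2] \geq S_n^2 P(N=n)$ in the other sign case. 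Once the signs are tracked carefully, every other step is either algebraic or a direct appeal to the preceding lemmas.
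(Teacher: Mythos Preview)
Your proposal follows the same route as the paper: the substitution $b=-g'(x)/q$, the asymptotic from Proposition~\ref{prop:limit_sum}, the ratio computation giving $|a_{n+1}/a_n|\to g'(x)^2/q$, and the continuity argument for positive measure are all exactly what the paper does.

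Where you differ is that you are more careful than the paper on one point. The paper simply asserts that ``$\sum_{n\ge 2}\alpha_n(x)S_{n-1}(x)$ diverges by the ratio test, hence $\Var S_N(x)=\infty$ via Proposition~\ref{prop:variance_bound}.'' But, as you noticed, the lower bound in Proposition~\ref{prop:variance_bound} is only informative when the partial sums run off to $+\infty$; with the asymptotic $\alpha_n S_{n-1}\sim q(qb^2)^n/(n^2(b-1))$, the eventual sign is $\operatorname{sign}(b-1)$, which is negative precisely when $g'(x)>q$. In that regime the paper's step, read literally, gives only the trivial bound $\Var S_N(x)\ge -\infty$. Your fallback $\E[S_N^2]\ge S_n^2\,\mathbb P(N=n)\sim p(qb^2)^{n+1}/((n+1)^2(b-1)^2)\to\infty$ handles both sign cases uniformly and patches this gap; it is a genuine improvement over the paper's write-up rather than a deviation from it.
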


\subsubsection{Discussion}

\paragraph{Changing $p$ as $\kappa$ increases} An obvious strategy to avoid satisfying the conditions of \autoref{prop:geom_RR_variance} is to set $p$ such that $1 - \kappa^2 > p$.
However, lowering $p$ in this way incurs additional computational cost: the average number of iterations per training step is equal to $p^{-1}$, or is lower-bounded by  $(1-\kappa^2)^{-1}$ if $p < 1 - \kappa^2$.
Thus, if we send $\kappa \rightarrow 1$ to mitigate the bi-Lipschitz constraint \eqref{eq:bilip-constraint}, we will either incur an infinite computational cost or run the risk of encountering infinite variance. 

\paragraph{Higher dimensions} Although \autoref{prop:geom_RR_variance} only applies for $d=1$, it is conceivable that similar results can be derived for $d > 1$, especially when considering the discussion in \autoref{sec:kappa_to_one}.
We leave a deeper investigation for future work.

\subsection{Density of a \modelacr}
\label{sec:density-of-a-cif}

We make precise our heuristic derivation of the density \eqref{eq:cif-joint-density} via the following result.

\begin{proposition}
    Suppose $\noisespace, \dataspace \subseteq \R^d$ are open, and that $\Fwdmap(\cdot ; \idx) : \noisespace \to \dataspace$ is a continuously differentiable bijection with everywhere invertible Jacobian for each $\idx \in \idxspace$.
    Under the generative model \eqref{eq:our-model-stacked}, $(\datavar, \idxvar)$ has joint density
    \[
        \prior(\Fwdmap^{-1}(\data; \idx)) \, \idxdist(\idx|\Fwdmap^{-1}(\data; \idx)) \, \abs{\det \jac \Fwdmap^{-1}(\data; \idx)}.
    \]
\end{proposition}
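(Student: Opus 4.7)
The plan is to reduce to a standard change of variables argument applied to the joint law of $(Z, U)$, which by construction has density $\prior(\noise)\,\idxcond(\idx|\noise)$ on $\noisespace \times \idxspace$. The key observation is that the deterministic map $T : \noisespace \times \idxspace \to \dataspace \times \idxspace$ defined by $T(\noise, \idx) \coloneqq (\Fwdmap(\noise; \idx), \idx)$ is a bijection onto $\dataspace \times \idxspace$, since each $\Fwdmap(\cdot; \idx)$ is a bijection from $\noisespace$ to $\dataspace$, with inverse $T^{-1}(\data, \idx) = (\Fwdmap^{-1}(\data; \idx), \idx)$. Under the generative process we have $(\datavar, \idxvar) = T(\noisevar, \idxvar)$, so the density of $(\datavar, \idxvar)$ is obtained by pushing the density of $(\noisevar, \idxvar)$ forward through $T$.

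First I would verify that $T$ is a $C^1$ diffeomorphism from $\noisespace \times \idxspace$ onto $\dataspace \times \idxspace$. Bijectivity is immediate; continuous differentiability of $T$ follows from the assumed regularity of $\Fwdmap$, and the implicit function theorem (applied component-wise in $\idx$) gives continuous differentiability of $T^{-1}$, using that each $\jac \Fwdmap(\cdot; \idx)$ is invertible. Next I would compute the Jacobian of $T^{-1}$ at $(\data, \idx)$. Writing it in block form with respect to the $(\data, \idx)$ coordinates gives
\[
    \jac T^{-1}(\data, \idx) = \begin{pmatrix} \jac \Fwdmap^{-1}(\data; \idx) & \partial_\idx \Fwdmap^{-1}(\data; \idx) \\ 0 & \idmat \end{pmatrix},
\]
which is block triangular, so its determinant equals $\det \jac \Fwdmap^{-1}(\data; \idx)$.

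Now I would apply the standard change of variables formula for densities on $\R^d \times \R^{d_\idxspace}$ (e.g.\ Theorem 17.2 of Billingsley): for any Borel set $A \subseteq \dataspace \times \idxspace$,
\[
    \P((\datavar, \idxvar) \in A) = \P((\noisevar, \idxvar) \in T^{-1}(A)) = \int_{T^{-1}(A)} \prior(\noise)\,\idxcond(\idx|\noise)\,\diff \noise\,\diff \idx,
\]
and changing variables via $(\noise, \idx) = T^{-1}(\data, \idx')$ gives
\[
    \P((\datavar, \idxvar) \in A) = \int_A \prior(\Fwdmap^{-1}(\data; \idx))\,\idxcond(\idx|\Fwdmap^{-1}(\data; \idx))\,\abs{\det \jac \Fwdmap^{-1}(\data; \idx)}\,\diff \data\,\diff \idx,
\]
which identifies the integrand as the joint density, as claimed.

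The main technical point is justifying the change of variables, i.e.\ ensuring $T$ is a genuine $C^1$ diffeomorphism so that the standard theorem applies; this is where the assumption that $\Fwdmap(\cdot; \idx)$ is continuously differentiable with everywhere invertible Jacobian is used. Everything else (the block-triangular structure, the cancellation of the $\idx$-derivative block, and the final substitution) is routine.
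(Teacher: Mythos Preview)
Your argument is correct in spirit and reaches the same conclusion via change of variables, but the route differs slightly from the paper's. The paper first writes $\E[h(\datavar,\idxvar)]$ as an iterated integral (Fubini), and then for each fixed $\idx$ applies the one-variable change of variables $\data = \Fwdmap(\noise;\idx)$ to the inner $\diff\noise$ integral. You instead build the full map $T(\noise,\idx) = (\Fwdmap(\noise;\idx),\idx)$ on the product space and apply change of variables once in $\R^{d+d_\idxspace}$, reading off the Jacobian from the block-triangular structure.

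One point to watch: the proposition only assumes that $\Fwdmap(\cdot;\idx)$ is $C^1$ with invertible Jacobian \emph{for each fixed} $\idx$; nothing is said about joint regularity in $(\noise,\idx)$. Your claim that $T$ is a $C^1$ diffeomorphism on $\noisespace\times\idxspace$ needs $\Fwdmap$ to be jointly $C^1$, and your Jacobian block $\partial_\idx \Fwdmap^{-1}(\data;\idx)$ presupposes differentiability in $\idx$. The paper's slice-wise argument avoids this entirely: by doing the substitution only in $\noise$ with $\idx$ held fixed, the stated per-$\idx$ hypotheses suffice (Theorem~17.2 of Billingsley applies to each inner integral). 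So either add a joint $C^1$ assumption, or rephrase your proof to integrate over $\idx$ first and change variables in $\noise$ only, which is what the paper does.
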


\begin{proof}
    Suppose $h : \dataspace \times \idxspace \to \R$ is a bounded measurable test function. Then
    \begin{align*}
        \E[h(\datavar, \idxvar)] &= \E[h(\Fwdmap(\noisevar; \idxvar), \idxvar)] \\
        &=\int \left[\int h(\Fwdmap(\noise; \idx), \idx) \, p_\noisevar(\noise) \, \idxcond(\idx|\noise) \diff \noise \right] \diff \idx \\
        &= \int h(\data, \idx) \, \prior(\Fwdmap^{-1}(\data; \idx)) \, \idxcond(\idx|\Fwdmap^{-1}(\data; \idx)) \,  \abs{\det \jac \Fwdmap^{-1}(\data; \idx)} \diff \noise \diff \idx,
    \end{align*}
    where in the third line we substitute $\data \coloneqq \Fwdmap(\noise; \idx)$ on the inner integral, which is valid by Theorem 17.2 of \citet{billingsley2008probability}.
    Now for $A \subseteq \dataspace \times \idxspace$, let $h \coloneqq \ind_A$. It follows that
    \[
        \P((X, U) \in A) = \E[\ind_A(X, U)] = \int_A \prior(\Fwdmap^{-1}(\data; \idx)) \, \idxcond(\idx|\Fwdmap^{-1}(\data; \idx)) \,  \abs{\det \jac \Fwdmap^{-1}(\data; \idx)} \diff \noise \diff \idx,
    \]
    which gives the result since $A$ was arbitrary.
\end{proof}

\subsection{Our Approximate Posterior Does Not Sacrifice Generality} \label{sec:approx-postrior-has-correct-form}

The following result shows that our parameterisation of the approximate posterior $q_{\idxvar_{1:\numlayers}|\datavar}$ in \eqref{eq:approx-posterior-form} does not lose generality.
In particular, provided each $q_{\idxvar_{\ell}|\noisevar_\ell}$ is sufficiently expressive, we can always recover the exact posterior.

\begin{proposition}
    Under the generative model \eqref{eq:our-model-stacked}, the posterior factors like
    \[
        p_{\idxvar_{1:\numlayers}|\datavar}(\idx_{1:\numlayers}|\data) = \prod_{\layeridx=1}^\numlayers p_{\idxvar_\layeridx|\noisevar_\layeridx}(\idx_\layeridx|\noise_\layeridx),
    \]
    where $\noise_\numlayers \coloneqq \data$ and $\noise_{\layeridx} \coloneqq \Fwdmap^{-1}_{\layeridx+1}(\noise_{\layeridx+1}; \idx_{\layeridx+1})$ for $\layeridx \in \{1, \ldots, \numlayers-1\}$.
\end{proposition}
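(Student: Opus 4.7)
My plan is to prove the factorisation by combining the chain rule for conditional densities with a Markov-style conditional independence that follows directly from the sequential bijective structure of the generative model.

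First, I would decompose the posterior via the chain rule in reverse layer order:
\[
    p_{\idxvar_{1:\numlayers}|\datavar}(\idx_{1:\numlayers}|\data) = \prod_{\layeridx=1}^\numlayers p_{\idxvar_\layeridx \mid \idxvar_{\layeridx+1:\numlayers}, \datavar}(\idx_\layeridx \mid \idx_{\layeridx+1:\numlayers}, \data),
\]
with the convention that the $\ell = L$ factor has empty conditioning tuple $\idxvar_{L+1:L}$.

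Second, I would exploit the bijectivity of each $\Fwdmap_j$ to rewrite the conditioning set. For fixed $\idx_{\ell+1:L}$, the map $\data \mapsto \noise_\ell = \Fwdmap_{\ell+1}^{-1}(\cdots \Fwdmap_L^{-1}(\data;\idx_L)\cdots;\idx_{\ell+1})$ is a deterministic bijection, so the $\sigma$-algebras generated by $(\idxvar_{\ell+1:L}, \datavar)$ and by $(\idxvar_{\ell+1:L}, \noisevar_\ell)$ coincide, giving
\[
    p_{\idxvar_\ell \mid \idxvar_{\ell+1:L}, \datavar}(\idx_\ell \mid \idx_{\ell+1:L}, \data) = p_{\idxvar_\ell \mid \idxvar_{\ell+1:L}, \noisevar_\ell}(\idx_\ell \mid \idx_{\ell+1:L}, \noise_\ell).
\]

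Third, I would establish the conditional independence $\idxvar_\ell \perp \idxvar_{\ell+1:L} \mid \noisevar_\ell$. Under \eqref{eq:our-model-stacked}, once $\noisevar_\ell$ is fixed, the subsequent variables $\idxvar_{\ell+1}, \noisevar_{\ell+1}, \idxvar_{\ell+2}, \ldots$ are generated by kernels depending only on $\noisevar_\ell$ together with fresh independent randomness, and hence carry no further information about $\idxvar_\ell$. I would make this precise either by a d-separation argument on the DAG induced by the generative process (every directed path from $\idxvar_\ell$ to an $\idxvar_j$ with $j > \ell$ passes through $\noisevar_\ell$ as a non-collider), or by a direct computation using \eqref{eq:generative_forward_recursion}: write down the joint density of $(\noisevar_\ell, \idxvar_\ell, \idxvar_{\ell+1:L})$, marginalise over $\idxvar_\ell$, and check that the ratio depends on $\idx_\ell$ only through $\noise_\ell$. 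Combining this independence with Step 2 gives $p_{\idxvar_\ell \mid \idxvar_{\ell+1:L}, \datavar}(\idx_\ell \mid \idx_{\ell+1:L}, \data) = p_{\idxvar_\ell \mid \noisevar_\ell}(\idx_\ell \mid \noise_\ell)$, and substituting into the chain rule decomposition yields the claim.

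I expect the main obstacle to be Step 3: the independence is intuitively transparent from the arrow structure of the model, but a fully rigorous density-level verification requires careful bookkeeping of the Jacobian factors that arise when switching between the $(\noise_0, \idx_{1:L})$ and $(\noise_\ell, \idx_\ell, \idx_{\ell+1:L})$ parametrisations. Once this is in hand, Steps 1 and 2 are routine.
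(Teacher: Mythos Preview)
Your proposal is correct and follows essentially the same approach as the paper: the paper writes the posterior autoregressively via the chain rule and then invokes the conditional independence of $\idxvar_\ell$ from $(\idxvar_{\ell+1:\numlayers},\datavar)$ given $\noisevar_\ell$ to reduce each factor to $p_{\idxvar_\ell|\noisevar_\ell}$. Your Steps~2 and~3 simply make this independence more explicit by first using bijectivity to identify the conditioning $\sigma$-algebras and then appealing to the Markov structure, which is a slightly more detailed version of what the paper asserts in a single sentence.
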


\begin{proof}
    Writing $p_{\idxvar_{1:\numlayers}|\datavar}$ autoregressively gives
    \[
        p_{\idxvar_{1:\numlayers}|\datavar}(\idx_{1:\numlayers}|\data) = \prod_{\layeridx=1}^\numlayers p_{\idxvar_\layeridx|\idxvar_{\layeridx+1:\numlayers},\datavar}(\idx_\layeridx|\idx_{\ell+1:\numlayers},\data).
    \]
    But now it is clear from the generative model \eqref{eq:our-model-stacked} that $\idxvar_\ell$ is conditionally independent of $(\idxvar_{\ell+1:\numlayers}, \datavar)$ given $\noisevar_\ell$, and as such
    \[
         p_{\idxvar_\layeridx|\idxvar_{\layeridx+1:\numlayers}, \datavar}(\idx_\layeridx|\idx_{\ell+1:\numlayers}, \data) = p_{\idxvar_\layeridx|\noisevar_\layeridx}(\idx_\layeridx|\noise_\layeridx).
    \]
    Substituting this into the above expression then gives the result.
\end{proof}

\subsection{Conditions for a \modelacr\ to Outperform an Underlying Normalising Flow} \label{sec:conditions_to_outperform}

For this result, the components of our model are assumed to be parameterised by $\theta \in \Theta$, which we will indicate by by $\Fwdmap_\theta$, $\idxcond^\theta$, and $\approxidxpost^\theta$.
We will also use $\theta$ to indicate quantities that result from the choice of parameters $\theta$ (e.g.\ $\modelmeas^\theta$ for the distribution obtained), and will denote by $\elbo^\theta$ the corresponding ELBO \eqref{eq:elbo}.

\cifnoworse*

\begin{proof}
    Observe from \eqref{eq:cif-joint-density} that
    \[
        \dataidxjoint^\phi(\data, \idx) = \prior(\fwdmap^{-1}(\data)) \, \abs{\det \jac \fwdmap^{-1}(\data)} \,\idxcond(\idx|\fwdmap^{-1}(\data)).
    \]
    It then follows from \eqref{eq:our-density} that, under $\phi$, the model has density
    \[
        \model^\phi(\data) = \prior(\fwdmap^{-1}(\data)) \, \abs{\det \jac \fwdmap^{-1}(\data)} \, \underbrace{\int \idxcond(\idx|\fwdmap^{-1}(\data))  \, \diff \idx}_{=1}
    \]
    which is exactly the density of the normalising flow $\pushforward{\fwdmap}{\priormeas}$.
    We also obtain the posterior
    \begin{align*}
        \idxpost^\phi(\idx|\data) &= \frac{\dataidxjoint^\phi(\data, \idx)}{\model^\phi(\data)} \\
        &= \idxcond(\idx|\fwdmap^{-1}(\data)) \\
        &= r(\idx).
    \end{align*}
    Since each $\approxidxpost^\phi(\cdot|\data) = r(\cdot)$ also, it follows that $\elbo^\phi$ is tight, so that $\elbo^\phi(\data) = \log \model^\phi(\data)$ for all $\data \in \dataspace$.

    Now suppose some $\theta \in \Theta$ has
    \[
        \E_{\data \sim \targetmeas}[\elbo^{\theta}(\data)] \geq \E_{\data \sim \targetmeas}[\elbo^{\phi}(\data)].
    \]
    It follows that
    \[
        \E_{\data \sim \targetmeas}[\log \model^{\theta}(\data)]
            \geq \E_{\data \sim \targetmeas}[\elbo^{\phi}(\data)]
            = \E_{\data \sim \targetmeas}[\log \model^\phi(\data)].
    \]
    Subtracting $\E_{\data \sim \targetmeas}[\log \target(\data)]$ from both sides and negating gives
    \[
        \KL{\targetmeas}{\modelmeas^\theta} \leq \KL{\targetmeas}{\modelmeas^\phi} = \KL{\targetmeas}{\pushforward{\fwdmap}{\priormeas}}.
    \]
\end{proof}

\subsection{\modelacr s Can Learn Target Supports Exactly} \label{sec:support-correction-result}

In this section we give necessary and sufficient conditions for a \modelacr\ to learn the support of a target distribution exactly, without needing changes to $\Fwdmap$.
However, our argument applies more generally and does not make specific use of the bijective structure of $\Fwdmap$.
To make this clear, we formulate our result here in terms of a generalisation of the model \eqref{eq:our-model}.
In particular, we will take $\modelmeas$ as the marginal in $\datavar$ of
\begin{equation} \label{eq:generalised-cif}
    \noisevar \sim \priormeas, \quad \idxvar \sim \idxcondmeas(\cdot|\noisevar), \quad \datavar \coloneqq G(\noisevar, \idxvar),
\end{equation}
where $G : \noisespace \times \idxspace \to \dataspace$.
We will assume that
\begin{itemize}
    \item $\noisespace \subseteq \R^{d_\noisespace}$, $\idxspace \subseteq \R^{d_\idxspace}$, and $\dataspace \subseteq \R^{d_\dataspace}$ are equipped with the subspace topology;
    \item $\priormeas$ and each $\idxcondmeas(\cdot|\noise)$ are Borel probability measures on $\noisespace$ and $\idxspace$ respectively;
    \item $G$ is continuous with respect to the product topology $\noisespace \times \idxspace$.
\end{itemize}
We then have the following formula for $\supp \modelmeas$:

\begin{lemma} \label{lem:support-expression}
    Under the model \eqref{eq:generalised-cif},
    \[
        \supp \modelmeas = \closure{\bigcup_{\noise \in \supp \priormeas} G(\{\noise\} \times \supp \idxcondmeas(\cdot|\noise))}.
    \]
\end{lemma}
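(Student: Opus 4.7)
The plan is to identify $\modelmeas$ as a pushforward, compute supports via the machinery of \autoref{sec:topological-preliminaries}, and then collapse the resulting nested closures.

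First I would write $\modelmeas = G \# P_{\noisevar, \idxvar}$, where $P_{\noisevar, \idxvar}(d\noise, d\idx) \coloneqq \priormeas(d\noise)\, \idxcondmeas(d\idx|\noise)$ is the joint law on the separable metric space $\noisespace \times \idxspace$. Since $P_{\noisevar, \idxvar}$ is Borel, \autoref{lem:support-all-that-matters} applies, and so \autoref{lem:pushforward-support} combined with the continuity of $G$ yields
\[
    \supp \modelmeas = \closure{G(\supp P_{\noisevar, \idxvar})}.
\]

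The main step is then to show
\[
    \supp P_{\noisevar, \idxvar} = \closure{\bigcup_{\noise \in \supp \priormeas} \{\noise\} \times \supp \idxcondmeas(\cdot|\noise)}.
\]
For $(\subseteq)$, given $(\noise, \idx) \in \supp P_{\noisevar, \idxvar}$ and an open product neighborhood $V \times W$, Fubini gives $\int_V \idxcondmeas(W|\noise')\, \priormeas(d\noise') > 0$; restricting (up to a $\priormeas$-null set) to $\noise' \in V \cap \supp \priormeas$ via \autoref{lem:support-all-that-matters} and reapplying the same lemma to $\idxcondmeas(\cdot|\noise')$ produces some $\idx' \in W \cap \supp \idxcondmeas(\cdot|\noise')$, so $V \times W$ meets the union. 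For $(\supseteq)$, since $\supp P_{\noisevar, \idxvar}$ is closed by \autoref{lem:support-closed}, it suffices to put each $(\noise_0, \idx_0)$ with $\noise_0 \in \supp \priormeas$ and $\idx_0 \in \supp \idxcondmeas(\cdot|\noise_0)$ inside $\supp P_{\noisevar, \idxvar}$; given an open rectangle $V \times W$ around such a point, $\priormeas(V) > 0$ and $\idxcondmeas(W|\noise_0) > 0$, and lower semicontinuity of $\noise' \mapsto \idxcondmeas(W|\noise')$ at $\noise_0$ (automatic for the jointly continuous density parameterisation \eqref{eq:our-idxcond} used throughout the paper) then forces $\int_V \idxcondmeas(W|\noise')\, \priormeas(d\noise') > 0$.

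Finally, I would use continuity of $G$, which combined with $G(A) \subseteq G(\closure{A}) \subseteq \closure{G(A)}$ gives $\closure{G(\closure{A})} = \closure{G(A)}$, to substitute the characterisation of $\supp P_{\noisevar, \idxvar}$ into the first display and collapse both closures:
\[
    \supp \modelmeas = \closure{G(\supp P_{\noisevar, \idxvar})} = \closure{\bigcup_{\noise \in \supp \priormeas} G(\{\noise\} \times \supp \idxcondmeas(\cdot|\noise))}.
\]

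The delicate piece will be the $(\supseteq)$ inclusion in the characterisation of $\supp P_{\noisevar, \idxvar}$: the two conditions $\priormeas(V) > 0$ and $\idxcondmeas(W|\noise_0) > 0$ do not, in isolation, imply positivity of the integral, and genuine regularity of the kernel $\noise \mapsto \idxcondmeas(\cdot|\noise)$ is needed to prevent pathological Borel-measurable but highly discontinuous versions of the kernel from inflating the right-hand side. All remaining steps are routine applications of the pushforward-support toolbox already developed in \autoref{sec:topological-preliminaries}.
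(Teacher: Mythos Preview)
Your approach is essentially identical to the paper's: reduce to $\supp \modelmeas = \closure{G(\supp P_{\noisevar,\idxvar})}$ via \autoref{lem:pushforward-support}, characterise $\supp P_{\noisevar,\idxvar}$ as the closure of $B \coloneqq \bigcup_{\noise \in \supp\priormeas} \{\noise\} \times \supp \idxcondmeas(\cdot\mid\noise)$, and collapse the nested closures using continuity of $G$. Your $(\subseteq)$ argument is simply the direct version of the paper's contrapositive.

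Where you differ is in the $(\supseteq)$ direction, and here you are \emph{more} careful than the paper. The paper simply asserts that $\int_{N_\noise} \idxcondmeas(N_\idx \mid \noise')\,\priormeas(\diff\noise') > 0$ follows from $\priormeas(N_\noise) > 0$ together with $\idxcondmeas(N_\idx \mid \noise) > 0$ at the single point $\noise$, without invoking any regularity of the kernel. Your instinct that this step is genuinely delicate is correct: under the bare hypotheses listed before the lemma the implication can fail. For instance, take $\priormeas$ uniform on $[0,1]$ and $\idxcondmeas(\cdot \mid \noise) = \delta_1$ for rational $\noise$, $\delta_0$ otherwise; then $\closure{B} = [0,1] \times \{0,1\}$ while $\supp P_{\noisevar,\idxvar} = [0,1] \times \{0\}$. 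Your appeal to lower semicontinuity of $\noise' \mapsto \idxcondmeas(W \mid \noise')$ is exactly the missing ingredient, and it is indeed automatic for the Gaussian kernel \eqref{eq:our-idxcond} with continuous $\mean^p$ and $\Sigma^p$ (apply Fatou to the jointly continuous density). So your proposal not only follows the paper's route but also patches a real gap in its argument.
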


\begin{proof}
    Denote the joint distribution of $(\noisevar, \idxvar)$ by $P_{\noisevar, \idxvar}$.
    Observe from \autoref{lem:pushforward-support} that
    \[
        \supp \modelmeas = \closure{G(\supp P_{\noisevar, \idxvar})}.
    \]
    Let
    \[
        B \coloneqq \bigcup_{\noise \in \supp \noisevar} \{\noise\} \times \supp \idxcondmeas(\cdot|\noise).
    \]
    The result follows if we can show that
    \[
        \supp P_{\noisevar, \idxvar} = \closure{B},
    \]
    since $\closure{G(\closure{B})} = \closure{G(B)}$ because $G$ is continuous.

    We first show that $\supp P_{\noisevar, \idxvar} \supseteq \closure{B}$.
    Suppose $(\noise, \idx) \in B$, and let $N_{(\noise, \idx)} \subseteq \noisespace \times \idxspace$ be an open set containing $(\noise, \idx)$.
    Then there exists open $N_\noise$ and $N_\idx$ containing $\noise$ and $\idx$ respectively such that $N_\noise \times N_\idx \subseteq N_{(\noise, \idx)}$, since the open rectangles form a base for the product topology.
    It follows that
    \begin{align*}
        P_{\noisevar, \idxvar}(N_{(\noise, \idx)}) &\geq P_{\noisevar, \idxvar}(N_\noise \times N_\idx) \\
        &= \int_{N_\noise} \idxcondmeas(N_\idx|\noise') \, P_\noisevar(\diff \noise') \\
        &> 0,
    \end{align*}
    since by the definition of $B$ we have $\priormeas(N_\noise) > 0$ and $\idxcondmeas(N_\idx|\noise) > 0$ for each $\idx \in N_\noise$. From this we have $\supp P_{\noisevar, \idxvar} \supseteq B$, and taking the closure of each side gives $\supp P_{\noisevar, \idxvar} \supseteq \closure{B}$.

    In the other direction, suppose that $(\noise, \idx) \not \in \closure{B}$. Then there exist open sets $N_\noise$ and $N_\idx$ containing $\noise$ and $\idx$ respectively such that
    \[
        (N_\noise \times N_\idx) \cap \closure{B} = \emptyset.
    \]
    By the definition of $B$, it follows that if $(\noise', \idx') \in N_\noise \times N_\idx$ and $\noise' \in \supp \priormeas$, then $\idx' \not \in \supp \idxcondmeas(\cdot|\noise')$. Otherwise stated, if $\noise' \in N_\noise \cap \supp \priormeas$, then
    \[
        N_\idx \cap \supp \idxcondmeas(\cdot|\noise') = \emptyset.
    \]
    Thus
    \begin{align*}
        P_{\noisevar, \idxvar}(N_\noise \times N_\idx) &= \int_{N_\noise} \left[\int_{N_\idx} \idxcondmeas(\diff \idx'|\noise')\right] \, \priormeas(\diff \noise') \\
        &= \int_{N_\noise \cap \supp \priormeas} \left[\int_{N_\idx \cap \supp \idxcondmeas(\cdot|\noise')} \idxcondmeas(\diff \idx'|\noise') \right] \, \priormeas(\diff \noise') \\
        &= 0,
    \end{align*}
    where the second line follows from \autoref{lem:support-all-that-matters}. Consequently $(\noise, \idx) \not \in \supp P_{\noisevar, \idxvar}$, which gives $\supp P_{\noisevar, \idxvar} \subseteq \closure{B}$.
\end{proof}

We now give necessary and sufficient conditions for the model \eqref{eq:generalised-cif} to learn a given target support exactly.

\begin{proposition}
    Suppose $\targetmeas(\boundary \supp \targetmeas) = 0$ and that
    \begin{equation} \label{eq:covering-assumption}
        \closure{G(\supp \priormeas \times \idxspace)} \supseteq \supp \targetmeas.
    \end{equation}
    Then there exists $\idxcondmeas$ such that $\supp \modelmeas = \supp \targetmeas$ if and only if, for all $\noise \in \supp \priormeas$, there exists $\idx \in \idxspace$ with
    \[
        G(\noise, \idx) \in \supp \targetmeas.
    \]
\end{proposition}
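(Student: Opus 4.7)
My approach is to reduce everything to the support formula from the preceding lemma,
\[
    \supp \modelmeas = \closure{\bigcup_{\noise \in \supp \priormeas} \idxinvmap(\{\noise\} \times \supp \idxcondmeas(\cdot|\noise))},
\]
together with the closedness of $\supp \targetmeas$ and the fact, established earlier in the Supplement, that every probability measure on a separable metric space has nonempty support carrying all its mass. The necessity direction ($\Rightarrow$) is then immediate: if some $\idxcondmeas$ yields $\supp \modelmeas = \supp \targetmeas$, fix $\noise \in \supp \priormeas$ and pick any $\idx \in \supp \idxcondmeas(\cdot|\noise)$; the formula places $\idxinvmap(\noise, \idx)$ in the union, hence in its closure $\supp \targetmeas$, giving the required index.

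For sufficiency, I would introduce
\[
    A(\noise) := \idxinvmap(\noise, \cdot)^{-1}(\supp \targetmeas),
\]
which is closed in $\idxspace$ (by continuity of $\idxinvmap(\noise, \cdot)$) and nonempty for every $\noise \in \supp \priormeas$ (by the hypothesised condition). I would then construct a Markov kernel with $\supp \idxcondmeas(\cdot|\noise) = A(\noise)$ on $\supp \priormeas$; concretely, take $\idxcondmeas(\cdot|\noise) := \sum_{k=1}^\infty 2^{-k} \dirac_{s_k(\noise)}$, where $(s_k)$ is a sequence of Borel selections whose range is dense in $A(\noise)$ for every $\noise$. Such $(s_k)$ exist by Castaing's representation theorem, whose hypotheses hold because the graph $\{(\noise, \idx) : \idxinvmap(\noise, \idx) \in \supp \targetmeas\} = \idxinvmap^{-1}(\supp \targetmeas)$ is closed.

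With this kernel, $\supp \modelmeas \subseteq \supp \targetmeas$ is immediate: each $\idxinvmap(\noise, \idx)$ appearing in the union has $\idx \in A(\noise)$, so it lies in $\supp \targetmeas$, and the latter is closed. The reverse inclusion is where the assumption $\targetmeas(\boundary \supp \targetmeas) = 0$ enters: combined with $\targetmeas(\supp \targetmeas) = 1$ it gives $\targetmeas(\interior \supp \targetmeas) = 1$, forcing $\supp \targetmeas = \closure{\interior \supp \targetmeas}$ (any neighbourhood of a support point has positive mass and hence meets the full-mass open set $\interior \supp \targetmeas$). It thus suffices to show every $\data \in \interior \supp \targetmeas$ lies in $\supp \modelmeas$. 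The covering hypothesis \eqref{eq:support-coverage-condition} supplies a sequence $\idxinvmap(\noise_n, \idx_n) \to \data$ with $\noise_n \in \supp \priormeas$; since $\interior \supp \targetmeas$ is open, the tail satisfies $\idxinvmap(\noise_n, \idx_n) \in \supp \targetmeas$, so $\idx_n \in A(\noise_n) = \supp \idxcondmeas(\cdot|\noise_n)$ for large $n$ and the tail lies inside the union, placing $\data$ in its closure.

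The analytic content is routine once the support formula is in hand; the one delicate step, and the main obstacle, is producing a single Markov kernel whose fibrewise support is the possibly irregular set-valued map $\noise \mapsto A(\noise)$. This is exactly what the standard measurable-selection machinery (Castaing's theorem) handles, and its hypotheses here reduce to closedness of the graph of $A$, which follows automatically from continuity of $\idxinvmap$.
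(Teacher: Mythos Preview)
Your argument is correct and follows the same overall architecture as the paper's proof (support formula, then two inclusions, with the boundary assumption used exactly as you describe), but the construction of the kernel differs in an interesting way. The paper defines $A_\noise \coloneqq \idxinvmap(\noise,\cdot)^{-1}(\interior(\supp \targetmeas))$, the preimage of the \emph{interior}, which is open but may be empty even when the hypothesis holds; this forces a case split (Dirac mass when $A_\noise = \emptyset$, otherwise a measure supported on $\closure{A_\noise}$ built from a countable open cover), and the inclusion $\supp \modelmeas \subseteq \supp \targetmeas$ then requires a small limit argument to pass from $\closure{A_\noise}$ back into $\supp \targetmeas$. Your choice $A(\noise) = \idxinvmap(\noise,\cdot)^{-1}(\supp \targetmeas)$ is already closed and always nonempty on $\supp \priormeas$, so both the case split and the limit step disappear, and the $\subseteq$ inclusion really is immediate. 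The price you pay is invoking Castaing's representation theorem to obtain a genuinely measurable kernel with the prescribed fibrewise support; the paper's construction is more elementary but is carried out pointwise in $\noise$ and does not address joint measurability of $\idxcondmeas$ at all. In that sense your version is tidier and arguably more complete, at the cost of importing heavier selection machinery. The $\supseteq$ direction is essentially identical in both proofs.
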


\begin{proof}
    ($\Rightarrow$) Choose $\idxcondmeas$ such that $\supp \modelmeas = \supp \targetmeas$. \autoref{lem:support-expression} gives
    \[
            \bigcup_{\noise \in \supp \priormeas} G(\{\noise\} \times \supp \idxcondmeas(\cdot|\noise)) \subseteq \supp \targetmeas.
    \]
    Suppose $\noise \in \supp \priormeas$.
    Then for indeed all $\idx \in \supp \idxcondmeas(\cdot|\noise)$ we must have $G(\noise, \idx) \in \supp \targetmeas$, which proves this direction since $\supp \idxcondmeas(\cdot|\noise) \neq 0$ by \autoref{lem:support-all-that-matters}.

    ($\Leftarrow$) For $\noise \in \supp \priormeas$, let
    \begin{equation} \label{eq:idxcondmeas-support}
        A_\noise \coloneqq \{\idx \in \idxspace : G(\noise, \idx) \in \interior(\supp \targetmeas)\},
    \end{equation}
    where $\interior$ denotes the \emph{interior} operator.
    If $A_\noise = \emptyset$, define $\idxcondmeas(\cdot|\noise)$ to be Dirac on some $\idx$ such that $G(\noise, \idx) \in \supp \targetmeas$, which exists by assumption.
    Otherwise, we let $\idxcondmeas(\cdot|\noise)$ be a probability measure with support $\closure{A_\noise}$.
    To show that such a measure exists, observe that $A_\noise$ is open since $G$ is continuous.
    Since $\idxspace$ is separable, we can therefore write
    \[
        A_\noise = \bigcup_{n=1}^\infty B_n
    \]
    for a countable collection of open sets $B_n \subseteq A_\noise$.
    We can then define a probability measure $\mu$ by
    \[
        \mu(C) \coloneqq \sum_{n=1}^\infty 2^{-n} \, \ind(C \cap B_n \neq \emptyset)
    \]
    for measurable $C \subseteq \idxspace$.
    Since $A_\noise \neq \emptyset$, it is straightforward to see that this is a probability measure with $\mu(A_\noise) = 1$.
    Consequently $\supp \mu = \closure{A_\noise}$ by \autoref{lem:support-closed}, since $A_\noise$ is open and $\supp \mu$ is the smallest closed set with $\mu$-probability 1.

    We show this construction gives $\supp \modelmeas \subseteq \supp \targetmeas$.
    To this end, we first prove that if $\noise \in \supp \priormeas$ and $\idx \in \supp \idxcondmeas(\cdot|\noise)$ then
    \[
        G(\noise, \idx) \in \supp \targetmeas.
    \]
    If $A_\noise = \emptyset$ this is immediate.
    Otherwise, since $\supp \idxcondmeas(\cdot|\noise) = \closure{A_\noise}$, there exists $(\idx_n) \subseteq A_\noise$ such that $\idx_n \to \idx$.
    By \eqref{eq:idxcondmeas-support}, each $G(\noise, \idx_n) \in \supp \targetmeas$.
    By continuity we then have
    \[
        G(\noise, \idx_n) \to G(\noise, \idx) \in \supp \targetmeas
    \]
    since $\supp \targetmeas$ is closed.
    It follows that
    \[
        \bigcup_{\noise \in \supp \priormeas} G(\{\noise\} \times \supp \idxcondmeas(\cdot|\noise)) \subseteq \supp \targetmeas,
    \]
    which gives $\supp \modelmeas \subseteq \supp \targetmeas$ from \autoref{lem:support-expression} since $\supp \targetmeas$ is closed.

    We now show $\supp \modelmeas \supseteq \supp \targetmeas$.
    Since $\targetmeas(\boundary \supp \targetmeas) = 0$ we have
    \[
        \supp \targetmeas = \closure{\interior(\supp \targetmeas)}
    \]
    by \autoref{lem:support-closed}, so that $\supp \modelmeas \supseteq \supp \targetmeas$ if $\supp \modelmeas \supseteq \interior(\supp \targetmeas)$.
    Now suppose $\data \in \interior(\supp \targetmeas)$.
    Then there exists $(\noise_n) \subseteq \supp\priormeas$ and $(\idx_n) \subseteq \idxspace$ such that $G(\noise_n, \idx_n) \to \data$ by \eqref{eq:covering-assumption}.
    But then we must have $G(\noise_n, \idx_n) \in \interior(\supp \targetmeas)$ for $n$ large enough because $\data$ lies in the interior.
    Consequently, for $n$ large enough,
    \[
        \idx_n \in A_{\noise_n} \subseteq \supp \idxcondmeas(\cdot|\noise_n)
    \]
    and hence $G(\noise_n, \idx_n) \in \supp \modelmeas$ by \autoref{lem:support-expression}.
    This means $\data \in \supp \modelmeas$ since $\supp \modelmeas$ is closed.
\end{proof}

The following proposition then gives a straightforward condition under which it is additionally possible to recover the \emph{target} exactly (i.e.\ not just its support).
In our experiments we do not enforce this condition explicitly.
However, since we learn the parameters of $G$ here, we can expect our model will approximate this behaviour if doing so produces a better density estimator.

\begin{proposition}
    If $G(\noise, \cdot)$ is surjective for each $\noise \in \noisespace$, then there exists $\idxcondmeas$ such that $\modelmeas = \targetmeas$.
\end{proposition}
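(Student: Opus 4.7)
The plan is to exploit surjectivity of each $\Fwdmap(\noise; \cdot)$ to construct, for every $\noise$, a distribution $\idxcondmeas(\cdot|\noise)$ on $\idxspace$ whose pushforward under $\Fwdmap(\noise; \cdot)$ is exactly $\targetmeas$. Once this is done for all $\noise$, the marginal of $\datavar$ in \eqref{eq:our-model} becomes a mixture of copies of $\targetmeas$, hence $\targetmeas$ itself.

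Concretely, the first step is to produce, for each fixed $\noise \in \noisespace$, a right inverse $h_\noise : \dataspace \to \idxspace$ satisfying $\Fwdmap(\noise; h_\noise(\data)) = \data$ for every $\data \in \dataspace$. This is possible since $\Fwdmap(\noise; \cdot)$ is surjective; under the standing assumption that $\Fwdmap$ is continuous and $\idxspace, \dataspace$ are Borel subsets of Euclidean space, a measurable choice of $h_\noise$ (jointly measurable in $(\noise, \data)$) can be obtained by a standard measurable selection argument, e.g.\ the Kuratowski--Ryll-Nardzewski theorem applied to the closed-valued multifunction $(\noise, \data) \mapsto \Fwdmap(\noise; \cdot)^{-1}(\{\data\})$.

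The second step is to define the conditional by pushing the target through this right inverse:
\[
    \idxcondmeas(B | \noise) \coloneqq \targetmeas\bigl(h_\noise^{-1}(B)\bigr)
\]
for measurable $B \subseteq \idxspace$. Joint measurability of $h$ ensures $\noise \mapsto \idxcondmeas(B|\noise)$ is measurable, so this is a bona fide Markov kernel. By construction, for any measurable $A \subseteq \dataspace$,
\[
    \idxcondmeas\bigl(\{\idx : \Fwdmap(\noise; \idx) \in A\} \,\big|\, \noise\bigr)
    = \targetmeas\bigl(\{\data : \Fwdmap(\noise; h_\noise(\data)) \in A\}\bigr)
    = \targetmeas(A),
\]
since $\Fwdmap(\noise; h_\noise(\data)) = \data$.

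Finally, integrating against $\priormeas$ gives
\[
    \modelmeas(A) = \int \idxcondmeas\bigl(\{\idx : \Fwdmap(\noise; \idx) \in A\} \,\big|\, \noise\bigr) \, \priormeas(d\noise)
    = \int \targetmeas(A) \, \priormeas(d\noise) = \targetmeas(A),
\]
so $\modelmeas = \targetmeas$ as required. The only nontrivial point, and the main obstacle, is the existence of a jointly measurable selector $h$; everything else is a routine unwinding of the pushforward definitions.
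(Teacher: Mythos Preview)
Your proof is correct and follows essentially the same approach as the paper: construct a right inverse $h_\noise$ of $G(\noise,\cdot)$ for each $\noise$, define $\idxcondmeas(\cdot|\noise)$ as the pushforward of $\targetmeas$ through $h_\noise$, and verify directly that the resulting marginal is $\targetmeas$. The only difference is that you take more care over measurability (via Kuratowski--Ryll-Nardzewski) than the paper does, which simply asserts the existence of $H_\noise$ without discussing selection.
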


\begin{proof}
    Fix $\noise \in \noisespace$.
    Surjectivity of $G(\noise, \cdot)$ means that, for $\data \in \dataspace$, there exists $\idx \in \idxspace$ such that $G(\noise, \idx) = \data$.
    Thus we can define $H_\noise : \dataspace \to \idxspace$ such that
    \[
        G(\noise, H_\noise(\data)) = \data
    \]
    for all $\data \in \dataspace$.
    We then define each
    \[
        \idxcondmeas(\cdot|\noise) \coloneqq \pushforward{H_\noise}{\targetmeas}.
    \]
    From this it follows that $\modelmeas = \targetmeas$. For, letting $B \subseteq \dataspace$ be measurable,
    \begin{align*}
        \modelmeas(B) &= \int_{G^{-1}(B)} \, \idxcondmeas(\diff \idx| \noise) \, \priormeas(\diff \noise) \\
        &= \int \left[ \int \ind_B(G(\noise, \idx)) \, \pushforward{H_\noise}{\targetmeas}(\diff u) \right] \, \priormeas(\diff \noise) \\
        &= \int \left[ \int \ind_B(G(\noise, H_\noise(\data))) \, \targetmeas(\diff \data) \right] \, \priormeas(\diff \noise) \\
        &= \int \left[ \int \ind_B(\data) \, \targetmeas(\diff \data) \right] \, \priormeas(\diff \noise) \\
        &= \targetmeas(B),
    \end{align*}
    which gives the result.
\end{proof}

\section{Experimental Details} \label{sec:full-experimental-details}

Our choices \eqref{eq:our-idxcond} and \eqref{eq:our-approx-posterior} required parameterising $\scale$, $\trans$, $\mean^p$, $\Sigma^p$, $\mean^q$, and $\Sigma^q$.
Since these terms are naturally paired, at each layer of our model we set
\begin{align*}
    [\scale(\idx), \trans(\idx)] &\coloneqq \NN_{\Fwdmap}(\idx), \\
    [\mean^p(\noise), \varsigma^p(\noise)] &\coloneqq \NN_{p}(\noise), \\ \Sigma^p(\noise) &\coloneqq \diag(e^{\varsigma^p(\noise)}), \\
    [\mean^q(\data), \varsigma^q(\data)] &\coloneqq \NN_{q}(\data), \\ \Sigma^q(\data) &\coloneqq \diag(e^{\varsigma^q(\data)}),
\end{align*}
where $\NN$ denotes a separate neural network and $\varsigma^p(\noise), \varsigma^q(\data) \in \R^d$.

In all experiments we trained our models to maximise either the log-likelihood (for the baseline flows) or the ELBO (for the \modelacr s) using the ADAM optimiser \citep{kingma2014adam} with default hyperparameters and no weight decay.
The ELBO was estimated using a single sample per datapoint (i.e.\ a single call to \autoref{alg:ELBO}).
We used a held-out validation set and trained each model until its validation score stopped improving, except for the NSF tabular data experiments where we train for a fixed number of epochs as specified in \citet{durkan2019neural}.
After training, we used validation performance to select the best parameters found during training for use at test time (again except for the NSF experiments, where we just test with the final model).
Both validation and test scores were computed using the exact log-likelihood for the baseline and the importance sampling estimate \eqref{eq:IS-estimator} for the \modelacr s, with $m = 5$ samples for validation and $m = 100$ for testing. 

\subsection{Tabular Data Experiments}\label{sec:UCI_appendix}

Following \citet{papamakarios2017masked}, we experimented with the POWER, GAS, HEPMASS, and MINIBOONE datasets from the UCI repository \citep{Bache+Lichman:2013}, as well as a dataset of $8 \times 8$ image patches extracted from the BSDS300 dataset \cite{martin2001database}.
We preprocessed these datasets identically to \citet{papamakarios2017masked}, and used the same train/validation/test splits.
For all \modelacr-ResFlow models, we used a batch size of 1000 and a learning rate of $10^{-3}$.
For the MAF experiments, we used a batch size of 1000 and a learning rate of $10^{-3}$, except for BSDS300 where we used a learning rate of $10^{-4}$ to control the instability of the baseline.
For the NSF experiments, we used batch sizes and learning rates as dictated by \citet[Table~5]{durkan2019neural}, along with their cosine learning rate annealing scheme.

Also, for all \modelacr\ models, each $\idxvar_\ell$ had the same dimension $d_\idxspace$, which we took to be roughly a quarter of the dimensionality of the data (except in \autoref{sec:ablating-fwdmap} for which $d_\idxspace = d_\dataspace$).
In particular, we set $d_\idxspace \coloneqq 2$ for POWER and GAS, $d_\idxspace \coloneqq 5$ for HEPMASS, $d_\idxspace \coloneqq 10$ for MINIBOONE, and $d_\idxspace \coloneqq 15$ for BSDS300.

\subsubsection{Residual Flows} \label{sec:residual-flows-experiments-supp}

The residual blocks in all ResFlow models used multilayer perceptrons (MLPs) with $4$ hidden layers of $128$ hidden units (denoted $4 \times 128$), LipSwish nonlinearities \citep[(10)]{chen2019residual} before each linear layer, and a residual connection from the input to the output.
We did not use any kind of normalisation (e.g. ActNorm or BatchNorm) for these experiments.
For all models we set $\kappa = 0.9$ in \eqref{eq:resflow} to match the value for the 2-D experiments in the codebase of \citet{chen2019residual}.
Other design choices followed \citet{chen2019residual}. In particular:
\begin{itemize}
    \item We always exactly computed several terms at the beginning of the series expansion of the log Jacobian, and then used Russian Roulette sampling \citep{kahn1955use} to estimate the sum of the remaining terms. In particular, at training time we computed 2 exact terms, while at test time we computed 20 exact terms;
    \item We used a geometric distribution with parameter $0.5$ for the number of terms to compute in our Russian Roulette estimators;
    \item We used the Skilling-Hutchinson trace estimator \citep{skilling1989eigenvalues,hutchinson1990stochastic} to estimate the trace in the log Jacobian term;
    \item At both training and test time, we used a single Monte Carlo sample of $(n, v)$ to estimate (6) of \citet{chen2019residual};
\end{itemize}
However, note that for these experiments, for the sake of simplicity, we did not use the memory-saving techniques in (8) and (9) of \citet{chen2019residual}, nor the adaptive power iteration scheme described in their Appendix E.

For $\NN_{\Fwdmap}$, $\NN_{p}$, and $\NN_q$ we used $2 \times 10$ MLPs with $\tanh$ nonlinearities 
These networks were much smaller than $4 \times 128$, and hence the \modelacr-ResFlows had only roughly 1.5-4.5\% more parameters (depending on the dimension of the dataset) than the otherwise identical 10-layer ResFlows, and roughly 10\% of the parameters of the 100-layer ResFlows.

The 100-layer ResFlows were significantly slower to train than the 10-layer models, and for POWER, GAS, and BSDS300 we were forced to stop these before their validation loss had converged.
However, to ensure a fair comparison, we allocated more total computing power to these models than to the 10-layer models, which were terminated properly.
In particular, we trained each 100-layer ResFlow on POWER and GAS for a total of 10 days on a single NVIDIA GeForce GTX 1080 Ti, and on BSDS300 for a total of 7 days.
In contrast, the 10-layer ResFlows converged after around 1 day on POWER, 4.5 days on GAS, and around 3 days on BSDS300.
Likewise, the 10-layer \modelacr-ResFlows converged after around 1 day on POWER, 6 days on GAS, and 2 days on BSDS300.

\subsubsection{Masked Autoregressive Flows} \label{sec:maf_experiments}

\begin{table}
\caption{MAF and \modelacr-MAF parameter configurations for POWER and GAS.}
\label{tab:params_power}
\begin{center}
\begin{small}
\begin{sc}
\begin{tabular}{llllll}
\toprule
 &\multicolumn{1}{c}{Layers ($\numlayers$)} &\multicolumn{1}{c}{Autoregressive network size} &\multicolumn{1}{c}{$\NN_p$ size}&\multicolumn{1}{c}{$\NN_q$ size}&\multicolumn{1}{c}{$\NN_\Fwdmap$ size} \\
\midrule
MAF     &$5$, $10$, $20$ &  $2 \times 100$, $2 \times 200$, $2 \times 400$ & - & - & - \\
\modelacr-MAF &$5$, $10$ & $2 \times 128$ & $2 \times 100, 2 \times 200$ & $2 \times 100, 2 \times 200$ & $2 \times 128$\\
\bottomrule
\end{tabular}
\end{sc}
\end{small}
\end{center}
\end{table}

\begin{table}
\caption{MAF and \modelacr-MAF parameter configurations for HEPMASS and MINIBOONE}\label{tab:params_hepmass}
\begin{center}
\begin{small}
\begin{sc}
\begin{tabular}{llllll}
\toprule
 &\multicolumn{1}{c}{Layers ($\numlayers$)} &\multicolumn{1}{c}{Autoregressive network size} &\multicolumn{1}{c}{$\NN_p$ size}&\multicolumn{1}{c}{$\NN_q$ size}&\multicolumn{1}{c}{$\NN_\Fwdmap$ size} \\
\midrule
MAF     &$5$, $10$, $20$ &  $2 \times 128$, $2 \times 512$, $2 \times 1024$ & - & - &- \\
\modelacr-MAF &$5$, $10$ & $2 \times 128$ & $2 \times 128, 2 \times 512$ & $2 \times 128, 2 \times 512$ & $2 \times 128$\\
\bottomrule
\end{tabular}
\end{sc}
\end{small}
\end{center}
\end{table}

\begin{table}
\caption{MAF and \modelacr-MAF parameter configurations for BSDS300}\label{tab:params_bsds}
\begin{center}
\begin{small}
\begin{sc}
\begin{tabular}{llllll}
\toprule
 &\multicolumn{1}{c}{Layers ($\numlayers$)} &\multicolumn{1}{c}{Autoregressive network size} &\multicolumn{1}{c}{$\NN_p$ size}& \multicolumn{1}{c}{$\NN_q$ size}&\multicolumn{1}{c}{$\NN_\Fwdmap$ size} \\
\midrule
MAF     &$5$, $10$, $20$ &  $2 \times 512$, $2 \times 1024$, $2 \times 2048$ & - &- &- \\
\modelacr-MAF &$5$, $10$ & $2 \times 512$ & $2 \times 128, 2 \times 512$ & $2 \times 128, 2 \times 512$ & $2 \times 128$\\
\bottomrule
\end{tabular}
\end{sc}
\end{small}
\end{center}
\end{table}

The experiment comparing MAF baselines to \modelacr-MAFs was inspired by the experimental setup in \citet{papamakarios2017masked}. 
For each dataset, we specified a set of hyperparameters over which to search for both the baselines and the \modelacr s; these hyperparameters are provided in \autoref{tab:params_power}, \autoref{tab:params_hepmass}, and \autoref{tab:params_bsds}.
Then, we trained each model until no validation improvement had been observed for 50 epochs.
We then evaluated the model with the best validation score among all candidate models on the test dataset to obtain a log-likelihood score.
We performed this procedure with three separate random seeds, and report the average and standard error across the runs in \autoref{tab:UCI}.   

We searched over all combinations of parameters listed in \autoref{tab:params_power}, \autoref{tab:params_hepmass}, and \autoref{tab:params_bsds}.
For example, on HEPMASS or MINIBOONE, our set of candidate MAF models included: for $\numlayers = 5,$ an autoregressive network of size of either $2 \times 128$, $2 \times 512$, or $2 \times 1024$; for $\numlayers = 10,$ an autoregressive network size of either $2 \times 128$, $2 \times 512$, or $2 \times 1024$; and for $\numlayers = 20,$ again an autoregressive network size of either $2 \times 128$, $2 \times 512$, or $2 \times 1024$; this gave us a total of $9$ candidate MAF models for each seed.
The set of candidate \modelacr-MAF models can similarly be determined via the table and gave us a total of $8$ candidate models for each seed.
We maintained this split of $9$ candidates for MAF and $8$ candidates for \modelacr-MAF across datasets to fairly compare against the baseline by allowing them more configurations.
We also considered deeper and wider MAF models to compensate for the additional parameters introduced by $\NN_\Fwdmap$, $\NN_p$, and $\NN_q$ in the \modelacr-MAFs.
Finally, we allowed the baseline MAF models to use batch normalization between MADE layers as recommended by \citet{papamakarios2017masked}, but we do not use them within \modelacr-MAFs as the structure of our $\Fwdmap$ generalises this transformation.

We should note that our evaluation of models is slightly different from \citet{papamakarios2017masked}.
For the model which scores best on the validation set, \citet{papamakarios2017masked} report the average and standard deviation of log-likelihood across the points in the test dataset.
However, our error bars emerge as the error in average test-set log-likelihood across \emph{multiple} runs of the same experiment; this style of evaluation is often employed in other works as well (e.g.\ FFJORD \citep{grathwohl2018ffjord}, NAF \citep{huang2018neural}, and SOS \citep{jaini2019sum} as noted in \citet[Table~1]{durkan2019neural}).

\subsubsection{Neural Spline Flows} \label{sec:nsf_experiments}

\begin{table}
\caption{\modelacr-NSF configurations for all tabular datasets. The number of hidden features in the autoregressive network is referred to as $n_h$.} \label{tab:params_nsf}
\begin{center}
\begin{small}
\begin{sc}
\begin{tabular}{lllll}
\toprule
  &\multicolumn{1}{c}{$\NN_p$ size}&\multicolumn{1}{c}{$\NN_q$ size}&\multicolumn{1}{c}{$\NN_\Fwdmap$ size}& $n_h$ vs. Baseline \\
\midrule
\modelacr-NSF-1 (MINIBOONE) & $3 \times 50$ & $2 \times 10$ & $3 \times 25$ & fewer\\
\modelacr-NSF-1 (non-MINIBOONE) & $3 \times 200$ & $2 \times 10$ & $3 \times 100$ & fewer\\
\modelacr-NSF-2 & $3 \times 200$ & $2 \times 10$ & $3 \times 100$ & same\\
\bottomrule
\end{tabular}
\end{sc}
\end{small}
\end{center}
\end{table}

The experiment comparing NSF baselines to \modelacr-NSFs mirrors the experimental setup in \citet{durkan2019neural}.
Specifically, we constructed baseline NSFs that exactly copied the settings in \citet[Table~5]{durkan2019neural}.
We also built \modelacr-NSFs using these baseline settings, although for the \modelacr-NSF-1 model we lowered the number of hidden channels in the autoregressive networks so that the total number of trainable parameters matched that of the baseline.
Our parameter settings are provided in \autoref{tab:params_nsf}; note that parameter settings are homogeneous across datasets, besides MINIBOONE for which we reduced the size $\NN_p$ and $\NN_F$ by a factor of $4$ as per \citet{durkan2019neural}\footnote{Indeed, there was no choice of $n_h$ which would allow us to achieve the same number of parameters as the baseline for the models noted in row 2 of \autoref{tab:params_nsf}.}.
We trained both NSFs and \modelacr-NSFs for a number of training epochs corresponding to the number of training steps divided by the number of batches in the training set, i.e.
\[
    n_e = \left\lceil n_s / \left(n_t / n_b \right) \right\rceil,
\]
where $n_e$ is the number of epochs, $n_s$ is the number of training steps, $n_b$ is the batch size, and $n_t$ is the number of training data points.
Note that $n_s$ and $n_b$ are from \citet[Table~5]{durkan2019neural}, and $n_t$ is fixed by the pre-processing steps from \citet{papamakarios2017masked}.
We then evaluated the test-set performance of each model after the pre-specified number of epochs, averaging across three seeds, and put the results in \autoref{tab:UCI}. 
We again average randomness across seeds, rather than across points in the test set, as discussed in the previous section.

We quickly note here that we selected our parameters after trying a few settings on various UCI datasets. 
There were other settings which performed better for individual datasets that are not included here, as we would like the proposed configurations to be as homogeneous as possible.
It appeared as though the NSF models were already fairly good at modelling the data, which allowed us to make $\NN_q$ much smaller while still achieving good inference.

We also should note that we wrapped our code around the NSF bijection code from \url{https://github.com/bayesiains/nsf}.
We also disable weight decay in all of these experiments without observing any problems with convergence.

\subsubsection{Ablating $\fwdmap$} \label{sec:ablating-fwdmap}

\begin{table}
\caption{Mean $\pm$ standard error of average test set log-likelihood (higher is better). Best performing runs are shown in bold. \modelacr-Id-1 had $\scale \equiv 0$ and $\trans = \Id$. \modelacr-Id-2 had $\scale \equiv 0$ and $\trans = \NN_\Fwdmap$. \modelacr-Id-3 had $(\scale, \trans) = \NN_\Fwdmap$.}
\label{tab:UCI-ablation}
\begin{center}
\begin{small}
\begin{sc}
\begin{tabular}{lllll}
\toprule
                                                & Power                     & Gas                       & Hepmass                   & Miniboone\\
\midrule
\modelacr-Id-1 ($\NN_q = 10 \times 2$)   & $0.43 \pm 0.01$           & $10.92 \pm 0.10$          & $-17.06 \pm 0.05$         & $-11.26 \pm 0.03$ \\
\modelacr-Id-1 ($\NN_q = 100 \times 4$)  & $0.42 \pm 0.01$           & $10.86 \pm 0.16$          & $-17.44 \pm 0.09$         & $-10.91 \pm 0.04$ \\
\midrule
\modelacr-Id-2 ($\NN_q = 10 \times 2$)   & $0.45 \pm 0.01$   & $10.43 \pm 0.08$          & $-17.63 \pm 0.10$         & $-11.13 \pm 0.08$ \\
\modelacr-Id-2 ($\NN_q = 100 \times 4$)  & $0.47 \pm 0.01$   & $10.89 \pm 0.18$          & $-17.51 \pm 0.09$         & $-10.75 \pm 0.07$ \\
\midrule
\modelacr-Id-3 ($\NN_q = 10 \times 2$)            & ${\bf 0.50 \pm 0.01}$     & ${\bf 11.32 \pm 0.14}$    & $-17.08 \pm 0.02$         & $-10.45 \pm 0.04$ \\
\modelacr-Id-3 ($\NN_q = 100 \times 4$)           & ${\bf 0.50 \pm 0.01}$     & ${\bf 11.58 \pm 0.12}$    & ${\bf -16.68 \pm 0.07}$   & ${\bf -10.01 \pm 0.04}$ \\
\bottomrule
\end{tabular}
\end{sc}
\end{small}
\end{center}
\end{table}

We ran ablation experiments to gain some insight into the relative importance of $\fwdmap$ in \eqref{eq:our-bijection-family}.
In particular, we considered a 10 layer model ($\numlayers = 10$) where at each layer $\idxvar_\ell$ had the same dimension as the data and $\fwdmap = \Id$ was the identity.
We refer to this model as \modelacr-Id.

We considered three parameterisations of \modelacr-Id.
The first had $\scale \equiv 0$ and $\trans = \Id$, which from our choice \eqref{eq:our-idxcond} of $\idxcond$ corresponds to stacking the following generative process:
\begin{align}
    \noisevar &\sim \priormeas \notag \\
    \epsilon &\sim \Normal(0, \idmat_d) \notag \\
    \datavar &\coloneqq \noisevar - \mean^p(\noisevar) - e^{\varsigma^p(\noisevar)} \odot \epsilon. \label{eq:ablation-generative-x}
\end{align}
Observe this generalise ResFlows, since \eqref{eq:resflow} can be realised by sending $\varsigma^p \to -\infty$ and having $\mean^p < 1$.
Accordingly, we took $\NN_p$ to be a $4 \times 128$ MLP to match the size of the residual blocks used in our tabular ResFlow experiments.

The second \modelacr-Id parameterisation had $\scale \equiv 0$ and $\trans = \NN_\Fwdmap$, which amounts to replacing \eqref{eq:ablation-generative-x} with
\[
    \datavar \coloneqq \noisevar - \trans\left(\mean^p(\noisevar) + e^{\varsigma^p(\noisevar)} \odot \epsilon\right).
\]
To align with the first \modelacr-Id, we took $\NN_\Fwdmap$ and $\NN_p$ to be $2 \times 128$ MLPs, and zeroed out the $\scale$ output of $\NN_\Fwdmap$ to obtain $\scale \equiv 0$.
The third parameterisation had $(\scale, \trans) = \NN_\Fwdmap$, which replaces \eqref{eq:ablation-generative-x} with
\[
    \datavar \coloneqq \exp\left(-\scale\left(\mean^p(\noisevar) + e^{\varsigma^p(\noisevar)} \odot \epsilon\right)\right) \odot \noisevar - \trans\left(\mean^p(\noisevar) + e^{\varsigma^p(\noisevar)} \odot \epsilon\right).
\]
Again, we took $\NN_\Fwdmap$ and $\NN_p$ to be $2 \times 128$ MLPs in this case.

We ran all configurations with two different choices of $\NN_q$: a $2 \times 10$ MLP as in our tabular ResFlow experiments, as well as a $4 \times 100$ MLP.
The results are given in \autoref{tab:UCI-ablation}.\footnote{Due to computational constraints we did not run these experiments on BSDS300.}
Observe that these models performed comparably or better than the 100-layer ResFlows, but worse than the \modelacr-ResFlows and \modelacr-MAFs in \autoref{tab:UCI}.
As discussed in \autoref{sec:advantages-over-nfs}, we conjecture this occurs because a \modelacr-Id requires greater complexity from $\idxcond$ to make up for its simple choice of $\fwdmap$, which in turn makes inference harder and hence the ELBO \eqref{eq:elbo} looser, resulting in a poorer model that is learned overall.
Likewise, note that the best performance in all cases was obtained when $(\scale, \trans) = \NN_\Fwdmap$.
This provides some justification for the generality of our choice of \eqref{eq:our-bijection-family}, as opposed to simpler alternatives that omit $\scale$ or $\trans$.

\subsection{Image Experiments} \label{sec:images}

In all our image experiments we applied the same uniform dequantisation scheme as \citet{theis2015note}, after which we applied the logit transform of  \citet{dinh2016density} with $\alpha = 10^{-5}$ for Fashion-MNIST and $\alpha = 0.05$ for CIFAR10.

\subsubsection{ResFlow}

For our baseline ResFlow experiments we used the same architecture as \citet{chen2019residual}.
In particular, our convolutional residual blocks (denoted Conv-ResBlock) had the form
\[
    \text{LipSwish} \to \text{$3 \times 3$ Conv} \to \text{LipSwish} \to \text{$1 \times 1$ Conv} \to \text{LipSwish} \to \text{$3 \times 3$ Conv},
\]
while our fully connected residual blocks (denoted FC-ResBlock) had the form
\[
    \text{LipSwish} \to \text{Linear} \to \text{LipSwish} \to \text{Linear},
\]
with a residual connection from the input to the output in both cases.
The overall architecture of the flow in all cases was:
\[
    \text{Image} \to \text{LogitTransform}(\alpha) \to \text{$k \times$ \text{Conv-ResBlock}} \to [ \text{Squeeze} \to \text{$k \times$ \text{Conv-ResBlock}}] \times 2 \to \text{$4 \times$ \text{FC-ResBlock}},
\]
where the Squeeze operation was as defined by \citet{dinh2016density}.
Like \citet{chen2019residual}, we used ActNorm layers \citep{kingma2018glow} before and after each residual block.

Due to computational constraints, the models we considered were smaller than those used by \citet{chen2019residual}.
In particular, our smaller ResFlow models used 128 hidden channels in their Conv-ResBlocks, 64 hidden channels in the linear layers of their FC-ResBlocks, and had $k = 4$.
Our larger ResFlow models used 256 hidden channels in their Conv-ResBlocks, 128 hidden channels in the linear layers of their FC-ResBlocks, and had $k = 6$.
In contrast, \citet{chen2019residual} used 512 hidden channels in their Conv-ResBlocks, 128 hidden channels in their FC-ResBlocks, and had $k = 16$.

As described for our tabular experiments, we used the same estimation scheme as \citet{chen2019residual}. Additionally:
\begin{itemize}
    \item We took $\kappa = 0.98$;
    \item We used the Neumann gradient series expression for the log Jacobian \citep[(8)]{chen2019residual} and computed gradients in the forward pass \citep[(9)]{chen2019residual} to reduce memory overhead;
    \item We used an adaptive rather than a fixed number of power iterations for spectral normalisation \citep{gouk2018regularisation}, with a tolerance of 0.001;
\end{itemize}

For the \modelacr-ResFlows, we augmented the smaller baseline ResFlow by treating each composition of $\text{ActNorm} \to \text{ResBlock}$, as well as the final $\text{ActNorm}$, as an instance of $\fwdmap$ in \eqref{eq:our-bijection-family}.
Each $\NN_\Fwdmap$, $\NN_p$, and $\NN_q$ was a ResNet \citep{he2016deep, he2016identity} consisting of $2$ residual blocks with $32$ hidden channels (denoted $2 \times 32$).
We gave each $\idxvar_\ell$ the same shape as a single channel of $\noisevar_\ell$, and upsampled to the dimension of $\noisevar_\ell$ by adding channels at the output of each $\NN_\Fwdmap$.
Note that we did not experiment with using the larger baseline ResFlow model as the basis for a \modelacr.

For all models we used a learning rate of $10^{-3}$ and a batch size of 64.

\autoref{fig:first_ResFlow_sample} through to \autoref{fig:last_ResFlow_sample} show samples synthesised from the ResFlow and \modelacr-ResFlow density models trained on MNIST and CIFAR-10.

\subsubsection{RealNVP}

For our RealNVP-based image experiments, we took the baseline to be a RealNVP with the same architecture used by  \citet{dinh2016density} for their CIFAR-10 experiments.
In particular, we used 10 affine coupling layers with the corresponding alternating channelwise and checkerboard masks.
Each coupling layer used a ResNet \citep{he2016deep, he2016identity} consisting of $8$ residual blocks of $64$ channels (denoted $8 \times 64$).
We replicated the multi-scale architecture of \citet{dinh2016density}, squeezing the channel dimension after the first 3 coupling layers, and splitting off half the dimensions after the first 6.
This model had 5.94M parameters for Fashion-MNIST and 6.01M parameters for CIFAR-10.

For the \modelacr-RealNVP, we considered each affine coupling layer to be an instance of $\fwdmap$ in \eqref{eq:our-bijection-family}.
When choosing the size of our networks, we sought to maintain roughly the same depth over which gradients were propagated as in the baseline.
To this end, our coupling networks were $4 \times 64$ ResNets, each $\NN_p$ and $\NN_q$ were $2 \times 64$ ResNets, and each $\NN_\Fwdmap$ was a $2 \times 8$ ResNet.
We gave each $\idxvar_\ell$ the same shape as a single channel of $\noisevar_\ell$, and upsampled to the dimension of $\noisevar_\ell$ by adding channels at the output of $\NN_\Fwdmap$.
Our model had 5.99M parameters for Fashion-MNIST and 6.07M parameters for CIFAR-10.

For completeness, we also trained a RealNVP model with coupler networks of size $4 \times 64$ to match our \modelacr-RealNVP configuration.
This model had 2.99M parameters for Fashion-MNIST and 3.05M for CIFAR-10.

In all cases for these experiments we used a learning rate of $10^{-4}$ and a batch size of 100.

\autoref{fig:first_RealNVP_sample} through to \autoref{fig:last_RealNVP_sample} show samples synthesised from the RealNVP and \modelacr-RealNVP density models trained on Fashion-MNIST and CIFAR-10.

\subsection{2-D Experiments} \label{sec:2d-experiments}

To gain intuition about our model, we ran experiments on some simple 2-D datasets.
For the datasets in \autoref{fig:iResNet-2uniforms-annulus}, we used a 10-layer ResFlow, a 100-layer ResFlow, and 10-layer \modelacr-ResFlow.
For the \modelacr-ResFlows we took $d_\idxspace = 1$.
Other architectural and training details were the same as for the tabular experiments described in \autoref{sec:uci-experiments} and \autoref{sec:residual-flows-experiments-supp}.
The resulting average test set log-likelihoods for the top dataset were:
\begin{itemize}
    \item -1.501 for the 10-layer ResFlow
    \item -1.419 for the 100-layer ResFlow
    \item -1.409 for the 10-layer \modelacr-ResFlow
\end{itemize}
The final average test set log likelihoods for the bottom dataset were:
\begin{itemize}
    \item -2.357 for the 10-layer ResFlow
    \item -2.287 for the 100-layer ResFlow
    \item -2.275 for the 10-layer \modelacr-ResFlow
\end{itemize}
Note that in both cases the \modelacr-ResFlow slightly outperformed the 100-layer ResFlow.

We additionally ran several experiments comparing a baseline MAF against a \modelacr-MAF on the 2-D datasets shown in \autoref{fig:misc_2d}.
The baseline MAFs had 20 autoregressive layers, while the \modelacr-MAFs had 5.
The network used at each layer had 4 hidden layers of 50 hidden units (denoted $4 \times 50$).
For the \modelacr-MAF, we took $d_\idxspace = 1$, and used $2 \times 10$ MLPs for $\NN_\Fwdmap$ and $4 \times 50$ MLPs for $\NN_p$ and $\NN_q$.
In total the baseline MAF had 160160 parameters, while our model had 119910 parameters.

The results of these experiments are shown in \autoref{fig:misc_2d}.
Observe that \modelacr-MAF consistently produces a more faithful representation of the target distribution than the baseline, and in all cases achieved higher average test set log probability.
A failure mode of our approach is exhibited in the spiral dataset, where our model still lacks the power to fully capture the topology of the target.
However, we did not find it difficult to improve on this: by increasing the size of $\NN_p$ to $8 \times 50$ (and keeping all other parameters fixed), we were able to obtain the result shown in \autoref{fig:2spirals_big}.
This model had a total of 221910 parameters.
We also tried a larger MAF model with autoregressive networks of size $8 \times 50$, (obtaining 364160 parameters total).
This model diverged after approximately 160 epochs.
The result after 150 epochs is shown in \autoref{fig:2spirals_big}.

\begin{figure}[h]
    \centering
    \includegraphics{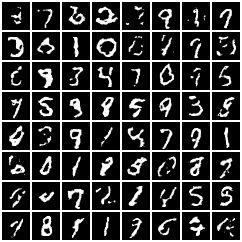}
    \caption{Synthetic MNIST samples generated by the small baseline ResFlow model}
    \label{fig:first_ResFlow_sample}
\end{figure}

\begin{figure}[h]
    \centering
    \includegraphics{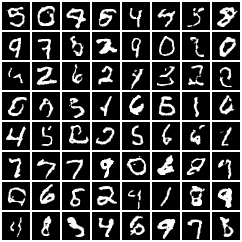}
    \caption{Synthetic MNIST samples generated by the large baseline ResFlow model}
\end{figure}

\begin{figure}[h]
    \centering
    \includegraphics{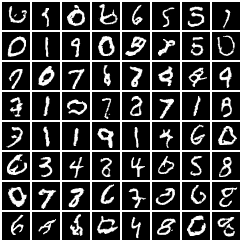}
    \caption{Synthetic MNIST samples generated by the \modelacr-ResFlow model}
\end{figure}

\begin{figure}[h]
    \centering
    \includegraphics{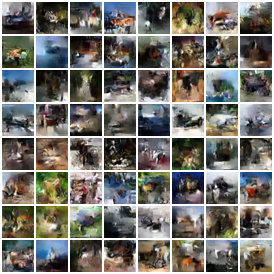}
    \caption{Synthetic CIFAR-10 samples generated by the small baseline ResFlow model}
\end{figure}

\begin{figure}[h]
    \centering
    \includegraphics{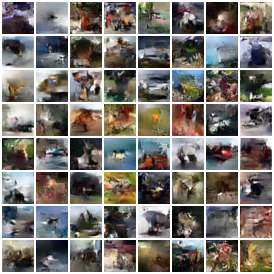}
    \caption{Synthetic CIFAR-10 samples generated by the large baseline ResFlow model}
\end{figure}

\begin{figure}[h]
    \centering
    \includegraphics{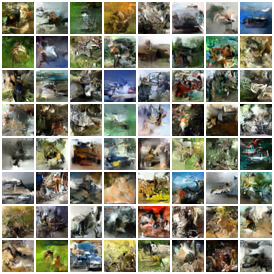}
    \caption{Synthetic CIFAR-10 samples generated by the \modelacr-ResFlow model}
    \label{fig:last_ResFlow_sample}
\end{figure}

\begin{figure}[h]
    \centering
    \includegraphics{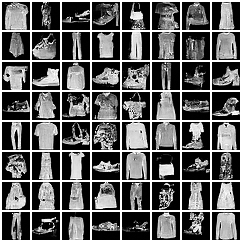}
    \caption{Synthetic Fashion-MNIST samples generated by RealNVP with coupling networks of size $4 \times 64$}
    \label{fig:first_RealNVP_sample}
\end{figure}

\begin{figure}[h]
    \centering
    \includegraphics{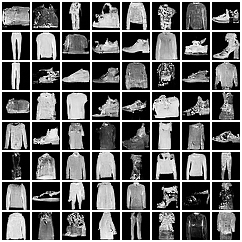}
    \caption{Synthetic Fashion-MNIST samples generated by RealNVP with coupling networks of size $8 \times 64$}
\end{figure}

\begin{figure}[h]
    \centering
    \includegraphics{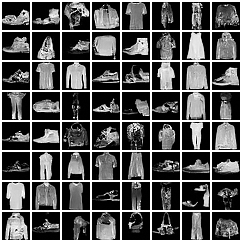}
    \caption{Synthetic Fashion-MNIST samples generated by \modelacr-RealNVP}
\end{figure}

\begin{figure}[h]
    \centering
    \includegraphics{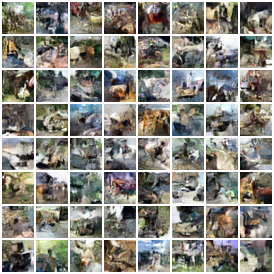}
    \caption{Synthetic CIFAR-10 samples generated by RealNVP with coupling networks of size $4 \times 64$}
\end{figure}

\begin{figure}[h]
    \centering
    \includegraphics{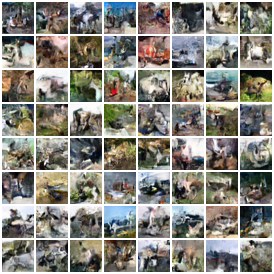}
    \caption{Synthetic CIFAR-10 samples generated by RealNVP with coupling networks of size $8 \times 64$}
\end{figure}

\begin{figure}[h]
    \centering
    \includegraphics{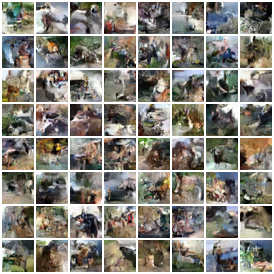}
    \caption{Synthetic CIFAR-10 samples generated by \modelacr-RealNVP}
    \label{fig:last_RealNVP_sample}
\end{figure}

\begin{figure}[h]
    \centering
    \begin{subfigure}{.5\textwidth}
        \centering
        \includegraphics[width=\textwidth]{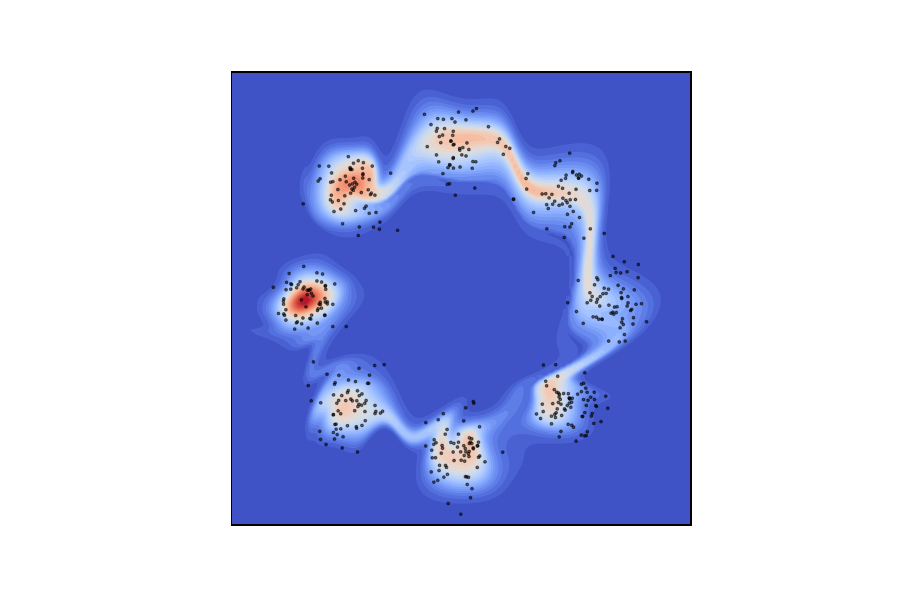}
    \end{subfigure}%
    \begin{subfigure}{.5\textwidth}
      \centering
        \includegraphics[width=\textwidth]{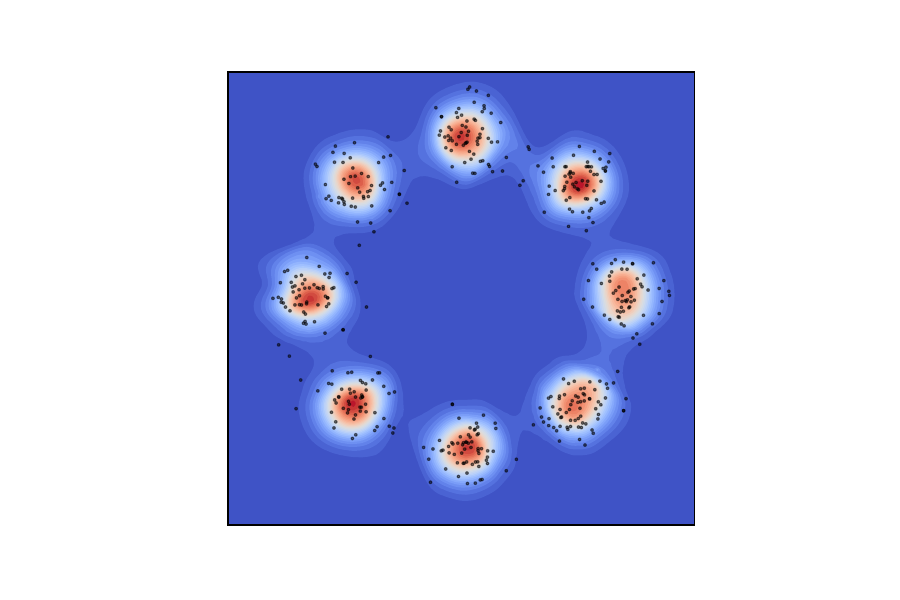}
    \end{subfigure}
    
    \begin{subfigure}{.5\textwidth}
        \centering
        \includegraphics[width=\textwidth]{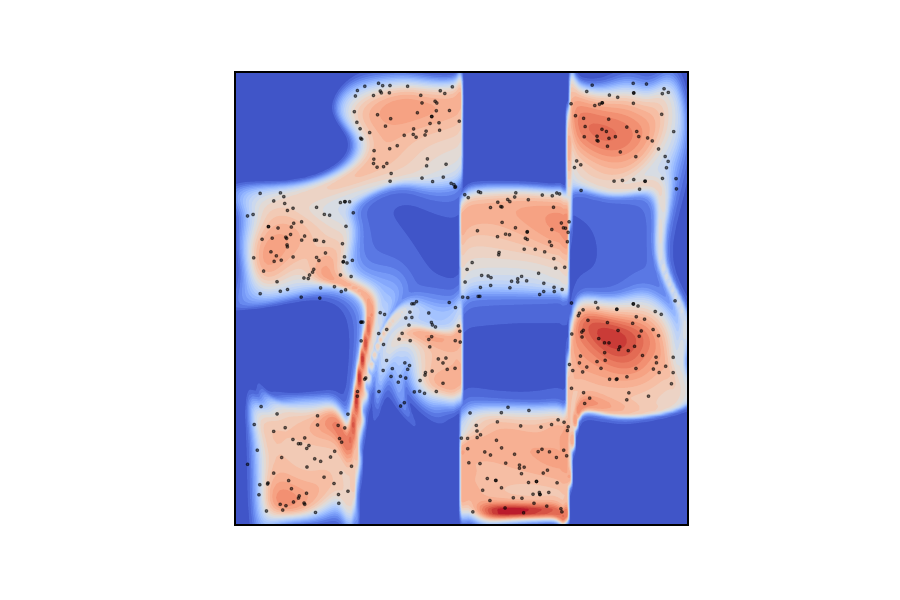}
    \end{subfigure}%
    \begin{subfigure}{.5\textwidth}
      \centering
        \includegraphics[width=\textwidth]{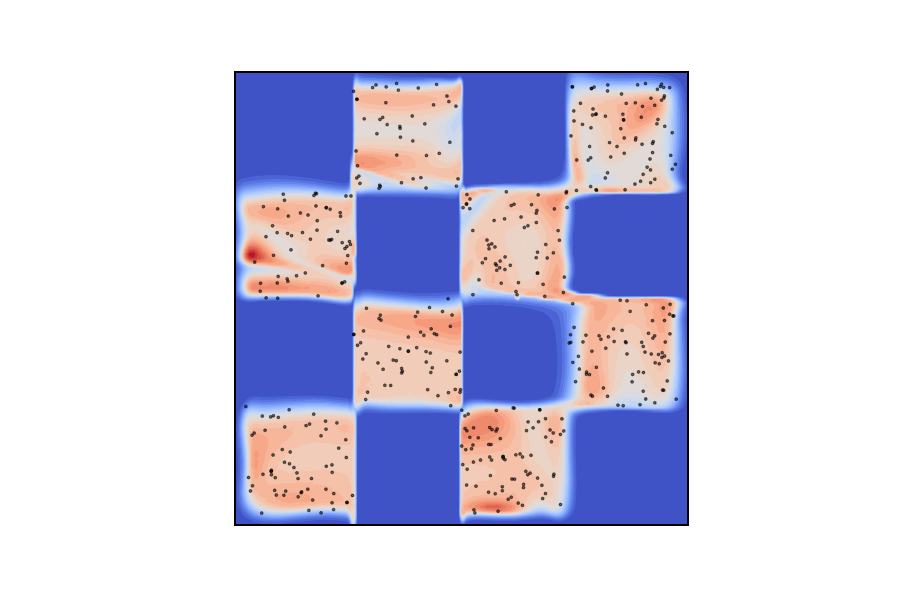}
    \end{subfigure}
    
    \begin{subfigure}{.5\textwidth}
        \centering
        \includegraphics[width=\textwidth]{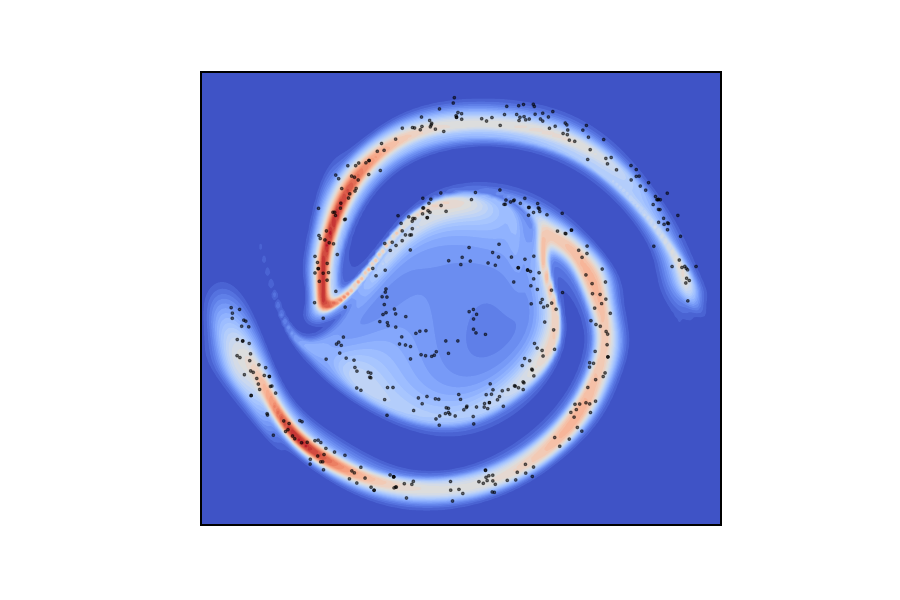}
    \end{subfigure}%
    \begin{subfigure}{.5\textwidth}
      \centering
        \includegraphics[width=\textwidth]{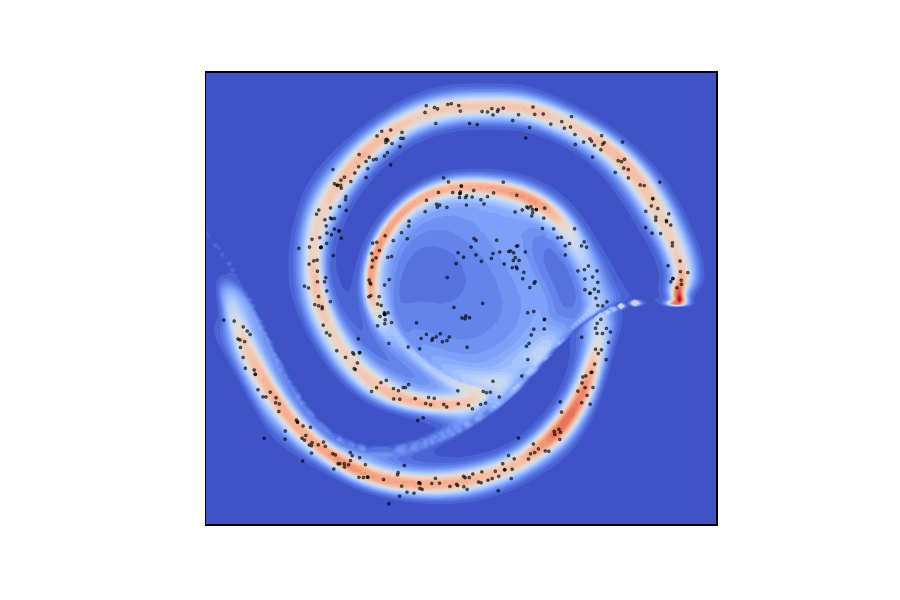}
    \end{subfigure}
    
    \caption{Density models learned by a standard 20 layer MAF (left) and by a 5 layer \modelacr-MAF (right) for a variety of 2-D target distributions.  Samples from the target are shown in black.}
    \label{fig:misc_2d}
\end{figure}

\begin{figure}[h]
    \centering
    \begin{subfigure}{.5\textwidth}
      \centering
        \includegraphics[width=\textwidth]{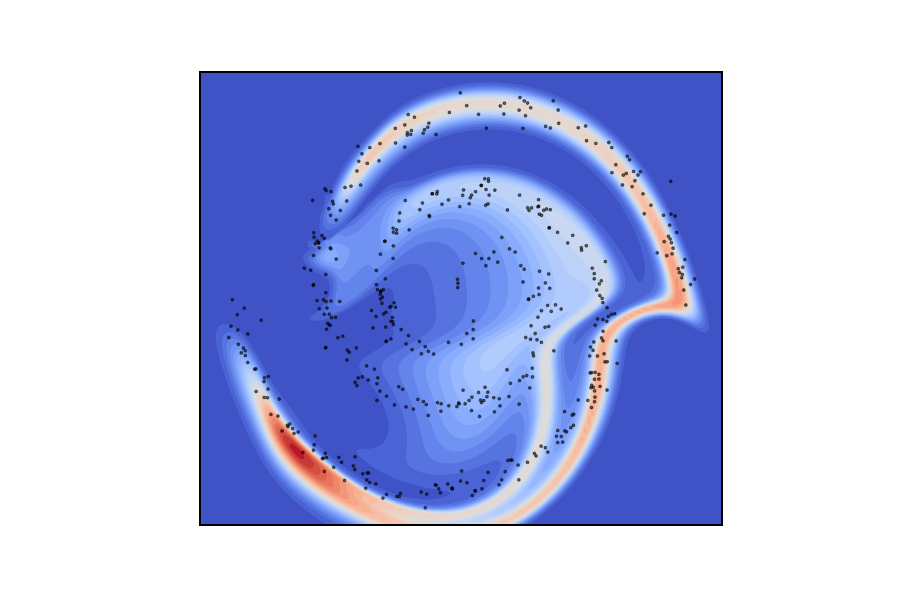}
    \end{subfigure}%
    \begin{subfigure}{.5\textwidth}
        \centering
        \includegraphics[width=\textwidth]{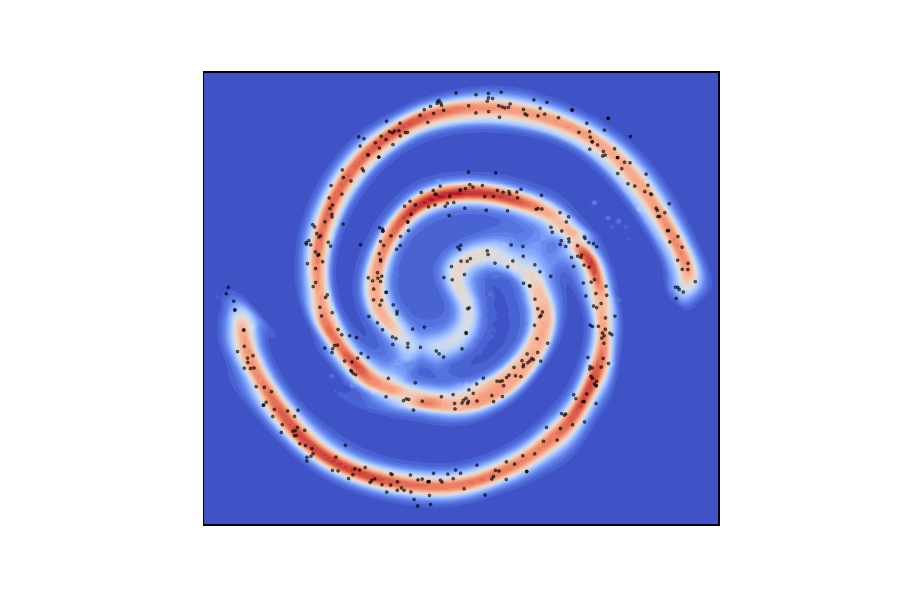}
    \end{subfigure}

    \caption{Density models learned by a larger 20 layer MAF (left) and a larger 5 layer \modelacr-MAF (right) for the spirals dataset.}
    \label{fig:2spirals_big}
\end{figure}

\end{document}